\definecolor{aliceblue}{rgb}{0.94, 0.97, 1.0}
\newcommand{\revise}[1]{{\color{black}#1}}
\Crefname{equation}{Eq.}{Eqs.}
\def\eqref#1{equation~\ref{#1}}
\def\1{\bm{1}}
\DeclareMathAlphabet{\mathsfit}{\encodingdefault}{\sfdefault}{m}{sl}
\SetMathAlphabet{\mathsfit}{bold}{\encodingdefault}{\sfdefault}{bx}{n}
\theoremstyle{plain}
\newtheorem{theorem}{Theorem}
\newtheorem{lemma}{Lemma}
\theoremstyle{definition}
\newtheorem{definition}{Definition}
\newtheorem{assumption}{Assumption}
\theoremstyle{remark}
\newcommand{\name}{\texttt{VIM}\xspace}
\NewDocumentCommand{\chulin}{ mO{} }{\textcolor{brown}{\textsuperscript{\textit{Chulin}}\textsf{\textbf{\small[#1]}}}}
\newcommand{\ouradmm}{\texttt{VIMADMM}\xspace}
\newcommand{\linearadmm}{\texttt{Linear-ADMM}\xspace}
\newcommand{\oursgd}{\texttt{Split Learning}\xspace}
\newcommand{\vafl}{\texttt{VAFL}\xspace}
\newcommand{\fdml}{\texttt{FDML}\xspace}
\newcommand{\fedbcd}{\texttt{FedBCD}\xspace}
\newcommand{\ouradmmjoint}{\texttt{VIMADMM-J}\xspace}
\newcommand{\vflpbm}{\texttt{VFL-PBM}\xspace}
\newcommand{\cmark}{\textcolor{green}{\checkmark}}%
\newcommand{\xmark}{\textcolor{red}{$\times$}}%
\newcommand{\alibi}{\texttt{ALIBI}\xspace}
\newcommand{\mnist}{MNIST\xspace}
\newcommand{\cifar}{CIFAR\xspace}
\newcommand{\nus}{NUS-WIDE\xspace}
\newcommand{\modelnet}{ModelNet40\xspace}
\newcommand{\add}[1]{{\textcolor{black}{#1}}}
\title{Improving Privacy-Preserving Vertical Federated Learning by Efficient Communication with ADMM}
    \newcommand{\linebreakand}{%
      \end{@IEEEauthorhalign}
      \hfill\mbox{}\par
      \mbox{}\hfill\begin{@IEEEauthorhalign}
    }
\author{\IEEEauthorblockN{Chulin Xie}
\IEEEauthorblockA{
\textit{University of Illinois Urbana-Champaign}
\\
chulinx2@illinois.edu}
\and
\IEEEauthorblockN{Pin-Yu Chen}
\IEEEauthorblockA{
\textit{IBM Research}\\
pin-yu.chen@ibm.com
}
\and
\IEEEauthorblockN{Qinbin Li}
\IEEEauthorblockA{ 
\textit{UC Berkeley}\\
qinbin@berkeley.edu}
\linebreakand %
\IEEEauthorblockN{Arash Nourian}
\IEEEauthorblockA{ 
\textit{Amazon Web Services \& UC Berkeley}\\
nouriara@amazon.com}
\and
\IEEEauthorblockN{Ce Zhang}
\IEEEauthorblockA{
\textit{University of Chicago}\\
cez@uchicago.edu}
\and
\IEEEauthorblockN{Bo Li}
\IEEEauthorblockA{
\textit{University of Chicago \& UIUC}\\
lbo@illinois.edu}
}
\begin{document}
\abovedisplayskip=1pt
\abovedisplayshortskip=1pt
\belowdisplayskip=1pt
\belowdisplayshortskip=1pt

\maketitle
\thispagestyle{plain}
\pagestyle{plain}

\begin{abstract}

Federated learning (FL) enables distributed resource-constrained devices to jointly train shared models while keeping the training data local for privacy purposes. 
Vertical FL (VFL), which allows each client to collect partial features, has attracted intensive research efforts recently. 
We identified the main challenges that existing VFL frameworks are facing: the server needs to communicate \textit{gradients} with the clients for \textit{each} training step, incurring high communication cost that leads to  rapid consumption of privacy budgets. To address these challenges, in this paper, we introduce a VFL framework with multiple heads (\name), which takes the separate contribution of each client into account, and enables an efficient decomposition of the VFL optimization objective to sub-objectives that can be iteratively tackled by the server and the clients \textit{on their own}.
In particular, we propose an Alternating Direction Method of Multipliers (ADMM)-based method to solve our optimization problem, \add{which allows clients to conduct \textit{multiple local updates} before communication, and thus reduces the communication cost and leads to better performance under differential privacy (DP).}
We provide the \revise{client-level} DP mechanism for our framework to protect user privacy.
Moreover, we show that a byproduct of \name is that the weights of learned heads reflect the importance of local clients. We conduct extensive evaluations and show that on four vertical FL datasets, \name achieves significantly higher performance and faster convergence compared with the state-of-the-art. We also explicitly evaluate the importance of local clients and show that \name enables functionalities such as client-level explanation and client denoising. We hope this work will shed light on a new way of effective VFL training and understanding. \footnote{
Our code is available at: \href{https://github.com/AI-secure/VFL-ADMM}{https://github.com/AI-secure/VFL-ADMM} }

\end{abstract}

\section{Introduction}
Federated learning (FL) has enabled large-scale training with data privacy guarantees on distributed data for different applications~\cite{yang2019ffd, brisimi2018federated,hard2018federated,yang2018applied,yang2019federated}. In general, FL can be categorized into Horizontal FL (HFL)~\cite{mcmahan2016communication} where data samples are distributed across \revise{clients}, and Vertical FL (VFL)~\cite{yang2019federated} where features of the samples are  partitioned across \revise{clients} and the labels are usually owned by the server (or the active party in two-party setting~\cite{hardy2017private}).
In particular, VFL allows \revise{clients} with partial information of the same dataset to jointly train the model, which leads to many real-world applications~\cite{hu2019fdml,yang2019federated,hard2018federated}. For instance, a patient may go to different types of \revise{healthcare providers}, such as dental clinics and pharmacies for different purposes, and therefore it is important for different  \revise{healthcare providers 
 (i.e., VFL clients/data owners/organizations)} to ``share" their information about the same patient \revise{(i.e., partial features of the same sample)} to better model the health condition of the patient. In addition, nowadays multimodal data has been ubiquitous, while usually, each \revise{client} is only able to collect one or a few data modalities due to resource limitations. Therefore, VFL provides an effective way to allow such \revise{clients} to train a model leveraging information from different data modalities jointly.

\add{Despite the importance and practicality of VFL, the state-of-the-art (SOTA) VFL frameworks suffer from notable weaknesses:
since the clients own the local features and the server holds the whole labels, 
the server needs to calculate training loss based on the labels and then send \textit{gradients} to clients for \textit{each} training step to update their local models~\cite{vepakomma2018split,chen2020vafl,kang2020fedmvt}, which incurs high communication cost and leads to potential rapid consumption of the privacy budget.}

\begin{table*}[h]
\centering
\caption{\small Comparison between our work and existing VFL studies.}
\vspace{-2mm}
\label{tb:compare_related_work}
\resizebox{0.92\linewidth}{!}{%
\begin{tabular}{lccccccccccc} \midrule
VFL Setup & Method & Support DNN & Support $N>2$ parties &Labels only held by one party   & Support multiple local updates &  Privacy guarantee \\ \midrule
\multirow{6}{*}{w/ model splitting} &\vafl~\cite{chen2020vafl}, \revise{\vflpbm~\cite{tran2023privacy}} &   \cmark &  \cmark & \cmark & \xmark & \cmark \\
& \oursgd{}~\cite{vepakomma2018split} &  \cmark &  \cmark & \cmark  &  \xmark  &  \xmark \\
& \texttt{FedBCD}~\cite{liu2022fedbcd}  & \cmark & \cmark & \cmark & \cmark &  \xmark \\
& \texttt{CELU-VFL} & \cmark& \xmark & \cmark & \cmark & \xmark \\
& \texttt{Flex-VFL}~\cite{castiglia2023flexible} & \cmark&  \cmark & \xmark &  \cmark & \xmark \\
\rowcolor{aliceblue} & {\ouradmm (Ours)} & \cmark & \cmark & \cmark & \cmark  & \cmark \\  \midrule
\multirow{4}{*}{w/o model splitting} & Fu et al.~\cite{fu2022usenix}, \fdml~\cite{hu2019fdml} & \cmark & \cmark & \xmark  &  \xmark &  \xmark{} \\
&\texttt{AdaVFL}~\cite{zhang2022adaptive}, \texttt{CAFE}~\cite{jin2021catastrophic} & \cmark& \cmark & \xmark & \cmark & \xmark  \\ 
&\linearadmm~\cite{hu2019learning} & \xmark& \cmark& \cmark & \cmark & \cmark \\ 
\rowcolor{aliceblue} &  {\ouradmmjoint (Ours)} &  \cmark & \cmark   & \cmark  & \cmark & \cmark \\
\bottomrule
\end{tabular}
}
\vspace{-4mm}
\end{table*}

To solve the above challenges, in this work, we propose an efficient VFL optimization framework with multiple heads (\name), where each head corresponds to one local client. \add{\name takes the individual contribution of clients into consideration and facilitates a thorough decomposition of the VFL optimization problem into multiple subproblems that can be iteratively solved by the server and the clients.
In particular, we propose an Alternating Direction Method of Multipliers (ADMM)~\cite{boyd2011distributed}-based method  that splits the overall \name optimization objective into smaller sub-objectives, and the clients can conduct multiple local updates w.r.t their local objectives at each communication round with the coordination of ADMM-related variables.   
This leads to faster model convergence and significantly reduces the communication cost, which is crucial to preserve privacy because the privacy cost of clients increases when the number of communication rounds increases~\cite{abadi2016deep,brendan2018learning}, due to the continuous transmission of sensitive local information. }
We consider two typical VFL settings: \textit{with model splitting} (i.e., clients host partial models) and \textit{without model splitting} (i.e., clients hold the entire model). 
Under with model splitting setting, we propose an ADMM-based algorithm \ouradmm{} under \name framework. Compared to gradient-based methods, \ouradmm{} not only reduces communication frequency but also reduces the dimensionality by only exchanging ADMM-related variables. We provide convergence analysis for \ouradmm and prove that it can converge to stationary points with mild assumptions.
With modifications of communication strategies and updating rules for servers and clients, we extend \ouradmm{} to the without model splitting setting and introduce \ouradmmjoint{}.
Under both settings, to further protect the privacy of the local features held by clients, we introduce privacy mechanisms that clip and perturb local outputs to satisfy \textit{client-level} differential privacy (DP)~\cite{dwork2006our, dwork2011firm, dwork2014algorithmic,mcmahan2016communication} and prove the DP guarantees.
Moreover, we offer a basic solution to separately protect the privacy of labels owned by server, leveraging the established label-DP mechanism \alibi ~\cite{malek2021antipodes} that perturbs the labels.
Finally, we show that a byproduct of \name is that the weights of learned heads reflect the importance of local clients, which enables functionalities such as client-level explanation, client denoising, and client summarization.  Our main \underline{contributions} are: 
\begin{itemize}[noitemsep,leftmargin=*]
\item We propose an efficient and effective VFL optimization framework with multiple heads (\name). To solve our optimization problem, we propose an ADMM-based method, \ouradmm, which reduces communication costs by allowing multiple local updates at each step. 
\item  We theoretically analyze the convergence of \ouradmm and prove that it can converge to stationary points.
\item We introduce the \revise{client-level} DP mechanism for our \name framework and prove its privacy guarantees. 
\item We conduct extensive experiments on four diverse datasets (i.e., \mnist, \cifar, \nus, and \modelnet), and show that ADMM-based algorithms under \name converge faster, achieve higher accuracy, and remain higher utility under \revise{client-level} DP and label DP than four existing VFL frameworks. 
\item We evaluate our client-level explanation under \name based on the weights norm of the heads, and demonstrate the functionalities it enables such as clients denoising and summarization. 
\end{itemize}

\vspace{-3mm}
\section{Related Work}
\label{sec:related_work}
\paragraph{Vertical Federated Learning}
VFL has been well studied for simple models including trees~\cite{cheng2021secureboost,wu2020privacy}, kernel models~\cite{gu2020federated}, and linear and logistic regression~\cite{hardy2017private,yang2019parallel,zhang2021secure, feng2020multi, hu2019learning, liu2019communication}. 
For instance, Hardy et al. \cite{hardy2017private} propose secure logistic regression for two-party VFL with homomorphic encryption~\cite{rouhani2018deepsecure, gilad2016cryptonets} and multiparty computation~\cite{ben1988completeness,bonawitz2017practical}.
However, a limitation of these methods is the performance constraint associated with the logistic regression. Subsequent research has expanded the scope of VFL to encompass Deep Neural Networks (DNNs), facilitating VFL training with a larger number of clients and on large-scale models and datasets.
For DNNs, there are two popular VFL settings: with model splitting~\cite{vepakomma2018split,kang2020fedmvt,chen2020vafl} and without model splitting~\cite{hu2019fdml,jin2021catastrophic}. 

In the with model splitting setting, \oursgd~\cite{vepakomma2018split} is the first related paradigm,
where each client trains a partial network up to a cut layer, the server concatenates local activations and trains the rest of the network.
\vafl~\cite{chen2020vafl} is proposed for asynchronous VFL where the server averages the local embeddings and sends gradients back to clients to update local models. However, such embedding averaging might lose the unique properties of each client. FedMVT~\cite{kang2020fedmvt} focuses on the semi-supervised VFL with multi-view learning.  C-VFL~\cite{castiglia2022compressed} proposes embedding compression techniques to improve communication efficiency. 
However, we note that these methods ~\cite{vepakomma2018split,chen2020vafl,kang2020fedmvt,castiglia2022compressed} still require the communication of gradients (w.r.t embeddings) from server to the client at \textit{each} training step, leading to high communication frequency and communication cost before convergence.  
Recent research efforts have sought to reduce VFL communication frequency by allowing clients to make multiple local updates at each round. Particularly, in  \fedbcd~\cite{liu2022fedbcd},
after obtaining gradients from the server, clients update local models using the same stale gradients for multiple steps. Building upon this, \texttt{CELU-VFL}~\cite{celuvfl2022} enhances the performance of \fedbcd  by caching stale gradients from earlier rounds and reusing them to estimate better model gradients at current round.  
Nonetheless, it is limited to supporting only two clients (party A and B, with B holding the labels) and cannot be directly extended to scenarios with more than two parties, as our study considers (specifically, it lacks a design to aggregate information from more parties).
On another note, \texttt{Flex-VFL}~\cite{castiglia2023flexible} allows each party to undergo a different number of local updates constrained by a set timeout for every round. Yet, it assumes that clients possess copies of labels and receive local embeddings from other clients, enabling them to compute local gradients independently for multi-step local updates. In contrast, we propose an ADMM-based framework that enables multiple local updates and assumes that only the server possesses labels,  which cannot be shared with other clients due to privacy restriction~\cite{fu2022usenix}.

For VFL without model splitting setting, each client submits local logits to the server, who then averages over the logits and send gradients w.r.t logits back to clients, as detailed in Fu et al.~\cite{fu2022usenix}. 
Several other approaches assume that the server shares both labels and aggregated logits with the clients, enabling them to locally compute the gradient~\cite{hu2019fdml,zhang2022adaptive}.  \fdml~\cite{hu2019fdml} performs one step of local update at each round for asynchronous and distributed SGD.  
Considering that certain clients might have slower local computation speeds, 
\texttt{AdaVFL}~\cite{zhang2022adaptive}  optimizes the number of local updates  for each client at each round to minimize overall time. 
Meanwhile, \texttt{CAFE}~\cite{jin2021catastrophic} directly applies FedAvg~\cite{mcmahan2016communication} from  Horizonta FL to VFL where all clients possess the labels  and  can exchange the model parameters with others for model aggregation. This deviates from the standard VFL setup where only the server retains the label and local models cannot be shared owing to privacy implications~\cite{fu2022usenix}.

\paragraph{Differentially Private VFL} 
In existing VFL frameworks, \vafl~\cite{chen2020vafl} provides Gaussian DP guarantee~\cite{dong2019gaussian} \revise{and \vflpbm~\cite{tran2023privacy} quantizes local embeddings into DP integer vectors. However, they do not calculate the exact privacy budget in the evaluation.}
\fdml~\cite{hu2019fdml} evaluate their framework under different levels of empirical noises, yet without offering detailed DP mechanisms or DP guarantee. 
The ADMM-based linear VFL framework (abbreviated to \linearadmm)~\cite{hu2019learning} provides $(\epsilon,\delta)$-DP guarantee for linear models by calculating the closed-form sensitivity of each sample and perturbing the linear model parameters,  which is not directly applicable to DNNs whose sensitivity is hard to estimate due to the nonconvexity. 
Instead, we propose to perturb local outputs and provide formal \revise{client-level} $(\epsilon,\delta)$-DP theoretical guarantee in \cref{sec:dpvim}. 

We provide an overall comparison between our work and existing studies in \cref{tb:compare_related_work}.

\vspace{-2mm}
\section{VFL with Multiple Heads (\name) }\label{sec:vimmethod}
\vspace{-1mm}
In this section, we start with the VFL background in \cref{sec:probformulation}, and then discuss VFL with model splitting setting and
introduce our framework \name and  ADMM-based method \ouradmm in \cref{sec:vfl-with-model-split}. Finally, we show that our ADMM-based method can be easily extended to  VFL without model splitting setting with slight modifications on communication strategies and update rules, yielding  \ouradmmjoint{} in \cref{sec:vfl-without-model-split}.

\vspace{-1mm}
\subsection{VFL Background}\label{sec:probformulation}
\vspace{-1mm}
Typically in VFL, there are $M$ clients who hold \textit{different feature sets of the same training samples} and jointly train the machine learning models.  We consider the classification task and denote $d_c$  as the number of classes. Suppose there is a training dataset $ D =\{ x_j, y_j \}_{j=1}^{N}$ containing $N$ samples, the server owns the labels $\{y_j\}_{j=1}^{N}$, and each client $k$ has a local feature set $X_k = \{ x_j^k\}_{j=1}^{N}$, where the vector $x_j^k \in \mathbb{R}^{ d^k}$ denotes the local (partial) features of sample $j$. The overall feature $x_j \in \mathbb{R}^{ d} $ of sample $j$ is the concatenation of all local features $\{ x_j^1,  x_j^2, \dots,  x_j^M \}$, with $d = \sum_{k=1}^{M}d^k $. 

Due to the privacy protection requirement of VFL, each client $k$ does not share raw local feature set $X_k$ with other clients or the server. Instead, VFL consists of two steps: (1) {\em local processing step}: each client learns a local model that maps the local features to local outputs and sends them to the server. 
(2) {\em server aggregation step}: the server aggregates the local outputs from all clients to compute the final prediction for each sample as well as the corresponding losses. Depending on whether or not the server holds a model, there are two popular VFL settings~\cite{fu2022usenix}: VFL \emph{with model splitting}~\cite{chen2020vafl,vepakomma2018split} and VFL \emph{without model splitting}~\cite{hu2019fdml}:
(i) In the model splitting setting,  each client trains a feature extractor as the local model that outputs \textit{local embeddings}, and the server owns a model which predicts the final results based on the aggregated embeddings. 
(ii)  In the VFL without model splitting setting, the clients host the entire model that outputs the \textit{local logits}, and the server simply performs the logits aggregation operation without hosting any model.  

In both settings, the local model is updated based on SGD with federated backward propagation~\cite{fu2022usenix}: a) server first computes the gradients w.r.t the local output (either embeddings or logits) from each client separately and sends the gradients back to clients; b) each client calculates the gradients of local output w.r.t the local model parameters and updates the local model using the chain rule.

\vspace{-1mm}
\subsection{VFL with Model Splitting}\label{sec:vfl-with-model-split}
\vspace{-1mm}

\paragraph{Setup} 
Let $f$ parameterized by $\theta_k$ be the local model (i.e., feature extractor) of client $k$, which outputs a local embedding vector $h_j^k = f(x_j^k;\theta_k)$ $\in \mathbb{R}^{ d_f} $ for each local feature $x_j^k$. 
We denote the parameters of the model on the server-side as $\theta_0$. Overall, the clients and the server aim to collaboratively solve the  Empirical Risk Minimization (ERM) objective:
\begin{align}\label{eq:ms-obj}
\medmath{\underset{\{\theta_k\} , \theta_0 }{\operatorname{min}} \frac{1}{N} \sum_{j=1}^N  \ell ( \{ h_j^1, \ldots, h_j^M \}, y_j; \theta_0) +  \sum_{k=1}^M \beta_k  \mathcal{R}(\theta_k)  + \beta  \mathcal{R}(\theta_0)} 
\end{align} 
where $\ell$ is  a loss function  (e.g., cross-entropy loss with softmax), $\mathcal{R}$ is a regularizer on model parameters, and $\beta_k \in \mathbb{R}$ is the regularization weight
for client $k$, and $\beta$ is the weight for server. 
The local embeddings for each sample $j$ can be either concatenated together $h_j=[ h_j^1, \ldots, h_j^M]$ as in \oursgd~\cite{vepakomma2018split} or averaged $h_j=\sum_{k=1}^M  \alpha_k h_j^k$ with aggregation weights $\alpha_k \in \mathbb{R}$ as in \vafl~\cite{chen2020vafl}. Then $h_j$ is used as the input for server model $\theta_0$ to calculate the loss.  
For more detailed description of the training algorithm \oursgd under VFL with
model splitting, please refer to \cref{algo:splitlearn} in \cref{app:algo_splitlearn}.

However, as outlined in \cref{sec:probformulation}, these VFL methods are based on SGD and depend on the server model $\theta_0$ to complete the loss and gradient calculation using server labels for updating local models $\{\theta_k\}$. Consequently, the server needs to send the gradient w.r.t embeddings back to clients at \emph{every} training step of the local models.  Such (1) frequent communication and (2) high dimensionality of gradients (i.e., $b d_f $ for $b$ samples) lead to high communication costs.

\paragraph{VIM Formulation}

To address these challenges, we propose the \name framework where the server learns a model with multiple heads corresponding to multiple local clients. It takes the separate contribution of each client into account and facilitates the breakdown of the VFL optimization into several sub-problems to be solved by clients and the server independently via ADMM without communicating gradients, as we will elaborate on later.
Specifically, the server's model $\theta_0$ consists of $M$ heads $W_1,W_2, \dots, W_M$ where $W_k \in  \mathbb{R}^{ d_f \times d_c }, k\in [M] $.
For the sake of simplicity, we consider each $W_k$ to be a linear head here, and our formulation can be easily extended to the non-linear heads by viewing each $W_k$ as a non-linear model (see the end of \cref{sec:vfl-with-model-split} for more details). 
This is motivated by the recent studies in representation learning, which have shown that learning a linear classifier is sufficient to accurately predicting the labels on top of embedding representations~\cite{radford2021learning,khosla2020supervised}, given the expressive power of the local feature extractor that captures essential information from raw feature sets.
For sample $j$, the server's model outputs $\hat{y}_j = \sum_{k=1}^M h_{j}^k  W_k $ as the prediction,  yielding our \name objective:

\begin{align}\label{eq:multihead-obj}
 \underset{\{W_k\},\{\theta_k\}}{\operatorname{min}}
  \mathcal{L}_{\mathrm{\name}} & ( \{W_k\},\{\theta_k\})  :=  \frac{1}{N}\sum_{j=1}^N   \ell\left( \sum_{k=1}^M f(x_j^k;\theta_k) W_k , y_j\right)  \nonumber \\ 
 &+  \sum_{k=1}^M \beta_k   \mathcal{R}_{k}(\theta_k) +  \sum_{k=1}^M \beta_k  \mathcal{R}_{k}(W_k) 
\end{align}

\paragraph{VIMADMM} \add{Based on the \name formulation, we propose an ADMM-based method, reducing the communication frequency by allowing the clients to perform multiple local updates w.r.t their  ADMM objectives at each round}, and reducing the dimensionality by only exchanging ADMM-related variables (i.e., $(2 b  + d_f)d_c$ for $b$ samples  where $d_c  \ll d_f,  b$ for most VFL settings today~\cite{chen2020vafl,hu2019fdml}).
Specifically, we note that Eq.~\ref{eq:multihead-obj} can be viewed as the \textit{sharing problem} \cite{boyd2011distributed} involving each client adjusting its variable
to minimize the \textit{shared} cost term $\ell( \sum_{k=1}^M h_{j}^k  W_k , y_j)$ as well as  its \textit{individual} cost $\mathcal{R}(\theta_k) +\mathcal{R}(W_k)$.
Moreover, the multiple heads in \name enable the application of ADMM via a special decomposition into simpler sub-problems that can be solved in a distributed manner. We begin by rewriting Eq.~\ref{eq:multihead-obj} to an 
equivalent constrained optimization problem by introducing auxiliary variables $z_1,z_2, \dots , z_N \in  \mathbb{R}^{   d_c } $:
\begin{align}\label{eq:admm-obj}
 &\underset{\{W_k\},\{\theta_k\},\{z_j\} }{\operatorname{min}}
   \frac{1}{N}  \sum_{j=1}^N   \ell (z_j,y_j)  + \sum_{k=1}^M \beta_k   \mathcal{R}_{k}(\theta_k) +  \sum_{k=1}^M \beta_k  \mathcal{R}_{k}(W_k) \nonumber \\  
 & \quad \text { s.t. }  \sum_{k=1}^M f(x_j^k;\theta_k) W_{k} - z_j =0 , \forall j \in [N].
\end{align}
Notably, 
each constraint implies a consensus between the server's output $ \sum_{k=1}^M h_{j}^k  W_{k}$ and the auxiliary variable $z_j$ for each sample $j$.
The augmented Lagrangian, which adds a quadratic term to the Lagrangian of Eq.~\ref{eq:admm-obj}, is given by:
\begin{align}\label{eq:admm-lagrang}
& \underset{ \{W_k\}, \{\theta_k\},\{z_j\},\{\lambda_j\} }{\operatorname{min}} \mathcal{L}_{\mathrm{ADMM}} ( \{W_k\}, \{\theta_k\},\{z_j\},\{\lambda_j\} )\nonumber  \\
& \quad\quad :=  \frac{1}{N} \sum_{j=1}^N  \ell (z_j,y_j) + \sum_{k=1}^M \beta_k   \mathcal{R}_{k}(\theta_k) +  \sum_{k=1}^M \beta_k  \mathcal{R}_{k}(W_k) \nonumber  \\   
 & \quad\quad + \frac{1}{N} \sum_{j=1}^N  \lambda_{j}^\top  (   \sum_{k=1}^M f(x_j^k;\theta_k)  W_{k} - z_j ) \nonumber  \\
  & \quad\quad + \frac{\rho}{2N}  \sum_{j=1}^N \left\|  \sum_{k=1}^M f(x_j^k;\theta_k)  W_{k} - z_j\right\|_F^2,  
\end{align}
where $\lambda_{j}  \in \mathbb{R}^{ d_c }$ is the dual variable for sample $j$, and $\rho  \in \mathbb{R}^+$ is a constant penalty factor. Recall that  $\hat{y}_j= \sum_{k=1}^M f(x^k_j, \theta_k) W_k$ is the  server output (i.e., prediction) for sample $x_j$.   \ouradmm essentially aims to minimize the loss between $z_j$ and ground-truth label $y_j$, as well as the difference between $z_j$ and $\hat{y}_j$ during training. Specifically, as shown in the ADMM loss (Eq.~\ref{eq:admm-lagrang}),  $l(z_j, y_j)$ is the loss between $z_j$ and $y_j$, while $\hat{y}_j - z_j  = \sum_{k=1}^M f(x^k_j, \theta_k) W_k - z_j$ appears in the linear constraint and quadratic constraint terms. The auxiliary variables $\{z_j\}$ and dual variables $\{\lambda_{j}\}$ are used to facilitate the training of server heads $\{W_k\}$ and local models $ \{\theta_k\}$.

To solve Eq.~\ref{eq:admm-lagrang}, we follow standard ADMM~\cite{boyd2011distributed} and update the primal variables $\{W_k\}$ , $\{\theta_k\}$, $\{z_j\}$ and the dual variables $\{\lambda_j\}$  \emph{alternatively},
which decomposes the problem in Eq.~\ref{eq:admm-obj} into four sets of sub-problems over $\{W_k\}$, $\{\theta_k\}$, $\{z_j\}$, $\{\lambda_j\}$, and the parameters in each sub-problem can be solved \textit{in parallel}. 
In practice, we propose the following strategy for the alternative updating in the server and clients:
(i) updating $\{z_j\}$, $\{\lambda_j\}$ and $\{W_k\}$ at server-side, (ii) updating $\{\theta_k\}$ at the client-side in parallel. Moreover, we consider the realistic setting of stochastic ADMM with mini-batches.  
Concretely, at communication round $t$, 
the server samples a set of data indices, $B(t)$, with batch size $|B(t)|= b $. 
Then we describe the key steps of \ouradmm as follows:

\textbf{(1) \emph{Communication from client to server.}} Each client $k$ sends a batch of embeddings $\{{ h_j^k}^{(t)}\}_{j \in B(t)}$ to the server, where ${ h_j^k}^{(t)}=f(x_j^k; \theta_k^{(t)}) $ $, \forall j \in B(t)$. 

\textbf{(2) \emph{Sever updates auxiliary variables $\{z_j\}$}.}  After receiving the local embeddings from all clients, the server updates the auxiliary variable for each sample $j \in B(t)$  as:
\begin{equation}\label{eq:update_z}
z_{j}^{(t)} =  \underset{z_j}{\operatorname{argmin}}  \quad \ell (z_j,y_j) -     {  \lambda_{j}^{(t-1)}}^\top   z_{j}  +\frac{\rho}{2}\left\| \sum_{k=1}^M  {h_{j}^k}^{(t)}  W_k^{(t)}- z_j \right\|_{F}^{2}.
\end{equation}
Since the optimization problem in Eq.~\ref{eq:update_z} is convex and differentiable with respect to $z_j$, we use the L-BFGS-B algorithm~\cite{zhu1997algorithm} to solve the minimization problem. 

\textbf{(3) \emph{Sever updates dual variables $\{\lambda_j\}$}.} The server updates dual variable  for each sample $j  \in B(t)$:
\begin{equation}\label{eq:update_lambda}
\lambda^{(t)}_{j} =  \lambda^{(t-1)}_{j}+\rho\left(\sum_{k=1}^M {h_{j}^k}^{(t)} W_k^{(t)} -z^{(t)}_{j}\right).
\end{equation}

\textbf{(4) \emph{Sever updates the heads $\{W_k\}$}.} Each head  $W_k, \forall k\in [M] $ of the server is then updated:  
\begin{align} \label{eq:update_W}
&W_{k}^{(t +1)}= \underset{W_k}{\operatorname{argmin}}  \quad \beta_k   \mathcal{R}_k(W_k) +   \frac{1}{b} \sum_{ j \in B(t)} {\lambda_{j}^{(t)}}^\top {h_{j}^k}^{(t)}  W_{k}   \nonumber\\ 
&  + \sum\limits_{ j \in B(t)} \frac{\rho}{2b} \left\|  \sum\limits_{i\in [M] , i \neq k}  {h_{j}^i}^{(t)}  {W_{i}}^{(t)}  +   {h_{j}^k}^{(t)}  W_{k} - {z_j}^{(t)}\right\|_F^2. 
\end{align}
For squared $\ell_2$ regularizer $\mathcal{R}$, we can solve $W_{k}^{(t +1)}$ in an inexact way to save the computation by \textit{one} step of SGD with the objective of Eq.~\ref{eq:update_W}.

\textbf{(5) \emph{Communication from server to client.} }After the updates in Eq.~\ref{eq:update_W}, we define a residual variable ${s_j^{k}}^{(t+1)} $ for each sample $ j \in B(t)$ of $k$-th client, which provides supervision for updating local model:
\begin{equation}\label{eq:def_residual}
{s_j^{k}}^{(t)} \triangleq {z_j}^{(t)} - \sum_{i\in [M] , i \neq k}  {h_{j}^i}^{(t)} {W_{i}}^{(t+1)}
\end{equation} 
The server sends the dual variables $  \{ \lambda^{(t)}_{j} \}_{j \in B(t)}$ and the residual variables $ \{  {s_j^{k}}^{(t)}\}_{j \in B(t)} $ of all samples, as well as the \textit{corresponding} head $ W_{k}^{(t+1)}$ to each client $k$.

\textbf{(6) \emph{Client updates local model parameters $\theta_k$}.} 
Finally, every client $k$  locally updates the model  parameters  $\theta_k$ as follows:
\begin{align}\label{eq:update_theta}
\theta_{k}^{(t +1)}= & \underset{\theta_{k}}{\operatorname{argmin}}  \quad \beta_k  \mathcal{R}_k(\theta_k) + \frac{1}{b}  \sum_{ j \in B(t)}    {\lambda_{j}^{(t)}}^\top    f(x_j^k; \theta_k )  { W_{k}^{(t+1)}  } 
\nonumber \\
& + \frac{\rho}{2b}  \sum_{ j \in B(t)}   \left\|   {s_j^{k}}^{(t)} -   f(x_j^k; \theta_k ) { W_{k}^{(t+1)} }   \right\|_F^2.
\end{align}
Due to the nonconvexity of the loss function of DNNs, 
we use $\tau$ local steps of SGD 
to update the local model at each round with the objective of Eq.~\ref{eq:update_theta}. 
We note that multiple local updates of Eq.~\ref{eq:update_theta} enabled by ADMM lead to better local models at each communication round compared to gradient-based methods, thus \ouradmm{} requires fewer communication rounds to converge as we will show in \cref{sec:evaluation}. 
These six steps of \ouradmm are summarized in Algorithm~\ref{algo:vimadmm}.

Note that ADMM auxiliary variables $\{z_{j}\}$ and dual variables $ \{\lambda_{j}\}$ are only used during the training phase to help update server heads and local models. Therefore, in the test phase, for any sample $x$, the server directly uses the trained multiple heads to make prediction $\hat{y} = \sum_{k=1}^M h^k  W_k$.  

\begin{small}
\setlength{\textfloatsep}{0pt}
\begin{algorithm}[t!]
 \caption{ \small \ouradmm{}  \colorbox[RGB]{239,240,241}{(
 with user-level differential privacy)}}
 \label{algo:vimadmm}
\begin{algorithmic}[1]
\small
 \STATE {\bfseries Input:}{number of communication rounds $T$, number of clients $M$, number of training samples $N$, batch size $b$ , input features $\{\{ x_j^1\}_{j=1}^{N}, \{ x_j^2\}_{j=1}^{N}, \ldots, \{ x_j^M\}_{j=1}^{N}\} $, the labels $\{y_j\}_{j=1}^{N}$, local model $\{\theta_k\}_{k=1}^M $; linear heads $\{W_k\}_{k=1}^M $;
auxiliary variables $ \{ z_{j} \}_{j=1}^{N} $;
dual variables $ \{ \lambda_{j} \}_{j=1}^{N} $;
\colorbox[RGB]{239,240,241}{ noise parameter $\sigma$, clipping constant $C$ }} 
\FOR {communication round $ t \in [T]$} 
    \STATE Server samples a set of data indices $B(t)$ with $|B(t)|= b $  
     \FOR {{client} $ k \in [M]$} 
        \STATE  \textbf{generates} a local training batch $ \{x_j^k\}_{j\in B(t) }$ 
        \STATE  \textbf{computes} local embeddings  $\medmath{\{{h_j^k}^{(t)}\gets f(x_j^k; \theta_k) \}_{j\in B(t)}}$ 
        \STATE   \colorbox[RGB]{239,240,241}{\textbf{clips and perturbs}  local embedding matrix }
         \STATE   \colorbox[RGB]{239,240,241}{$\medmath{ \{{h_j^k}^{(t)}\}_{j\in B(t) } \gets \mathtt{Clip}\left( \{{h_j^k}^{(t)}\}_{j\in B(t) }  , C\right) + \mathcal{N}\left(0, \sigma^{2} C^{2}\right)} $ }
        \STATE \textbf{sends} local embeddings $ \{{h_j^k}^{(t)}\}_{j\in B(t) }$ to the server
    \ENDFOR
    \STATE Server \textbf{updates} auxiliary variables $ \{ z_{j}^{(t )} \}_{j\in B(t)} $ by Eq.~\ref{eq:update_z} 
    \STATE Server \textbf{updates} dual variables $\{ \lambda^{(t)}_{j}\}_{j\in B(t)}$ by Eq.~\ref{eq:update_lambda} 
    \STATE Server \textbf{updates} linear heads $\{W_{k}^{(t +1)} \}_{k\in [M]}$ by  Eq.~\ref{eq:update_W} 
    \STATE  Server \textbf{computes} residual variables $ \{{s_j^{k}}^{(t)}\}_{j\in B(t), k\in [M]} $ by Eq.~\ref{eq:def_residual}
    \STATE Server \textbf{sends} $  \{ \lambda^{(t)}_{j} \}_{j \in B(t)}$ , $ \{  {s_j^{k}}^{(t)}\}_{j \in B(t)} $  and {corresponding} $ W_{k}^{(t+1)}$ to each client $k, \forall k\in [M]$ 
    \FOR {{client} $ k \in [M]$} 
        \STATE \textbf{updates} local model  $\theta_{k}^{(t +1)}$ for $\tau$ steps by Eq.~\ref{eq:update_theta}  via SGD 
    \ENDFOR
\ENDFOR
 \end{algorithmic}
 \vspace{-1mm}
\end{algorithm}
 \end{small}

\paragraph{Extending \ouradmm to multiple non-linear heads}
The server can learn non-linear transformation from the collected embeddings to uxiliary variables  $\{z_{j}\}$  by employing multiple non-linear heads. 
To achieve this, we rewrite all $ f(x^k_j, \theta_k) W_k$ as a more generalized form $ g( f(x^k_j, \theta_k), W_k)$ from Eq.~\ref{eq:multihead-obj} to Eq.~\ref{eq:update_theta}. Here,  $g$ can be a non-linear function parameterized by $W_k$.  Consequently, the prediction for each sample $j$ becomes $\hat{y}_j = \sum_{k=1}^M g( f(x^k_j, \theta_k), W_k)$. In this context, \ouradmm still aims to minimize the loss between $z_j$ and ground-truth label $y_j$, as well as the difference between $z_j$ and $\hat{y}_j$ during training in Eq.~\ref{eq:admm-lagrang}.

\vspace{-5mm}
\subsection{VFL without Model Splitting}\label{sec:vfl-without-model-split}
\vspace{-2mm}
\paragraph{Setup}
Recall the VFL without model splitting setting described in \textsection~\ref{sec:probformulation}.
Let $g$ parameterized by ${\psi_k}$ be the local model (i.e., whole model) of client $k$, which outputs local logits $o_j^k = g(x_j^k;{\psi_k}) \in \mathbb{R}^{ d_c} $ for each local feature $x_j^k$. 
The clients and the server aim to jointly solve the  problem
\begin{equation}
  \min_{\{{\psi_k}\}_{k=1}^M } 
\frac{1}{N}\sum_{j=1}^N  \ell ( \{o_j^1, \ldots, o_j^M \}  , y_j) + \beta_k \sum_{k=1}^M \mathcal{R}_k({\psi_k}) , \forall k\in [M]  
\end{equation}

\vspace{-1mm}
\paragraph{VIMADMM-J}
In exisiting VFL frameworks, the server averages the local logits as final prediction $\sum_{k=i}^M o_j^k$, but these methods also suffers from the high communication cost by sending the gradients w.r.t. local logits to each client at \textit{each} training step of the local model~\cite{fu2022usenix}. 
To solve this problem with our \name framework, we  adapt \ouradmm to the without model splitting setting and propose \ouradmmjoint{}, where each feature extractor $\theta_k$ and each head $W_k$ are held by the corresponding client $k$, and are always updated locally. 
The corresponding Algorithm~\ref{algo:vimadmmjoint} and detailed description are in  Appendix~\ref{app:algos}.

\vspace{-5mm}
\section{Convergence Analysis for \ouradmm}\label{sec:convergence}
\vspace{-5mm}
In this section, we provide the convergence guarantee for \ouradmm, which is non-trivial due to \add{the complexity of the \textit{alternative optimization}} between four sets of parameters $\{W_k\}, \{\theta_k\},\{z_j\},\{\lambda_j\}$.
To convey the salient ideas of convergence analysis, we consider full batch, i.e., $B(t)=[N]$ and use the exact minimization solutions during training (Eq.~\ref{eq:update_z},~\ref{eq:update_lambda},~\ref{eq:update_W}) following~\cite{hong2016convergence}.

We present our main results below and defer formal proofs to  \cref{app:conv-guatantee} due to space constraints. 
\begin{restatable}{theorem}{thmconv}
\label{theorem:main-paper-conv}
\add{Assume that $ \mathcal{L}_{\mathrm{\name}}$ is bounded from below, that is $\underline{e} := \min_{\{\theta_k\},  \{W_k\} }    \mathcal{L}_{\mathrm{\name}} (\{\theta_k\},  \{W_k\}) > - \infty$. 
Assume that  $\ell(z;\cdot)$ is $L$-Lipschitz smooth w.r.t $z$ and $\mathcal{L}_{\mathrm{ADMM}} $ loss is strongly convex w.r.t $\{z_j\},\{W_k\}, \{\theta_k\}$ with  constant $\mu_{z},\mu_{W},\mu_{\theta}$ respectively. Assume that the norm of $W_k$ is bounded $\|W_k\| \leq \sigma_{W}$, the local model $f(\cdot;\theta)$ has bounded gradient $\|\nabla f(\cdot;\theta) \|\leq L_{\theta}$ and  bounded output norm $\|f(\cdot; \theta)\| \leq \sigma_{\theta}$.
If \cref{algo:vimadmm} is run, and there exists a  $\rho$ satisfying 
$ \max\{L, \frac{2L^2}{ \mu_z }\} < \rho < \min\{\frac{\mu_{\theta}}{   L_{\theta}^{2} \sigma_{W}^2},\frac{\mu_{W}}{ \sigma_{\theta}^2 }  \}$, 
then we have the following:}\\
\textbf{(A)} $\mathcal{L}_{\mathrm{ADMM}} $ loss is monotonically decreasing and lower-bounded:
\begin{align}\label{eq:conv-mono-decrease}
& \mathcal{L}_{\mathrm{ADMM}} ( \{W_k^{(t+1)}\}, \{\theta_k^{(t+1)}\},\{z_j^{(t+1)}\},\{\lambda_j^{(t+1)}\} )  \nonumber \\
&< \mathcal{L}_{\mathrm{ADMM}} ( \{W_k^{(t)}\}, \{\theta_k^{(t)}\} ,\{z_j^{(t)}\},\{\lambda_j^{(t)}\})  
\end{align}
\begin{align}\label{eq:conv-lower-bonud}
\lim_{t\rightarrow \infty} \mathcal{L}_{\mathrm{ADMM}} ( \{W_k^{(t)}\}, \{\theta_k^{(t)}\} ,\{z_j^{(t)}\},\{\lambda_j^{(t)}\}) \geq \underline{e}
\end{align}
\textbf{(B)} Let $( \{W_k^{*}\}, \{\theta_k^{*}\} ,\{z_j^{*}\},\{\lambda_j^{*}\}) $ denote any limit points of the sequence $( \{W_k^{(t+1)}\}, \{\theta_k^{(t+1)}\},\{z_j^{(t+1)}\},\{\lambda_j^{(t+1)}\} )$ generated by \cref{algo:vimadmm}, then it is stationary:
\begin{align}\label{eq:conv-stationary}
 & z_{j}^{*} \in \arg \min_{z_j}   \ell \left( z_j; y_j \right) + {\lambda_j^*}^\top \left(  \sum_{k=1}^M f(x_j^k;\theta_k^{*})  W_{k}^{*} - z_j \right)   \text{\xspace and \xspace}  \nonumber  \\
 & \sum_{k=1}^M f(x_j^k;\theta_k^{*}) W_{k}^{*} = z_j^{*} ,   \forall j \in [N], \text{\xspace and \xspace}  \nonumber\\
 &\beta_k \nabla \mathcal{R}_k (W_k^*) +   \frac{1}{N} \sum_{j=1}^{N} \lambda_j^{* \top}  f(x_j^k;\theta_k^{*}) =0  \text{\xspace and \xspace} \nonumber \\
 & \beta_k \nabla \mathcal{R}_k (\theta_k^*) +   \frac{1}{N} \sum_{j=1}^{N} \lambda_j^{* \top} \nabla f(x_j^k;\theta_k^{*})  W_{k}^{*} =0,  k \in [M].
\end{align}
\end{restatable}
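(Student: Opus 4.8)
The plan is to follow the standard template for nonconvex ADMM convergence (in the spirit of \cite{hong2016convergence}): first establish the sufficient-decrease property \cref{eq:conv-mono-decrease} together with the lower bound \cref{eq:conv-lower-bonud} in part \textbf{(A)}, and then leverage these to show that limit points are stationary in part \textbf{(B)}. The starting point is to write the one-round change $\mathcal{L}_{\mathrm{ADMM}}^{(t+1)} - \mathcal{L}_{\mathrm{ADMM}}^{(t)}$ as a telescoping sum of the changes induced by the four block updates taken in the algorithm's order, namely $z \to \lambda \to W \to \theta$. Because each of the primal blocks $z$, $W$, $\theta$ is obtained by \emph{exact} minimization of $\mathcal{L}_{\mathrm{ADMM}}$ over that block with the others frozen, and because $\mathcal{L}_{\mathrm{ADMM}}$ is assumed strongly convex in each block with constants $\mu_z, \mu_W, \mu_\theta$, each of these three steps decreases the objective by at least $\tfrac{\mu_z}{2}\|z^{(t)}-z^{(t-1)}\|^2$, $\tfrac{\mu_W}{2}\|W^{(t+1)}-W^{(t)}\|^2$, and $\tfrac{\mu_\theta}{2}\|\theta^{(t+1)}-\theta^{(t)}\|^2$ respectively.

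The crux is controlling the dual ascent step, which \emph{increases} $\mathcal{L}_{\mathrm{ADMM}}$. First I would combine the first-order optimality condition of the $z$-subproblem \cref{eq:update_z} with the dual update \cref{eq:update_lambda} to obtain the identity $\lambda_j^{(t)} = \nabla_z \ell(z_j^{(t)}, y_j)$; the $L$-Lipschitz smoothness of $\ell$ then yields $\|\lambda_j^{(t)}-\lambda_j^{(t-1)}\| \le L\,\|z_j^{(t)}-z_j^{(t-1)}\|$. Since the dual step raises the objective by exactly $\tfrac{1}{\rho N}\sum_j\|\lambda_j^{(t)}-\lambda_j^{(t-1)}\|^2$, the net effect of the paired $(z,\lambda)$ updates is bounded above by $-\big(\tfrac{\mu_z}{2}-\tfrac{L^2}{\rho}\big)\|z^{(t)}-z^{(t-1)}\|^2$, which is strictly negative precisely when $\rho > 2L^2/\mu_z$. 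Together with the $W$- and $\theta$-decreases this gives \cref{eq:conv-mono-decrease}; here the boundedness constants $\sigma_W,\sigma_\theta,L_\theta$ and the upper window $\rho < \min\{\mu_\theta/(L_\theta^2\sigma_W^2),\,\mu_W/\sigma_\theta^2\}$ are what keep the $\rho$-scaled, nonconvex coupling through the feature extractor $f$ from destroying the strong convexity of the $W$- and $\theta$-blocks.

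For the lower bound \cref{eq:conv-lower-bonud} I would substitute $\lambda_j^{(t)}=\nabla_z\ell(z_j^{(t)})$ into $\mathcal{L}_{\mathrm{ADMM}}$ and apply the descent-lemma inequality $\ell(z_j)+\nabla_z\ell(z_j)^\top(\hat y_j - z_j) \ge \ell(\hat y_j) - \tfrac{L}{2}\|\hat y_j - z_j\|^2$, valid for any $L$-smooth $\ell$. The linear and quadratic coupling terms then reassemble into $\mathcal{L}_{\mathrm{\name}}(\{\theta_k^{(t)}\},\{W_k^{(t)}\}) + \tfrac{\rho-L}{2N}\sum_j\|\hat y_j - z_j\|^2$. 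Because $\rho > L$ the residual term is nonnegative and $\mathcal{L}_{\mathrm{\name}} \ge \underline{e}$ by assumption, so $\mathcal{L}_{\mathrm{ADMM}}^{(t)} \ge \underline{e}$ for all $t$; combined with monotone decrease this forces $\mathcal{L}_{\mathrm{ADMM}}^{(t)}$ to converge and completes \textbf{(A)}.

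For \textbf{(B)}, telescoping the sufficient-decrease inequality and using the lower bound gives $\sum_t (\|z^{(t)}-z^{(t-1)}\|^2 + \|W^{(t+1)}-W^{(t)}\|^2 + \|\theta^{(t+1)}-\theta^{(t)}\|^2) < \infty$, so all successive differences vanish, and hence so does $\|\lambda_j^{(t)}-\lambda_j^{(t-1)}\| \le L\|z_j^{(t)}-z_j^{(t-1)}\|$. The dual update then forces the primal residual $\hat y_j - z_j = \tfrac{1}{\rho}(\lambda_j^{(t)}-\lambda_j^{(t-1)}) \to 0$, yielding the feasibility condition $\sum_k f(x_j^k;\theta_k^*)W_k^* = z_j^*$. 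Passing to the limit in the first-order optimality conditions of the $z$-, $W$-, and $\theta$-subproblems (\cref{eq:update_z,eq:update_W,eq:update_theta}) — where the residual-dependent terms drop out because the residual tends to zero and $f$, $\nabla f$, $W$ are bounded — produces the remaining three stationarity identities in \cref{eq:conv-stationary}. I expect the main obstacle to be exactly the dual-ascent control: unlike convex ADMM the dual step increases the Lagrangian, and the whole argument rests on the $z$-optimality identity $\lambda=\nabla_z\ell(z)$ together with $L$-smoothness to charge this increase against the $z$-decrease. The secondary difficulty is the nonconvexity of the DNN feature extractor $f$ in the $\theta$-block, which is what forces the direct strong-convexity assumptions on $\mathcal{L}_{\mathrm{ADMM}}$ and the two-sided admissible window for $\rho$.
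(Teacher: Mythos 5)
Your proposal follows essentially the same route as the paper's proof: the same telescoping decomposition of the one-round change into the four block updates, the same key identity $\lambda_j^{(t)} = \nabla \ell(z_j^{(t)})$ obtained by combining $z$-optimality (\cref{eq:update_z}) with the dual step (\cref{eq:update_lambda}), the same charging of the dual ascent $\frac{1}{\rho}\|\lambda_j^{(t+1)}-\lambda_j^{(t)}\|^2 \le \frac{L^2}{\rho}\|z_j^{(t+1)}-z_j^{(t)}\|^2$ against the $z$-step decrease (yielding the condition $\rho > 2L^2/\mu_z$), the same descent-lemma reassembly $\mathcal{L}_{\mathrm{ADMM}} \ge \mathcal{L}_{\mathrm{\name}} + \frac{\rho - L}{2N}\sum_j \|\sum_k f(x_j^k;\theta_k)W_k - z_j\|_F^2 \ge \underline{e}$ for the lower bound, and the same summability-of-differences and limit-point argument for part (B).

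There is, however, one step that would fail as you state it. You claim the $W$- and $\theta$-steps each decrease $\mathcal{L}_{\mathrm{ADMM}}$ by at least $\frac{\mu_W}{2}\|W^{(t+1)}-W^{(t)}\|^2$ and $\frac{\mu_\theta}{2}\|\theta^{(t+1)}-\theta^{(t)}\|^2$ because ``each primal block is obtained by exact minimization over that block with the others frozen.'' But in \cref{algo:vimadmm} the $M$ heads are \emph{not} minimized jointly: each $W_k$ solves its own subproblem (\cref{eq:update_W}) with the other heads frozen at $W_i^{(t)}$, and each client minimizes over $\theta_k$ with the other clients' embeddings frozen (\cref{eq:update_theta}) --- a parallel (Jacobi-style) update, not a block-exact minimization. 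The joint update is therefore not a minimizer of $\mathcal{L}_{\mathrm{ADMM}}$ over the full block, and the claimed $\mu/2$ decrease does not follow. The paper bridges this with \cref{lemma:trick} (Lemma 3 of the \linearadmm paper), which converts the joint quadratic change into a sum of per-coordinate subproblem changes plus an additive cross term of the form $\frac{\rho}{2N}\sum_j\sum_k \|f(x_j^k;\theta_k^{(t)})(W_k^{(t+1)}-W_k^{(t)})\|_F^2$ (and its analogue for $\theta$); bounding these via $\sigma_\theta$, $L_\theta$, $\sigma_W$ is precisely where the terms $\rho\sigma_\theta^2$ and $\rho L_\theta^2\sigma_W^2$ --- and hence the upper window on $\rho$ --- come from. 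Your sketch correctly senses that the upper window must tame a $\rho$-scaled penalty, but attributes it to the nonconvexity of $f$; in the paper's argument nonconvexity is neutralized outright by the strong-convexity assumption on $\mathcal{L}_{\mathrm{ADMM}}$, and the upper window is needed solely because of the parallel per-head/per-client updates.
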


\begin{proof}[Proof Sketch]
We obtain \cref{eq:conv-mono-decrease} by breaking down the changes of loss $\mathcal{L}_{\mathrm{ADMM}}$ at each round $t$ into the alternatively updates of four components: $\{\lambda_j^{(t+1)}\}$, $\{z_j^{(t+1)}\}$, $\{W_k^{(t+1)}\}$, and $\{\theta_k^{(t+1)}\}$, respectively. Through our assumptions and the optimality of the minimizers, we demonstrate that the combined loss decreases at each round. 
Next, to derive \cref{eq:conv-lower-bonud}, we leverage  the Lipschitz continuity of $\ell$, the condition $ \rho \geq  L $,  the lower bound of $\mathcal{L}_{\mathrm{\name}}$, and the fact that the quadratic loss term in $\mathcal{L}_{\mathrm{ADMM}}$ is non-negative.
\add{Finally, by letting $t\rightarrow \infty$ and examining the optimality conditions of the minimizers, we drive \cref{eq:conv-stationary}}.
\end{proof}
\textit{Remark.} \cref{theorem:main-paper-conv} (A) shows that  \ouradmm converges, measured by the monotonically decreasing and convergent loss, and (B) establishes that any limit point is a stationary solution to the problem~\ref{eq:admm-lagrang}. 
Note that we make several assumptions in \cref{theorem:main-paper-conv} to derive the above guarantees, as often made in ADMM analysis~\cite{hong2016convergence} for alternative optimization of multiple sets of variables. 
Specifically, we follow Hong et al.~\cite{hong2016convergence} to assume convexity, Lipschitz smoothness, and the bounded loss for convergence analysis of \ouradmm. Furthermore, we acknowledge that analyzing the local model can be challenging, given the complexity of DNNs, so we introduce an additional assumption that bounds the norm of the gradient and the output of local models, which could be practical when the model training exhibits stability. Similarly, we assume a bounded norm for the server model. By incorporating these assumptions, we aim to offer a more comprehensive understanding of the convergence behavior of \ouradmm.

\section{\revise{client-level} Differentially Private \name}\label{sec:dpvim}

While the raw features and local models are kept locally without sharing in VFL, sharing the model outputs such as local embeddings or predictions during the training process might also leak sensitive client information~\cite{mahendran2015understanding, papernot2018sok}.
Therefore, we aim to further protect the privacy of the local feature set $X_k$ of each client $k$ against potential adversaries such as \revise{honest-but-curious server and clients, and external attackers.} 
\paragraph{\revise{Threat Model}}
\revise{ We consider different types of \textit{potential adversaries based on their capabilities}:
(1) Honest-but-curious server and clients: they follow the VFL protocol correctly but might try to infer private client information from information exchanged between the clients and server~\cite{tran2023privacy}.
(2) External attackers: they are not directly involved in the VFL process but may observe the predicted results from the server and the communicated information during training, trying to extract private client information.
Regarding \textit{attack scenarios}, these attackers may conduct membership inference attacks~\cite{shokri2017membership} to determine whether the data of a specific VFL client was included during training. 
Our goal is to protect the local data of each client against potential attackers so that the attacker cannot make significant inferences about any single client's data.
Next, we provide privacy-preserving mechanisms to satisfy client-level differential privacy (DP) guarantees. 
}
\paragraph{\revise{Client-level DP}}
We begin with the $(\epsilon,\delta)$-DP definition,  which guarantees that the change in a randomized algorithm's output distribution caused by an input difference is bounded. 
\begin{definition} [$(\epsilon,\delta)$-DP~\cite{dwork2014algorithmic}]
\label{def:dp}
A randomized algorithm $\mathcal{M}: \mathcal{X}^n \mapsto \Theta$ is $(\epsilon, \delta)$-DP if for every pair of neighboring datasets $X, X^{\prime} \in  \mathcal{X}^n$ (i.e., differing only by one sample), and every possible (measurable) output set $E \subseteq \Theta $ the following inequality holds: $\operatorname{Pr}[\mathcal{M}(X) \in E] \leq e^{\epsilon} \operatorname{Pr}\left[\mathcal{M}\left(X^{\prime}\right) \in E\right]+\delta$.
\end{definition}

Next, we introduce \revise{client-level} $(\epsilon,\delta)$-DP~\cite{mcmahan2018learning}, which guarantees that the algorithm's output would not be changed much by differing one \revise{client}.

\begin{definition} [\revise{Client-level} $(\epsilon,\delta)$-DP~\cite{mcmahan2018learning}]
\label{def:user-dp}
Let $X$ and $X'$ be adjacent datasets if they differ by
all samples associated with a single \revise{client}\footnote{\revise{We consider the ``zero-out'' notion for the neighboring dataset, following \cite{ponomareva2023dp}: datasets are adjacent if any one \revise{client}’s local data is replaced with the special “zero” data (exactly zero for numeric data). }}. 
The mechanism $\mathcal{M}$ satisfies \revise{client-level} $(\epsilon, \delta)$-DP if it meets  Definition~\ref{def:dp} with  $X$ and $X'$ as adjacent datasets.
\end{definition}

\revise{\textit{Remark.} (1) \textit{Client-level DP} protects the privacy of all local samples of each client~\cite{mcmahan2018learning}. 
The neighboring datasets in client-level DP are defined between client-adjacent datasets, denoted by $X=\{X_1, \ldots, X_k, \ldots, X_M \}$ and $X'=\{X_1, \ldots, X_k', \ldots, X_M \}$ for some client $k$. The algorithm's output should not change significantly if a single client's entire dataset is changed.
(2) \textit{User-level DP} is another prevalent privacy notion in FL literature, and its definition depends on how ``user'' is interpreted. If a ``user'' denotes a client/data owner in FL, then user-level DP aligns with client-level DP~\cite{geyer2017differentially,mcmahan2018learning,agarwal2018cpsgd}. Additionally, a ``user'' in VFL might refer to an entity contributing different samples with partial features, where $M$ VFL clients hold disjoint partial features $\{ x_j^1,  x_j^2, \dots,  x_j^M \}$ about the same user $j$~\cite{cohendifferentially, ranbaduge2022differentially}. For example, different healthcare providers (VFL clients such as dental clinics and pharmacies) can hold different features about the same patient (user). In such cases,  neighboring datasets are defined as those differing by all local samples associated with one user across all VFL client datasets. 
In this work, we focus on client-level DP due to its widespread adoption in FL~\cite{mcmahan2018learning}.
 }

Since the only shared information from clients is their local outputs, denoted as $\mathcal{A}_k$ for $k$-th client, we leverage the following DP mechanisms to perturb the local outputs of each client $k$ at every round $t$: (1)  \textit{clip} the whole local output matrix (either embeddings ${\mathcal{A}_k}^{(t)}= \{{h_j^k}^{(t)}\}_{j\in B(t) }$ or logits ${\mathcal{A}_k}^{(t)}= \{{o_j^k}^{(t)}\}_{j\in B(t) }$ ) with threshold $C$ such that the $\ell_2$-\textit{sensitivity for each \revise{client} is upper bounded} by $C$.
\revise{That is, $\mathtt{Clip}\left( \mathcal{A}_k  , C\right)= \mathcal{A}_k \cdot \min\left(1, \frac{C}{\| \mathcal{A}_k\|_F}\right) $ where $\|\cdot\|_F$ is the Frobenius norm\footnote{\revise{The Frobenius norm for a $m\times n$ matrix $A$ is $\|A\|_F=\sqrt{\sum_{i=1}^m \sum_{j=1}^n\left|a_{i j}\right|^2}$}}.}
(2) Then we add \revise{scalar} \textit{Gaussian noise} \revise{independently to each cell of the matrix. The noise is} sampled from $\mathcal{N}(0, \sigma^{2} C^{2})$, which is proportional to $C$ and can \textit{randomize the local output matrix of each \revise{client}}:
$ \mathcal{A}_k \gets \mathtt{Clip}\left( \mathcal{A}_k  , C\right) + \mathcal{N}\left(0, \sigma^{2} C^{2}\right)$. 
Based on the above modification to Algorithm~\ref{algo:vimadmm} and \ref{algo:vimadmmjoint},  we now provide their privacy guarantee in Theorem~\ref{theo:dp_guarantee}. 
\begin{restatable}{theorem}{thmdp}
\label{theo:dp_guarantee}
Given a total of $M$ clients, $T$ communication rounds, clipping threshold $C$ and noise level $\sigma$, DP versions of Algorithm~\ref{algo:vimadmm},~\ref{algo:vimadmmjoint}  satisfy \revise{client-level} $( \frac{T \alpha}{2\sigma^2} + \log \frac{\alpha -1 }{\alpha} - \frac{\log\delta + \log \alpha}{ \alpha -1}  , \delta)$-DP for any $\alpha > 1$ and $0 < \delta < 1$.
\end{restatable}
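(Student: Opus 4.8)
The plan is to recast the entire $T$-round protocol as an \emph{adaptive composition of per-round Gaussian mechanisms} and account for its privacy loss through R\'enyi differential privacy (RDP), converting back to $(\epsilon,\delta)$-DP only at the very end. First I would fix a pair of client-adjacent datasets $X,X'$ under the zero-out notion of \cref{def:user-dp}, so that they differ only in the data of a single client $k$ (with $X'$ contributing the ``zero'' record for that client). The key structural observation is that the only information about $X_k$ that ever leaves client $k$ is its clipped-and-perturbed local output matrix $\mathcal{A}_k^{(t)}$; the server's updates of $\{z_j\},\{\lambda_j\},\{W_k\}$ (Eqs.~\ref{eq:update_z}--\ref{eq:update_W}) via L-BFGS-B and SGD, the residuals $\{s_j^k\}$, and every local update of $\theta_k$ are deterministic or independently-randomized post-processings of the transcript of released matrices.

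Next I would formalize the reduction to one Gaussian mechanism per round. Conditioning on the transcript $V_{<t}$ released before round $t$, the server state is identical in the two executions, so for every non-differing client $i\neq k$ the distribution of $\mathcal{A}_i^{(t)}$ is \emph{exactly the same} under $X$ and $X'$ and contributes no privacy loss; only client $k$'s mechanism distinguishes the two executions. After clipping, $\|\mathtt{Clip}(\mathcal{A}_k^{(t)},C)\|_F\le C$, and since the zeroed-out adjacent execution contributes the zero matrix for client $k$, the $\ell_2$ (Frobenius) sensitivity of the round-$t$ query is at most $C$. Adding $\mathcal{N}(0,\sigma^2C^2)$ independently to each entry is therefore exactly the Gaussian mechanism with noise multiplier $\sigma$, which satisfies $(\alpha,\tfrac{\alpha}{2\sigma^2})$-RDP for every $\alpha>1$, uniformly over the conditioning on $V_{<t}$.

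I would then invoke adaptive RDP composition: summing the per-round contributions over the $T$ rounds gives $(\alpha,\tfrac{T\alpha}{2\sigma^2})$-RDP, and post-processing (used to absorb all server- and client-side updates, together with the messages of the non-differing clients) leaves this guarantee intact. Finally I would apply the tight RDP-to-DP conversion, under which an $(\alpha,\varepsilon_{\mathrm{RDP}})$-RDP mechanism is $\bigl(\varepsilon_{\mathrm{RDP}}+\log\tfrac{\alpha-1}{\alpha}-\tfrac{\log\delta+\log\alpha}{\alpha-1},\,\delta\bigr)$-DP; substituting $\varepsilon_{\mathrm{RDP}}=\tfrac{T\alpha}{2\sigma^2}$ reproduces exactly the stated $\epsilon$. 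Because \ouradmmjoint{} releases clipped-and-perturbed logits $\mathcal{A}_k^{(t)}=\{{o_j^k}^{(t)}\}_{j\in B(t)}$ through the identical clip-plus-Gaussian step, the same accounting applies verbatim.

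The step I expect to be the main obstacle is the reduction in the second paragraph: rigorously arguing that the multi-client, multi-round, stateful protocol collapses to an adaptive composition in which only the differing client's Gaussian mechanism carries privacy loss. This requires carefully coupling the two executions on the released transcript and verifying that the non-differing clients' noised outputs and all of the server's auxiliary-, dual-, and head-variable updates are genuine post-processings of that transcript, so that no additional privacy cost is hidden in the coordination variables. Once this coupling is established, the sensitivity computation, the Gaussian-mechanism RDP bound, the composition, and the final conversion are routine.
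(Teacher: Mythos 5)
Your proposal is correct and follows essentially the same route as the paper's proof: clipping plus the zero-out adjacency gives per-round Frobenius sensitivity $C$, the Gaussian mechanism yields $(\alpha,\tfrac{\alpha}{2\sigma^2})$-RDP per round, RDP composition over $T$ rounds gives $(\alpha,\tfrac{T\alpha}{2\sigma^2})$-RDP with all server/client updates absorbed by post-processing, and the Balle et al.\ conversion produces the stated $(\epsilon,\delta)$ bound. The only difference is presentational: where you carefully couple the two executions on the released transcript to isolate the differing client's mechanism, the paper reaches the same per-round bound more directly by treating the concatenated clipped-and-perturbed matrices of all clients as a single Gaussian mechanism whose sensitivity is $C$ because only one client's block can change.
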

\textit{Proof Sketch}.
We derive the privacy guarantee using R\'enyi Differential Privacy (RDP)~\cite{mironov2017renyi} as a bridge. We first leverage the RDP guarantee for the Gaussian mechanism~\cite{mironov2017renyi} to analyze the privacy cost for one communication round under local output perturbation. Then we use RDP Composition property~\cite{mironov2017renyi} to accumulate the privacy costs over $T$ communication rounds. Finally, we convert \revise{client-level} RDP guarantee into \revise{client-level} DP guarantee~\cite{balle2020hypothesis}. Detailed proofs are deferred to \cref{app:dp}.

\textit{Remark.} 
\revise{
Since DP mechanisms (i.e., clipping and noise addition), are applied to each client's outputs (i.e., embedding or logits matrix) \textit{locally}, these local outputs satisfy \revise{client-level} local DP, protecting against privacy attacks from other \revise{clients}, server or external attackers. 
That is, by observing the local outputs matrix of one \revise{client}, other parties cannot determine the presence of that \revise{client}'s actual training data. 
The concatenated output matrix from all \revise{clients} satisfies the same client-level DP guarantee based on DP parallel composition \cite{mcsherry2009privacy}, due to non-overlapping nature of local data among clients}.

Note that the aforementioned DP mechanisms do not protect the privacy of labels held by server. Therefore,  we separately use state-of-the-art label DP mechanism~\cite{malek2021antipodes} to protects  server's label privacy via label perturbing, and conduct empirical evaluations of our method under label DP in \cref{sec:exp_labeldp}.

\vspace{-3mm}
\section{Experiments}\label{sec:exp}
\vspace{-1mm}

We conduct extensive experiments on four VFL datasets.
We show that our proposed framework \name achieves significantly faster convergence and higher accuracy than SOTA (\cref{sec:evaluation}), maintains higher utility under \revise{client-level} DP and label DP (\cref{sec:exp_dpvfl}), and enables client-level explainability (\cref{sec:exp_explainability}).

\subsubsection{Data and Models}
We consider classification tasks on four datasets: {\mnist}~\cite{lecun-mnisthandwrittendigit-2010},  {\cifar}~\cite{cifar}, multi-modality dataset {\nus} with image and textual features~\cite{chua2009nus}, 
and  multi-view dataset {\modelnet}~\cite{su2018deeper}. 
\begin{itemize}
    \item \mnist~\cite{lecun-mnisthandwrittendigit-2010} contains images with handwritten digits. We create the VFL scenario  by splitting the input features evenly by rows for 14 clients. We use a fully connected model of two linear layers with ReLU activations as the local model. 
    \item \cifar~\cite{cifar} contains colour images. We split each image into patches for 9 clients. We use a standard CNN architecture from the PyTorch library~\footnote{\href{https://github.com/pytorch/opacus}{https://github.com/pytorch/opacus}} as the local model. 
    \item \nus~\cite{chua2009nus} is a  multi-modality dataset with 634  low-level image features and 1000 textual tag features. We distribute image features to 2 clients (300 dim and 334 dim), and text features to 2 clients (500 dim and 500 dim). We use  a fully connected model of two linear layers with ReLU activations as the local model. 
    \item \modelnet~\cite{su2018deeper} is a multi-view image dataset, containing the shaded images from 12 views for the same objects. We use 4 views and distribute them to 4 clients respectively. We use  ResNet-18~\cite{he2016deep} as the local model. 
\end{itemize}

We split each dataset into the train, validation, and test sets. See \cref{tab:dataset} for more details about the number of samples and the number of classes for each dataset. 
{
\setlength{\tabcolsep}{4pt} %
\vspace{-2mm}
\begin{table}[h]
    \centering
    \renewrobustcmd{\bfseries}{\fontseries{b}\selectfont}
    \sisetup{detect-weight,mode=text,group-minimum-digits = 4}    
    \caption{\small Dataset description.}
    \label{tab:dataset}
\begin{subtable}{0.8\columnwidth}
    \centering
     \resizebox{\columnwidth}{!}{%
        \begin{tabular}{cccccccccccccccccc}
    \toprule
\multirow{2}{*}{Dataset} & 
\multirow{2}{*}{$\#$ features } & 
\multirow{1}{*}{$\#$ classes } & 
\multirow{1}{*}{$\#$ clients} & 
\multicolumn{3}{c}{$\#$ samples} \\
& &$d_c$   & $M$ &   train & validation & test    \\\midrule
\mnist & 28 $\times$ 28 & 10 & 14 & 54000 & 6000 & 10000 \\ \midrule
\cifar & 32 $\times$ 32 $\times$ 3  & 10 &  9 &  45000 &  5000 & 10000 \\ \midrule 
\nus &   1634   & 5 & 4 &  54000 & 6000 & 10000   \\ \midrule
\modelnet & 224 $\times$ 224 $\times$  3 $\times 12$   & 40 &   4 & 8877 & 966 & 2468  
\\ \bottomrule
\end{tabular}%
}
\end{subtable}
\end{table}
\vspace{-2mm}
}

{

\newlength{\tempdimab}
\settoheight{\tempdimab}{\includegraphics[width=.25\linewidth]{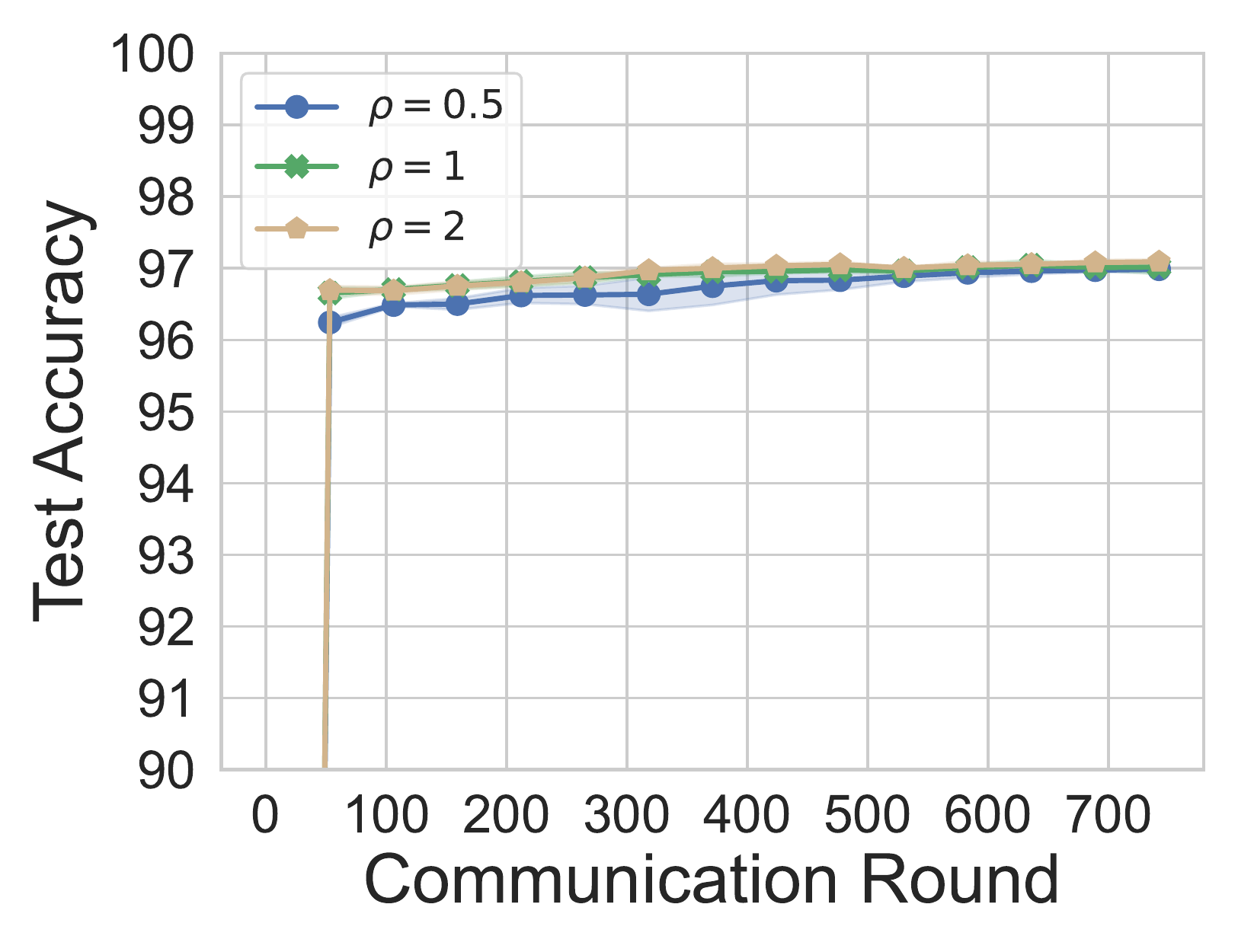}}%

\newlength{\cleanheightc}
\settoheight{\cleanheightc}{\includegraphics[width=.25\linewidth]{ICLRplots/acc/mnist_rho.pdf}}%

\newcommand{\rowname}[1]%
{\rotatebox{90}{\makebox[\tempdimab][c]{\scriptsize #1}}}

\renewcommand{\tabcolsep}{10pt}

\begin{figure*}[h]
\centering
\begin{subtable}{0.9\linewidth}
\centering
\resizebox{\columnwidth}{!}{%
\begin{tabular}{@{}p{2mm}@{}c@{}c@{}c@{}c@{}c@{}c@{}}
        & \makecell{\tiny{\mnist}}
        & \makecell{\tiny{\cifar}}
        & \makecell{\tiny{\nus}}
        & \makecell{\tiny{\modelnet}}
        \vspace{-3pt}\\
\rowname{\makecell{\tiny W/ Model Split}}&
\includegraphics[height=\cleanheightc]{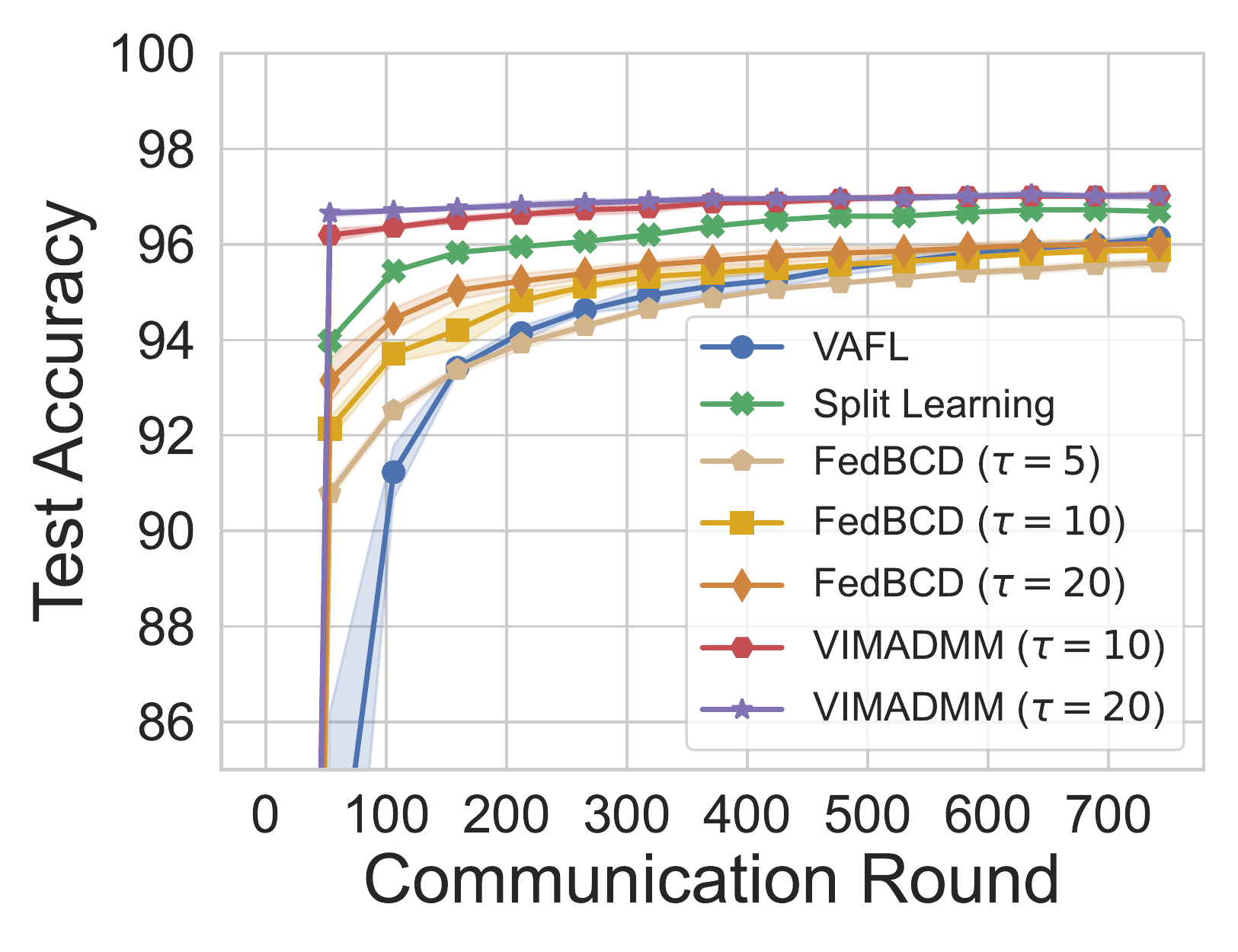}&
\includegraphics[height=\cleanheightc]{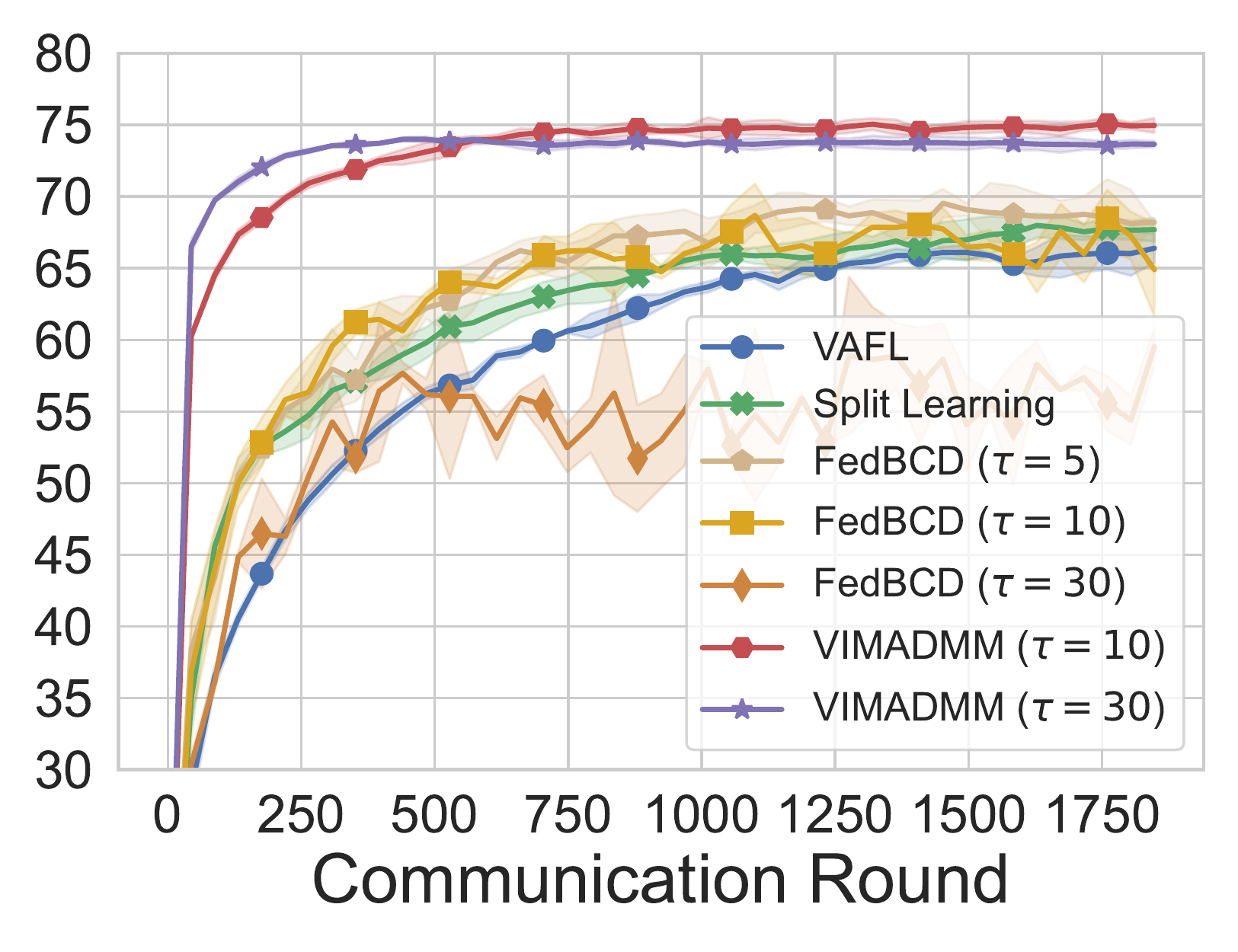}&
\includegraphics[height=\cleanheightc]{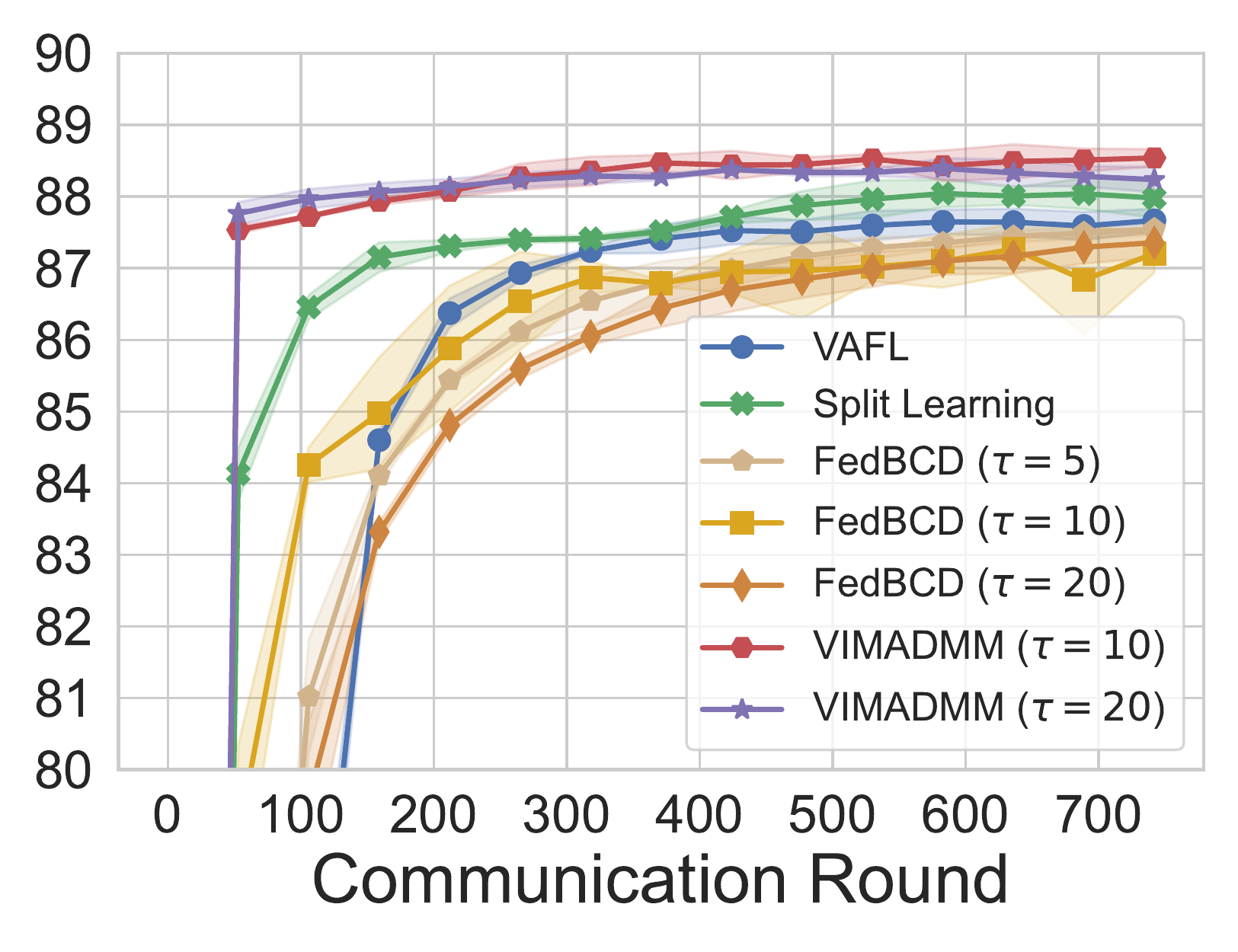}&
\includegraphics[height=\cleanheightc]{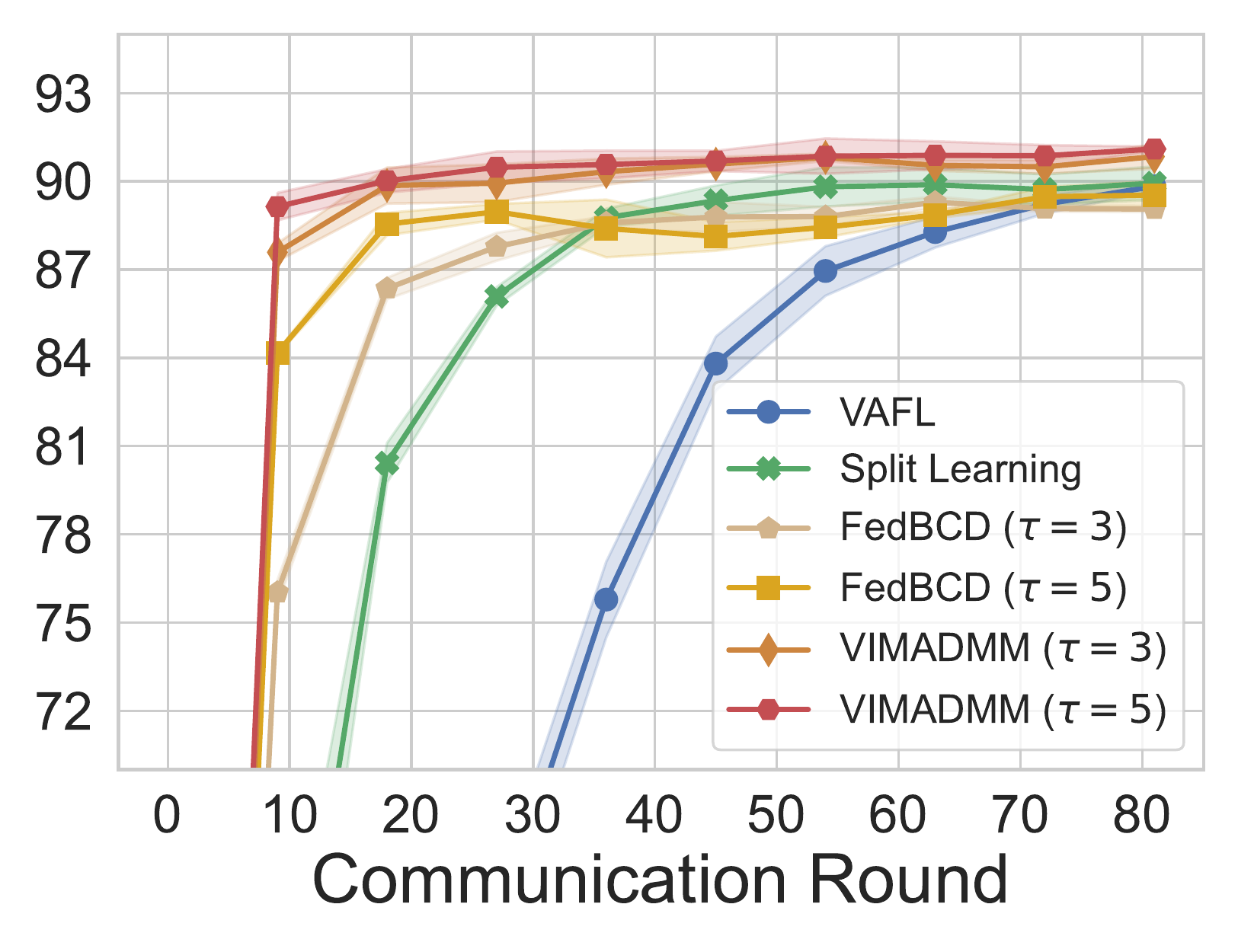}\\[-1.2ex]
\rowname{\makecell{\tiny W/o Model Split}}&
\includegraphics[height=\cleanheightc]{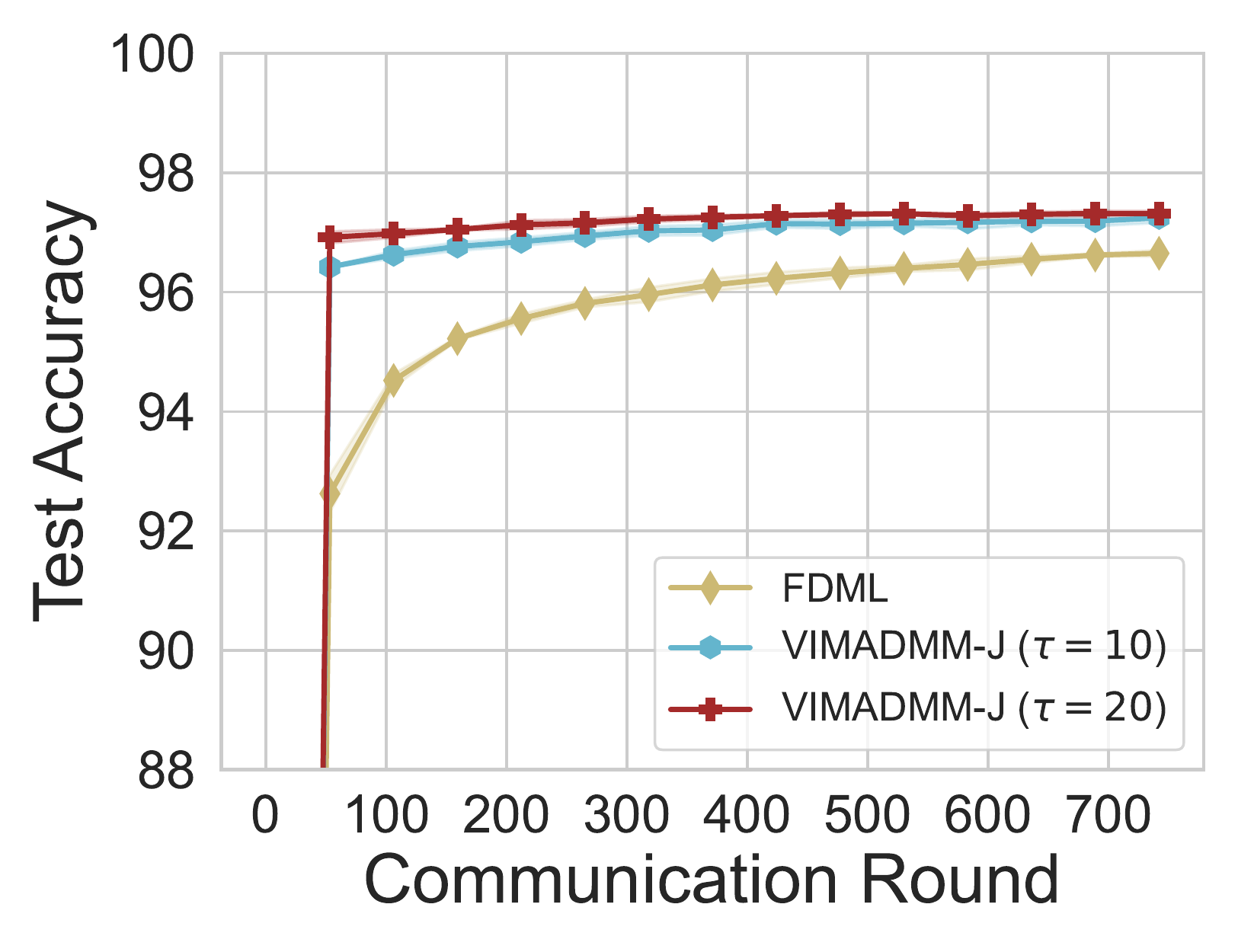}&
\includegraphics[height=\cleanheightc]{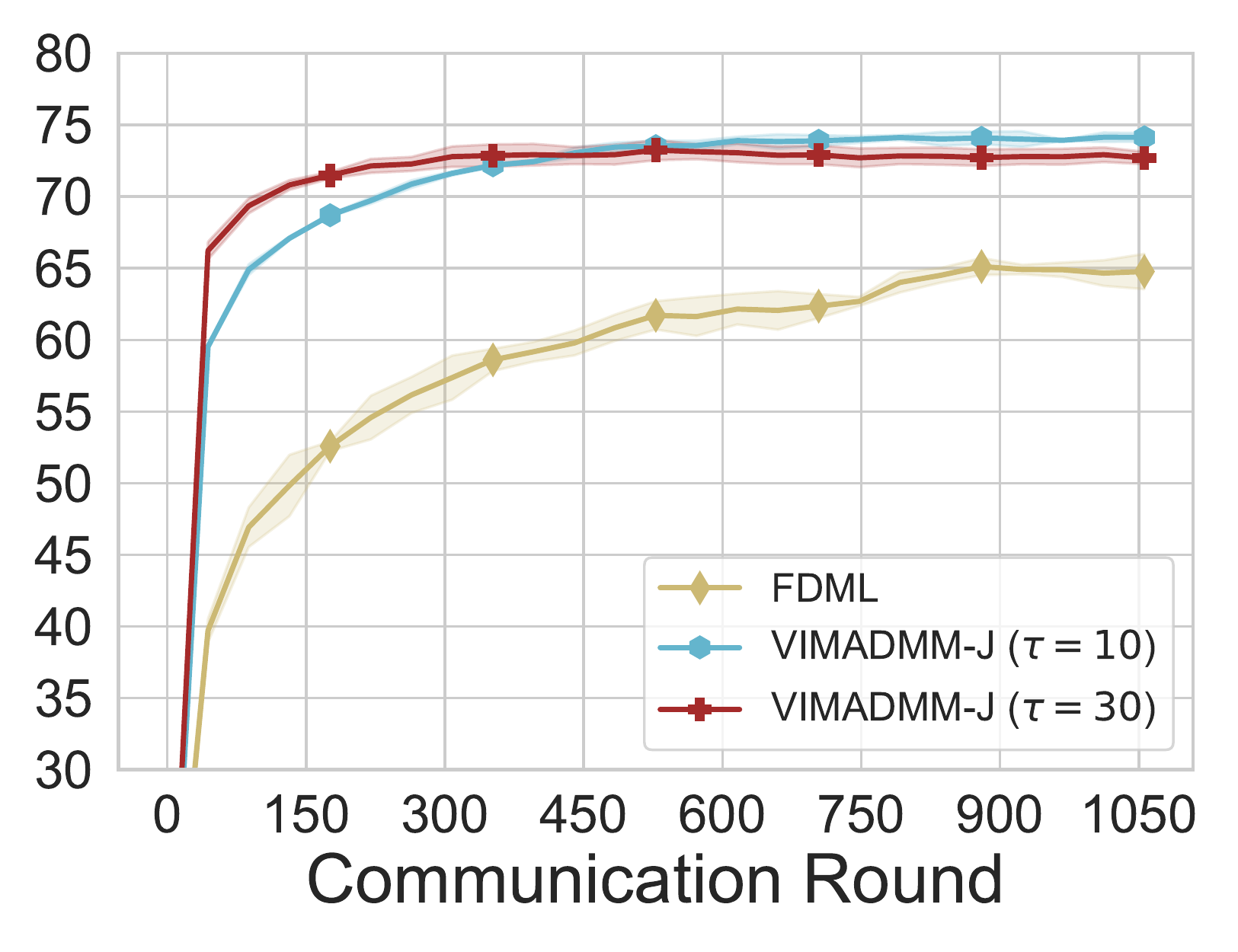}&
\includegraphics[height=\cleanheightc]{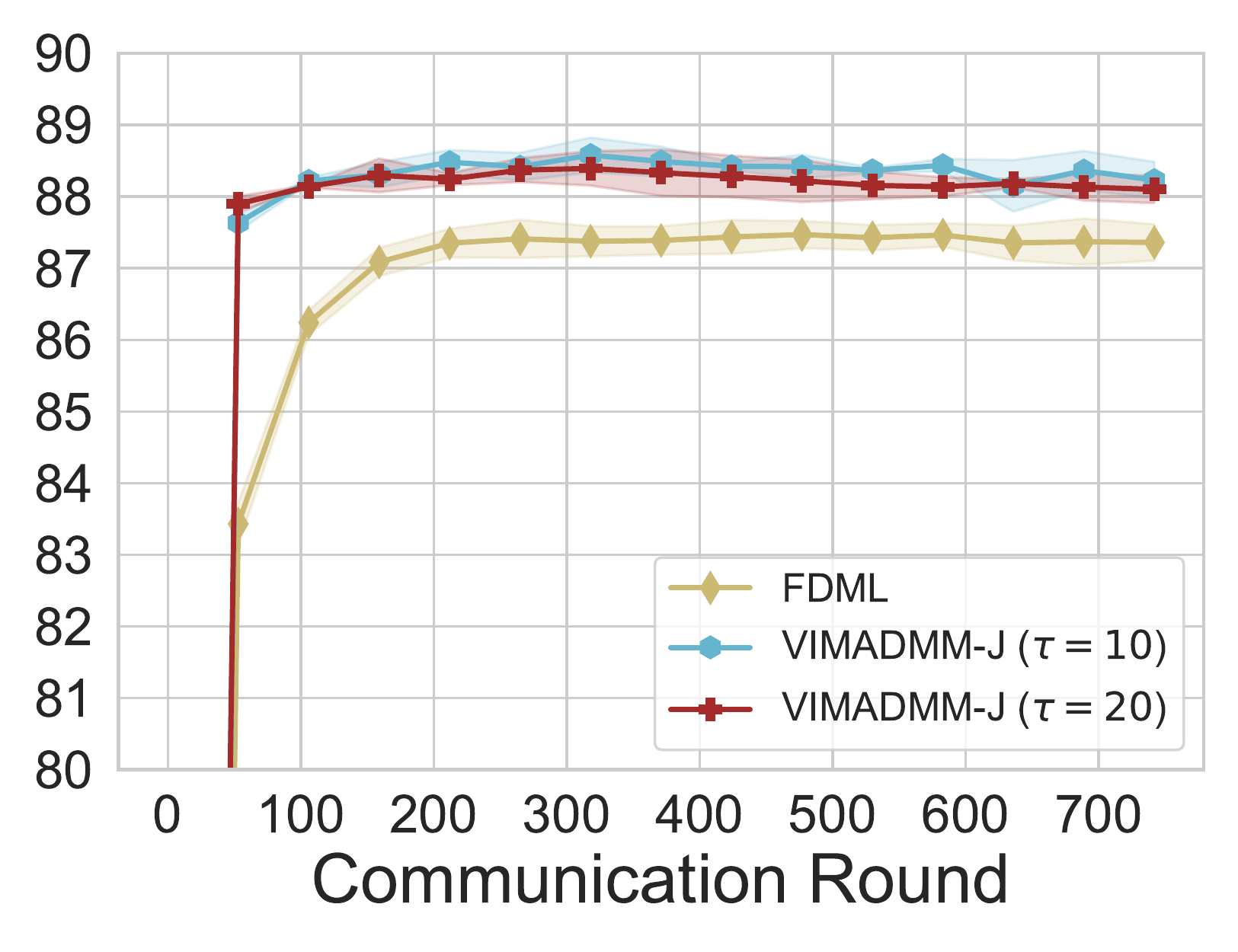}&
\includegraphics[height=\cleanheightc]{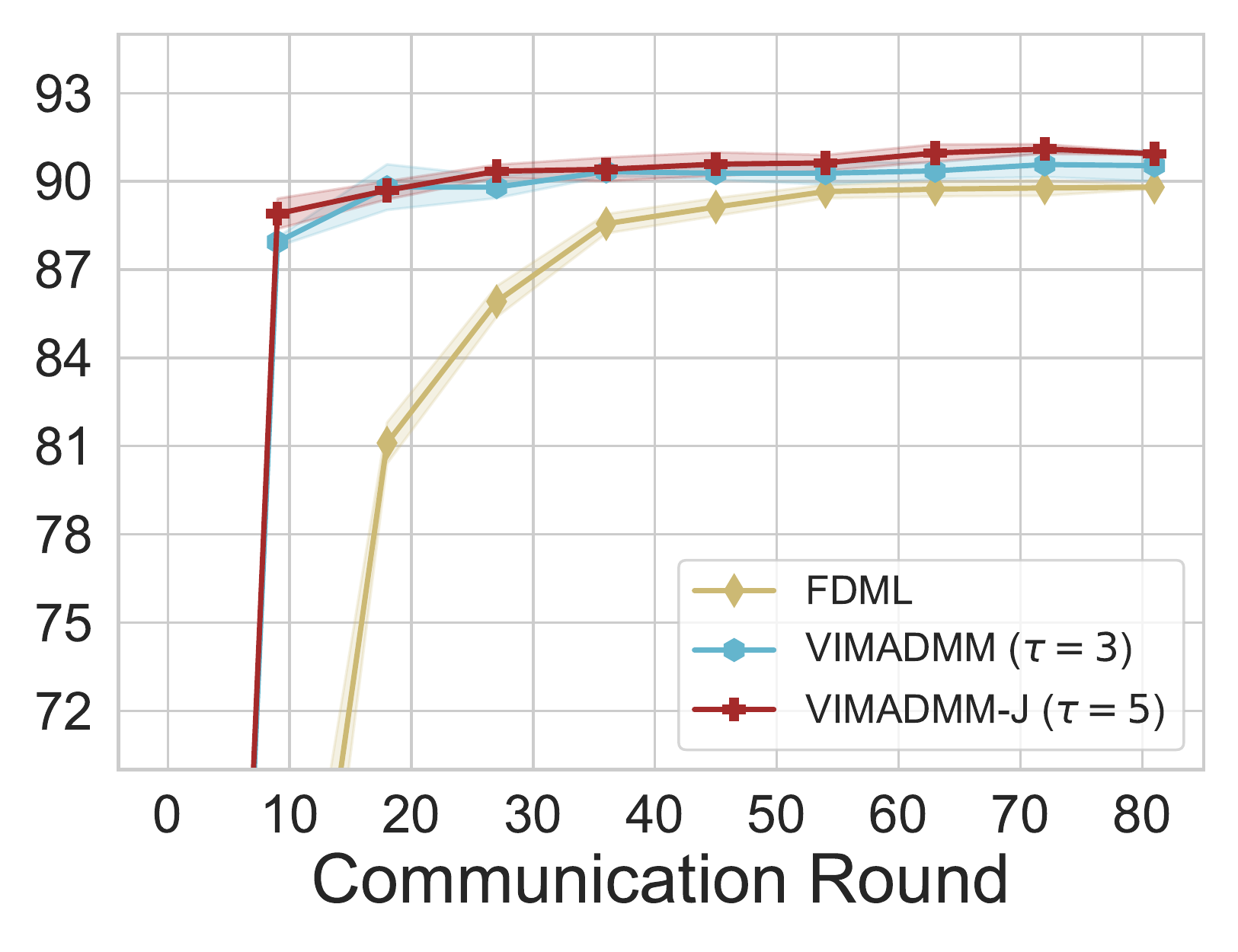}
\\[-1.2ex]
\end{tabular}
}
\end{subtable}
\caption{\small Test accuracy of VFL methods  under with model (first row) and without splitting (second row) settings on four datasets.  
Our methods (\ouradmm and \ouradmmjoint) outperforms baselines due to multiple local updates enabled by ADMM ($\tau>1$). Compared with \fedbcd under different number of local steps $\tau$, \ouradmm also achieves faster convergence and higher accuracy, which shows that the strategic utilization of ADMM-related variables in  \ouradmm is more effective than the stale partial gradient in \fedbcd for local updates.
}%
\label{fig:vanilla_vfl}
\vspace{-6mm}
\end{figure*}

}

To prevent over-fitting (due to the potential over-parameterization with the large number of model parameters from all clients and server as a global model), we adopt standard stopping criteria, i.e., stop training when the model converges or the validation accuracy starts to drop more than $2\%$. 
More details about setups and hyperparameters are in \cref{app:exp_details}.
 
\subsubsection{Baselines}
We (1) compare \ouradmm with  \vafl~\cite{chen2020vafl}  \oursgd~\cite{vepakomma2018split}, and \fedbcd~\cite{liu2022fedbcd}
under  \textit{w/ model splitting} setting;
(2) compare \ouradmmjoint with \fdml~\cite{hu2019fdml} 
under \textit{w/o model splitting} setting. 
Particularly, in \vafl, the server aggregates local embeddings using their linear combination with learnable aggregation weights, and subsequently use these aggregated embeddings as input for the server model. Both \oursgd and \fedbcd  utilize concatenated local embeddings as server model input.  
 Notably, in \vafl and \oursgd, the clients only perform one step of local update based the partial gradients from the server. Conversely, \fedbcd employs the same (stale) partial gradients for  $\tau$ local updates. 
 In \fdml, the server averages local logits, and sends aggregated logits  back to clients at eatch communication round. The clients, who owns the copies of labels, can calculate the local gradient and execute one step of local update.
 Our empirical findings suggest that our ADMM-based methods outperform the aforementioned methods, due to the multiple local updates that utilize ADMM-related variables.

For fair comparisons, we use the same local models for all methods. Under w/ model splitting setting,  owing to the strong feature extraction power of local DNN models, we utilize the linear model as server model by default.
Additionally, we evaluate all methods with the non-linear server model, as detailed in \cref{sec:exp_nonlinearhead}.

\revise{We further compare the utility of various VFL methods under differential privacy.
Existing VFL frameworks (see \cref{tb:compare_related_work}) focus on sample-level DP~\cite{chen2020vafl,hu2019fdml,hu2019learning,tran2023privacy,cohendifferentially, ranbaduge2022differentially}, where neighboring datasets are defined as those differing by a single sample in a client's local dataset. In particular, \vafl~\cite{chen2020vafl} adds random noise to the output of each local embedding convolutional layer; VFL-PBM~\cite{tran2023privacy} quantizes local embeddings into differentially private integer vectors;
\fdml~\cite{hu2019fdml} and \linearadmm~\cite{hu2019learning} add noise to local outputs. However, these methods lack exact privacy budget evaluations, providing only empirical utility under different levels of noise.  Additionally, \cite{ranbaduge2022differentially} perturbs local model weights to satisfy DP. However, it requires bounding the sensitivity of each layer's weights in the local model. 
To enable a fair comparison of VFL methods under DP guarantees, we evaluate  
all methods through our proposed DP mechanisms with perturbed local outputs to satisfy \revise{client-level} $(\epsilon,\delta)$-DP guarantee.
Notably, a mechanism satisfying $(\epsilon, \delta)$ client-level DP also satisfies $(\epsilon, \delta)$  sample-level DP based on their definitions. Since client-level DP offers stronger privacy protection, it has gained widespread adoption in FL~\cite{geyer2017differentially,mcmahan2018learning,agarwal2018cpsgd,bhowmick2018protection,xie2023unraveling}. 
} Furthermore, we evaluate all VFL methods using label DP mechanism \alibi~\cite{malek2021antipodes} to separately satisfy the $\epsilon$-label DP guarantee.
We report the averaged results of {three times} of experiments with different random seeds.

\vspace{-2mm}
\subsection{Evaluation on Vanilla VFL}
\label{sec:evaluation}
In this section, we evaluate the ADMM-based methods and baselines in terms of convergence rate, model performance and communication costs. Also, we show the generality of \ouradmm under non-linear server heads, and study \ouradmm performance  under different ADMM  penalty factor $\rho$.

\subsubsection{Convergence rates and model performance} 
\cref{fig:vanilla_vfl} shows the convergence rates of all methods, where two \name algorithms consistently outperform baselines.
Concretely, 
\textbf{(1)} 
our ADMM-based methods converge faster and achieve higher accuracy than gradient-based baselines,  especially on \cifar. 
This is because the multiple local updates enabled by ADMM lead to higher-quality local models at each round, thereby speeding up the convergence.
\textbf{(2)} \ouradmm outperforms \fedbcd under various local steps. This superiority can be attributed to the use of ADMM-related variables for local updates $\tau$ in \ouradmm, which is more effective than stale partial gradients in \fedbcd.
\textbf{(3)} When \# of local steps $\tau$ is larger, ADMM-based methods converge faster as the local models can be trained better with more local updates at each round. 
{

\newlength{\nonlinearheightc}
\settoheight{\nonlinearheightc}{\includegraphics[width=.26\linewidth]{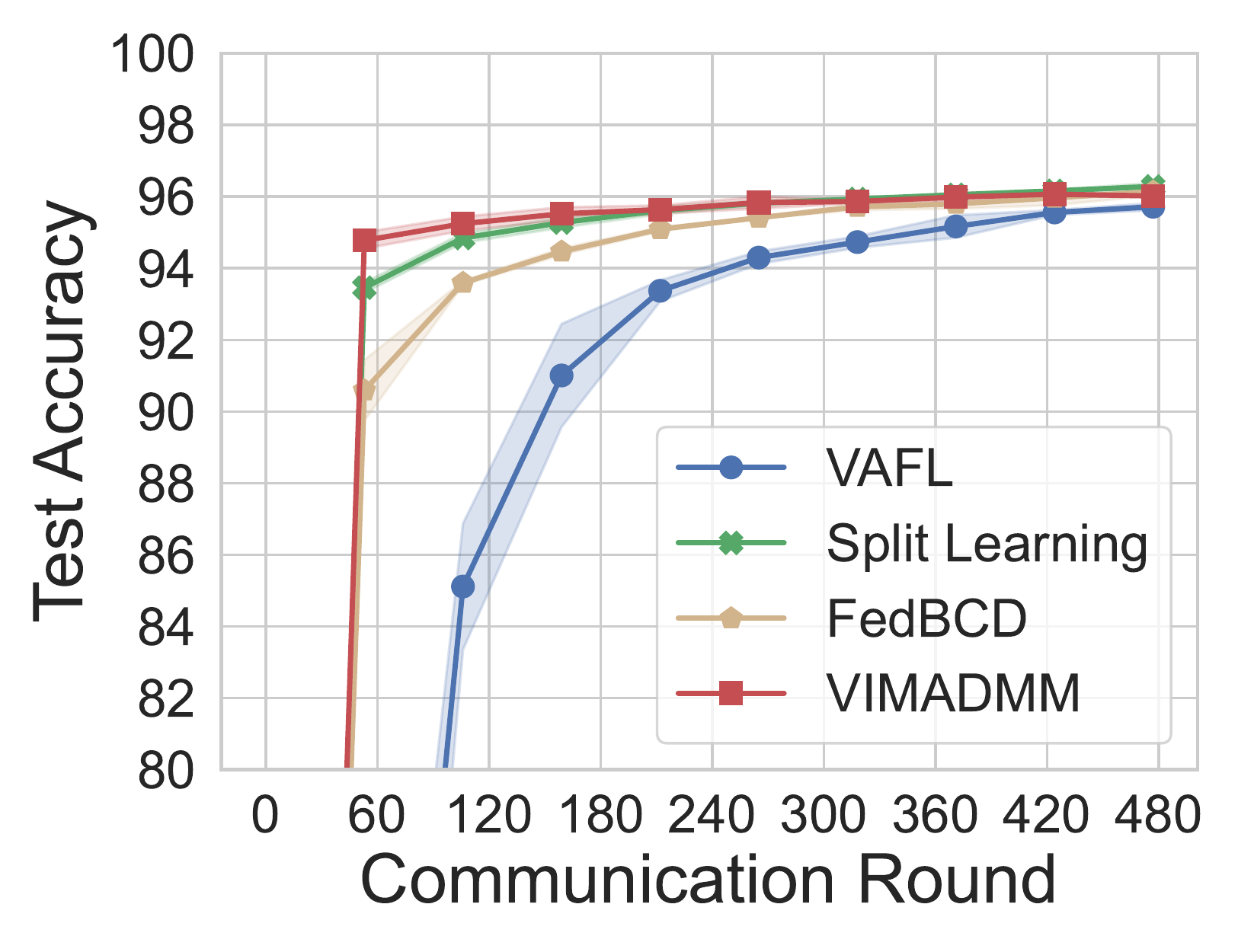}}%

\newcommand{\rowname}[1]%
{\rotatebox{90}{\makebox[\nonlinearheightc][c]{\tiny #1}}}
\renewcommand{\tabcolsep}{10pt}

\begin{figure*}[t]
\centering
\vspace{-7mm}
\begin{subtable}{0.9\linewidth}
\centering
\resizebox{\linewidth}{!}{%
\begin{tabular}{c@{}c@{}c@{}c@{}c@{}c@{}}
\makecell{\tiny{\mnist}}      & \makecell{\tiny{\cifar}} & \makecell{\tiny{\nus}}   & \makecell{\tiny{\modelnet}} \\[-0.6ex]
\includegraphics[height=\nonlinearheightc]{ICLRplots/nonlinear/mnist_split_2.pdf}&
\includegraphics[height=\nonlinearheightc]{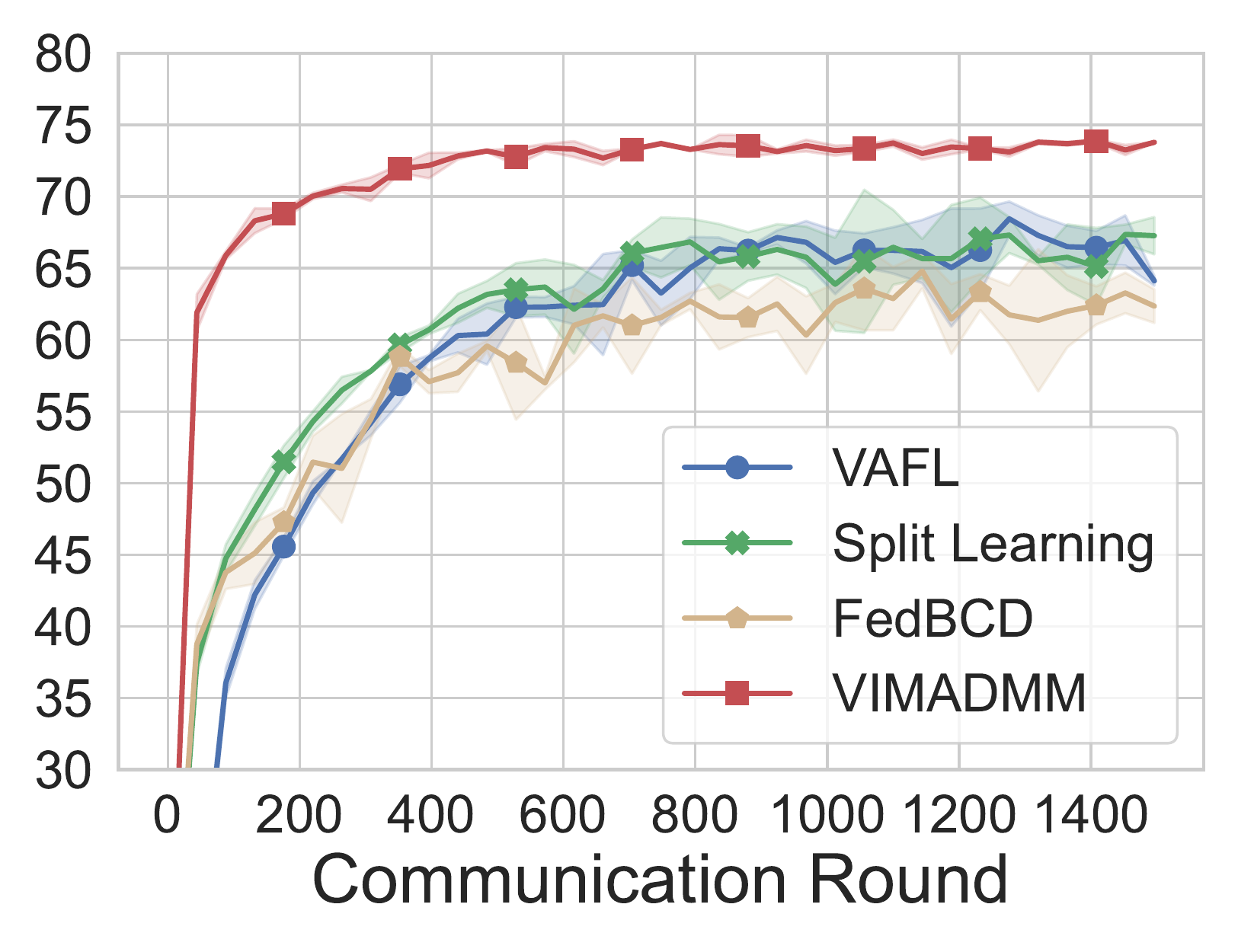}&
\includegraphics[height=\nonlinearheightc]{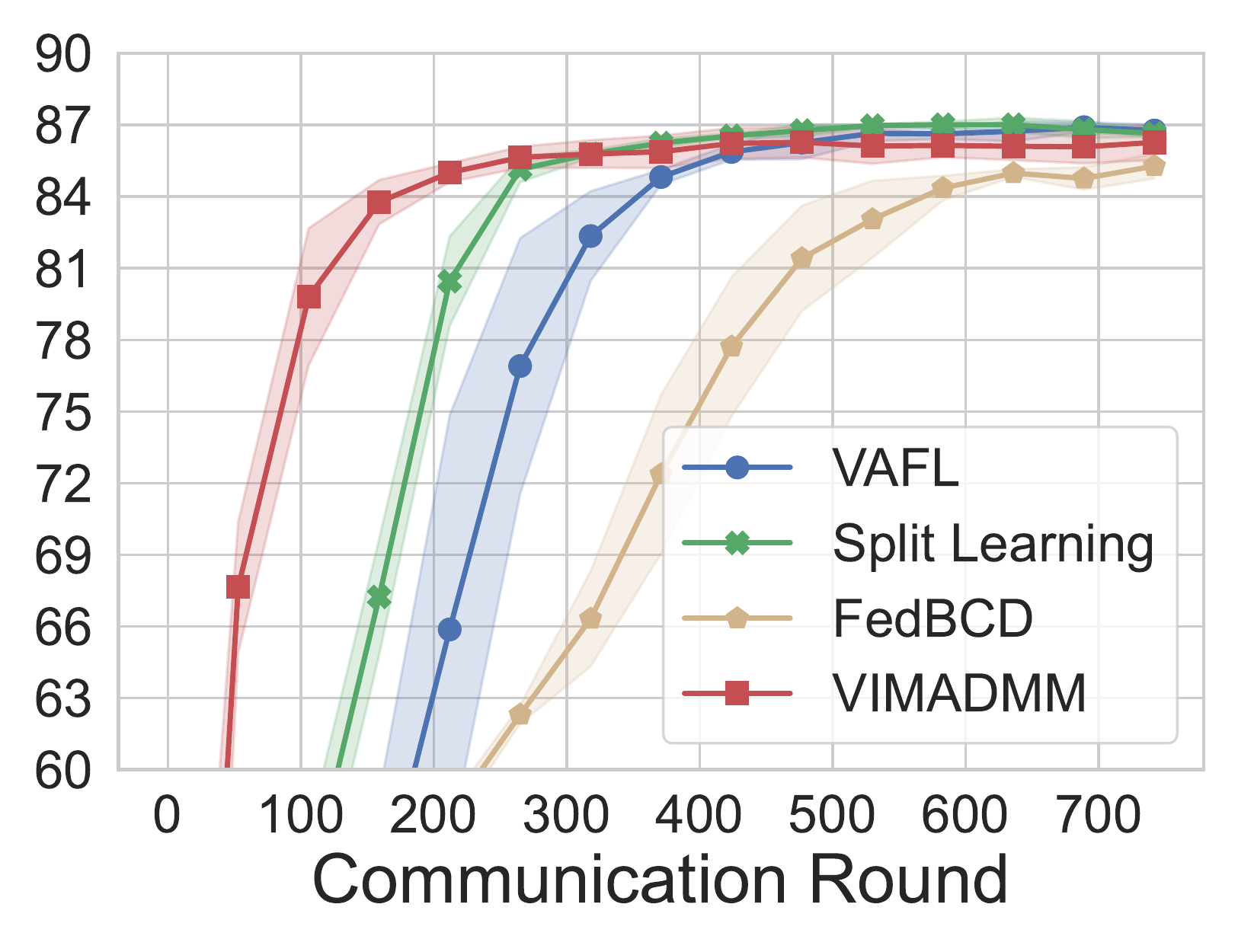}&
\includegraphics[height=\nonlinearheightc]{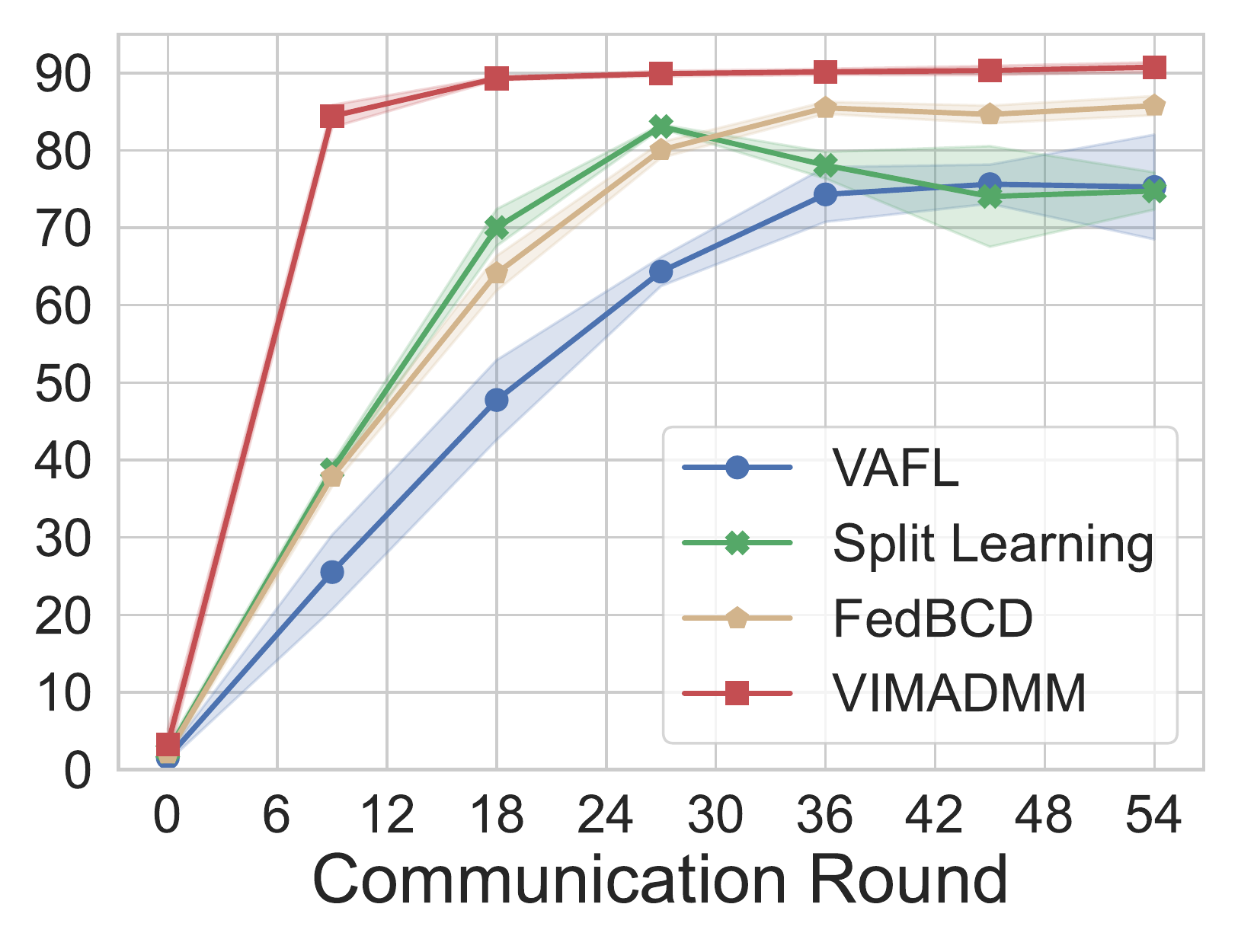}\\[-0.5ex]
\end{tabular}
}
\end{subtable}
\vspace{-3mm}
\caption{ Performance comparison when the server has the non-linear MLP model. ADMM-based method still outperforms other baselines under general architectures with the non-linear server model. }
\label{fig:vanilla_vfl_nonlinear}
\vspace{-3mm}
\end{figure*}

}

Moreover, we empirically compare \ouradmm with  \linearadmm~\cite{hu2019learning}. While both VFL methods are rooted in ADMM, we propose new VFL optimization objective and algorithm with multiple heads that enable the ADMM decomposition for 
 practical DNN training under model splitting. Results in \cref{tab:linearadmm} show that VIMADMM consistently outperforms \linearadmm on \mnist and \nus. Compared to DNNs enabled by \ouradmm, the limitations of logistic regression in \linearadmm would be more evident when applied to more complex datasets like \cifar and \modelnet.
  
\begin{table}[h]
\vspace{-2mm}
  \centering
   \caption{\small Performance comparison between \ouradmm and \linearadmm~\cite{hu2019learning}. \ouradmm achieves higher accuracy.}
     \label{tab:linearadmm}
\resizebox{0.65\linewidth}{!}{%
\begin{tabular}{ccccccccccccccc}\toprule
 & MNIST & NUS-WIDE \\\midrule  
\ouradmm & 97.13 & 88.51 \\
\linearadmm & 91.65 & 84.63
  \\\bottomrule  
\end{tabular}
}
\vspace{-3mm}
\end{table}

{

\begin{table*}[t]
\centering
\begin{minipage}[t]{\linewidth}
  \captionof{table}{\small Communication costs (in megabytes) comparison. \ouradmm requires lower communication costs per round than baselines under w/ model splitting setting.  ADMM-based methods require lower communication costs to achieve the same target accuracy performance.
  }
   \label{tab:communication}
\centering
 \resizebox{0.88\columnwidth}{!}{%
 \begin{tabular}{cccccccccccccccc}
    \toprule
 \multirow{3}{*}{VFL setup}  &  \multirow{3}{*}{Method}  &   \multicolumn{3}{c}{Comm. costs  per round} & \multicolumn{4}{c}{Comm. costs to reach target accuracy performance}\\
\cmidrule(lr){3-5}\cmidrule(lr){6-9}
&&  Each client &  Server to & \multirow{2}{*}{Total}  & \mnist  & \cifar  & \nus & \modelnet \\
&& to server & each client && ($\geq 96.0\%$) &($\geq 65.0\%$) &  ($\geq 85.0\%$) & ($\geq 89.0\%$)
\\\midrule
{\footnotesize \multirow{4}{*}{w/ model splitting}} & {\footnotesize \multirow{1}{*}{\vafl}}
&  0.23  & 0.23 & 0.46     &   4520.12	&5381.40	&397.37	&134.96 \\
[0.05em] 
&{\footnotesize \multirow{1}{*}{\oursgd}}
&  0.23 & 0.23 &  0.46&  1738.51 &	4082.44	&198.69	&84.35 \\
[0.05em] 
&{\footnotesize \multirow{1}{*}{\fedbcd}}
&  0.23 & 0.23 &  0.46&  4867.82  &2597.92& 397.37 & 118.09 \\
[0.05em] 
&{\footnotesize \multirow{1}{*}{\ouradmm} }
&  \multirow{1}{*}{0.23} &  \multirow{1}{*}{\textbf{0.08 }}  & \multirow{1}{*}{\textbf{0.31}} &  \textbf{233.36} 	& \textbf{124.54} 
 & \textbf{66.67 } & \textbf{11.32}  \\\midrule[0.05em]
{\footnotesize \multirow{2}{*}{w/o model splitting}} & {\footnotesize \multirow{1}{*}{\fdml}}
    &  0.039 &\textbf{0.039} &  \textbf{0.078} &  405.13	& 617.76& 33.07	& 89.13  \\
    [0.05em]
  &  {\footnotesize \multirow{1}{*}{\ouradmmjoint} }
    &  0.039 & {0.078} & {0.117} &  \textbf{86.81}	& \textbf{46.33} & \textbf{24.8 }&\textbf{ 8.42} \\
    
            \bottomrule

\end{tabular}%
  }
\end{minipage}
\vspace{-5mm}
\end{table*}
}

\subsubsection{Non-linear server heads} 
\label{sec:exp_nonlinearhead}
To demonstrate the  generality and applicability of \ouradmm,  we evaluate \ouradmm when the server model is non-linear. Specifically, the head 
 consists of multiple fully-connected layers accompanied by Dropout layers with 0.25 dropout rate and ReLu activation functions.  For a fair comparison, we also use MLP server model architecture for other baseline methods.
 We use 3 layered MLP for \nus and 2 layered MLP for other datasets. 
 The evaluation results in \cref{fig:vanilla_vfl_nonlinear}  show that our method still outperforms other baselines under general architectures with the non-linear server model. 

\subsubsection{Communication costs} 
Here we report the memory of parameters communicated between clients and the server to evaluate communication cost in Table~\ref{tab:communication}.  We use batch size 1024 and local embedding size 60 for all datasets. The overall embedding size scales with the number of clients.
From Table~\ref{tab:communication}, we observer that
\textbf{(1)} for each round, all methods under w/ model splitting setting  have the same number of parameters sent from each client to the server (i.e., 0.23 MB for a batch of embeddings), and \ouradmm has a smaller number of parameters sent from server to each client   (i.e., 0.08 MB in total for a batch of dual variables, residual variables as well as one corresponding linear head)  than \vafl, \oursgd and \fedbcd (i.e., 0.23 MB for a batch of gradients w.r.t. embeddings). 
\textbf{(2)} With smaller \# of communicated parameters at each round and faster convergence (i.e., smaller \# of communicated rounds to achieve a target accuracy), \ouradmm requires significantly lower communication costs than baselines. For example, to achieve 65.0\%  accuracy on \cifar, \vafl needs 5381.4 MB while \ouradmm only requires 124.54 MB, which is about 43x lower costs. Here we use $\tau=20,30,20,5$ for the four datasets respectively.
\textbf{(3)} The results under w/o model splitting setting  demonstrates that \ouradmmjoint incurs lower communication costs than \fdml to achieve the same accuracy, due to faster convergence with multiple local updates.
\textbf{(4)} We note that the communication cost under w/o model splitting setting is generally lower than w/ model splitting setting, which is because the local logits  have a lower dimension than local embeddings, i.e., $d_c < d_f$.

\subsubsection{Effect of penalty factor $\rho$}

In ADMM-based methods, we introduce one hyper-parameter -- penalty factor $\rho$.
Here we study the test accuracy of \ouradmm with different penalty factor $\rho$. The results in \cref{fig:rho} \revise{of \cref{app:exp_details}} show that \ouradmm is not sensitive to $\rho$ on four datasets, and we suggest that the practitioners choose the optimal $\rho$ from 0.5 to 2, which does not influence the test accuracy significantly.  

 \subsubsection{Evaluation on long-tail datasets}
Long-tail datasets are characterized by a significant imbalance, where minority classes have far fewer samples than majority ones. This horizontal imbalance is distinct from the challenges addressed by VFL, where  \textit{the same sample (whether it belongs to a majority or minority class)  is vertically split across multiple clients.} 
We compared the \ouradmm model, which consists of $M$ local models followed by a server model, with a reference model in a centralized setting. This reference model has the same model size as one local model coupled with a server model. 
The results in \cref{tab:longtail} demonstrate that  \ouradmm is still effective on challenging long-tail training datasets, yielding results comparable to those of the reference model in a centralized setting. 
We defer more discussion and detailed experimental setups to \cref{app:exp_details}.

\begin{table}[h]
  \centering
  \vspace{-2mm}
   \caption{\small Accuracy and fairness (measured by Standard Deviation of class-wise accuracy) on balanced data and long-tail data.}
   \label{tab:longtail}
\resizebox{\linewidth}{!}{%
\begin{tabular}{ccccccccccccccc}\toprule
& balanced \mnist  & long-tail \mnist & balanced \cifar  & long-tail \cifar  \\\midrule

\ouradmm   &
97.13 (0.76) &
95.69 (1.58) & 
75.25 (9.17) &
62.81 (15.27)\\

Reference model  in  centralized setting  &
98.19 (0.45) &
95.02 (2.70)&
77.61 (9.20)  &
66.11 (15.29) 
  \\\bottomrule  
\end{tabular}
}
\vspace{-3mm}
\end{table}

{

\begin{table*}[tbp] 
\centering
 \begin{minipage}{0.83\linewidth}
  \caption{\small Utility of VFL methods under \textit{user-level DP}. ADMM-based methods maintain higher utility.}
     \label{tab:dp-utility}
     \resizebox{1\columnwidth}{!}{%
     \centering
\begin{tabular}{ccccccccccccccc}\toprule
  \multirow{2}{*}{VFL setup} &   \multirow{2}{*}{Method}       &  \multicolumn{3}{c}{\mnist} &    \multicolumn{3}{c}{\cifar}  &    \multicolumn{3}{c}{\nus} &     \multicolumn{3}{c}{\modelnet}        \\\cmidrule(lr){3-5} \cmidrule(lr){6-8}  \cmidrule(lr){9-11}\cmidrule(lr){12-14}
&     & $\epsilon=\infty$ & $\epsilon=8$ & $\epsilon=1$  & $\epsilon=\infty$ & $\epsilon=8$ & $\epsilon=1$  & $\epsilon=\infty$    & $\epsilon=8$ & $\epsilon=1$ & $\epsilon=\infty$    & $\epsilon=8$ & $\epsilon=1$  \\\midrule
  \multirow{4}{*}{w/ model splitting} &  \vafl      & 96.86 & 22.29 & 11.31     & 66.39 & 16.82 & 14.91    & 87.81    & 38.27 & 38.19    & 90.07    & 4.66  & 4.29    \\
&\oursgd   & 96.92 & 56.53 & 16.77   & 68.32 & 21.09 & 15.8  &   88.25    & 38.29 & 33.05   & 89.98    & 18.19 & 6.28     \\
& \fedbcd &  96.59 & 66.07 & 65.05 & 71.2 &70.67  & 55.42 & 87.59 &42.95  & 41.02  & 89.87 & 88.3 & 87.02 \\
&\ouradmm   & 97.13 & 92.35 & 92.09      & \textbf{75.25} & \textbf{73.83} & \textbf{61.65}   & \textbf{88.51 }   & 83.77 &83.51    & \textbf{91.32 }  & \textbf{91.29}  & \textbf{91.18}  \\\midrule
  \multirow{2}{*}{w/o model splitting} &  \fdml      & 97.06 & 92.02 & 85.01    & 66.8  & 41.07 & 35.25   & 87.67    & 79.58 & 67.38      & 89.86    & 54.7  & 43.4     \\
& \ouradmmjoint & \textbf{97.37} & \textbf{92.71} & \textbf{92.33}   & 74.48 & 72.36 & 58.64   & 88.46    & \textbf{84.94} &\textbf{84.88}    & 91.13   & 90.13 & 89.37  \\\bottomrule  
\end{tabular}
}
\hfill\hfill\hfill
\end{minipage}
\end{table*}

}

\begin{table*}[tbp] 
\vspace{-1mm}
\centering
 \begin{minipage}{0.83\linewidth}
  \caption{\small Utility of VFL methods under \textit{label-level DP}. ADMM-based methods maintain higher utility.}
     \label{tab:labeldp-utility}
     \resizebox{1\columnwidth}{!}{%
     \centering
\begin{tabular}{ccccccccccccccc}\toprule
  \multirow{2}{*}{VFL setup} &   \multirow{2}{*}{Method}       &  \multicolumn{3}{c}{\mnist} &    \multicolumn{3}{c}{\cifar}  &    \multicolumn{3}{c}{\nus} &     \multicolumn{3}{c}{\modelnet}        \\\cmidrule(lr){3-5} \cmidrule(lr){6-8}  \cmidrule(lr){9-11}\cmidrule(lr){12-14}
&     & $\epsilon=\infty$ & $\epsilon=2.8$ & $\epsilon=1.4$   & $\epsilon=\infty$ & $\epsilon=2.8$ & $\epsilon=1.4$  & $\epsilon=\infty$    & $\epsilon=2.8$ & $\epsilon=1.4$  & $\epsilon=\infty$    & $\epsilon=2.8$ & $\epsilon=1.4$   \\\midrule
  \multirow{4}{*}{w/ model splitting} &  \vafl      & 96.86 &94.27  & 51.68    & 66.39 & 54.6 &  38.44   & 87.81    & 85.77  &   60.41   & 90.07    &45.26  &   2.59 \\
&\oursgd   & 96.92 & 94.93 & 91.75   & 68.32 &57.12 & 49.71  &   88.25    &85.86  &  82.3    & 89.98    & 65.68  &  33.79   \\
& \fedbcd &  96.59 & 94.47 & 87.95 & 71.2 & 61.05  & 46.14  & 87.59 &85.62  &64.01& 89.87 &65.92 & 43.15 \\
&\ouradmm   & 97.13 &95.48  &92.8      & \textbf{75.25} & \textbf{65.07} &  52.97   & \textbf{88.51 }   &86.62  &    82.43  & \textbf{91.32 }  &76.70 &  \textbf{46.39  } \\\midrule
\multirow{2}{*}{w/o model splitting} &  \fdml      & 97.06 & 94.97 &   91.87  & 66.8  & 58.78  & 49.83   & 87.67    &85.79 &   82.37    & 89.86    & 64.99 &  29.74    \\
& \ouradmmjoint & \textbf{97.37} &\textbf{95.80}  &  \textbf{ 93.25}  & 74.48 &  64.04 &  \textbf{53.49}   & 88.46    &\textbf{86.74}  &   \textbf{82.71}   & 91.13   & \textbf{77.15} & 45.22  \\\bottomrule  
\end{tabular}
}
\hfill\hfill\hfill
\end{minipage}
\vspace{-5mm}
\end{table*}

 \subsubsection{Fairness implication}
A common fairness definition is to
enforce accuracy parity between protected groups~\cite{zafar2017fairness}. 
Here we study the fairness implications of \ouradmm on achieving accuracy parity,  at both the class and client levels:
\textbf{(1)}
when considering class-level accuracy parity, a fair model should exhibit equalized accuracy for each class~\cite{tarzanagh2023fairness,xu2021robust}, indicating that the model's accuracy is statistically independent of the ground truth label. We use the Standard Deviation of class-wise accuracy~\cite{xu2021robust} to evaluate fairness, where a lower value indicates higher fairness.
The results in \cref{tab:longtail} show that \ouradmm performs comparably or even better in fairness than the reference model in a centralized setting, across MNIST and CIFAR10 datasets with both balanced and long-tail distributions.
\textbf{(2)}
Furthermore, client-level accuracy parity is a prevalent criterion for fairness in FL ~\cite{li2021ditto,Li2020Fair}, measuring the
degree of uniformity in performance across clients. Notably,  in VFL, all clients share the same prediction for each sample, where each of them contributes partial features. Consequently, all clients inherently achieve the same accuracy, fulfilling client-level accuracy parity by the nature of VFL.

\vspace{-2mm}
\subsection{Evaluation on Differentially Private VFL}
\label{sec:exp_dpvfl}
We evaluate the utility of ADMM-based methods and baselines under \revise{client-level} DP and label DP, which protect the privacy of local features and server labels, respectively.

\subsubsection{Utility under \revise{client-level} DP (privacy of client data)}
We report the utility under  $\epsilon=8$ and $\epsilon=1$ \revise{client-level} DP. To ensure fair comparison,  we perform a grid search for the combination of hyperparameters, including noise scale $\sigma$,  clipping threshold $C$, and learning rate $\eta$,  for all methods (details are deferred to \cref{app:exp_details}). 
\cref{tab:dp-utility} shows that 
\textbf{(1)} the accuracy of ADMM-based methods under DP is on par with  the non-private accuracy ($\epsilon=\infty$) on \mnist, \nus and \modelnet. Nevertheless,  there is a discernible decrease of 13.6\% for \ouradmm on \cifar when $\epsilon=1$, which underscores the inherent privacy-utility trade-off for algorithms with formal DP privacy guarantees~\cite{abadi2016deep}.
\textbf{(2)} Our ADMM-based methods reach significantly higher utility than gradient-based methods, especially under {small} $\epsilon$.
We attribute this to the fact that ADMM-based methods converge in fewer rounds than gradient-based methods at each round, which is also evident in the non-DP setting as shown in \cref{fig:vanilla_vfl}. \textit{This rapid convergence is critical for DP, since the privacy budget $\epsilon$ is consumed quickly as communication rounds increase. 
}
The fast convergence and high utility of \ouradmm under DP compared to other baselines can be interpreted through two lenses. First, \textbf{multiple local updates} lead to a more effectively trained local model at each round. 
As a consequence, both \fedbcd and \ouradmm demonstrate a markedly better DP-utility tradeoff compared to \vafl and \oursgd, as illustrated in \cref{tab:dp-utility}. Furthermore, we explicitly investigate the influence of $\tau$ on the utility of \ouradmm under $\epsilon=1$ in \cref{tab:tau_userdp}. The results show that opting for a $\tau>1$ yields substantially enhanced accuracy than when $\tau=1$ (e.g.,  14.68\% improvement on \cifar).
Second, \textbf{update mechanism of ADMM} empowers clients to  \textit{independently} update their local models w.r.t the ADMM sub-objective (Eq.~\ref{eq:update_theta}). It is worth noting that during this local forward/backward computation based on Eq.~\ref{eq:update_theta},
clients do \textit{not} add noise locally, since local models always remain in their possession without sharing.
Clients only need to perturb local embeddings that are sent to the server (i.e., output perturbation). 
Consequently, even though  the server leverages these perturbed embeddings to derive  ADMM-related variables, the clients will re-calculate clean embeddings during  forward pass of Eq.~\ref{eq:update_theta} based on the received ADMM-related variables for local model updates.  This updating mechanism potentially facilitate convergence under DP. In contrast, gradients-based methods solely   rely on the partial gradients, which are derived from perturbed embeddings, for local update, leading to compromised utility. 

{
\renewcommand{\thesubfigure}{\alph{subfigure}}

\newlength{\histdimab}
\settoheight{\histdimab}{\includegraphics[width=.4\linewidth]{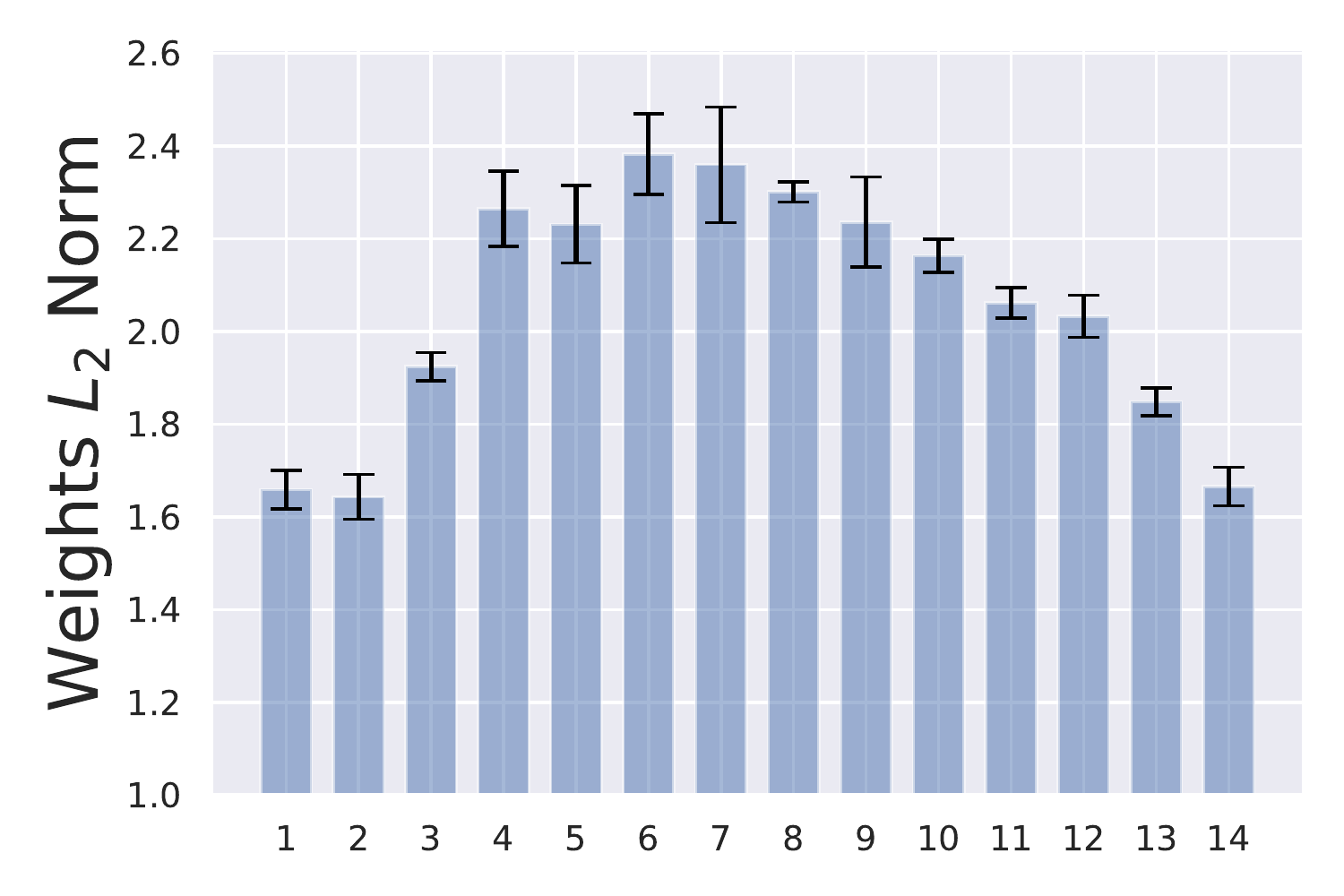}}%

\newlength{\histheightc}
\settoheight{\histheightc}{\includegraphics[width=.421\linewidth]{plots/acc/mnist_hist.pdf}}%

\newlength{\legendheighthist}
\setlength{\legendheighthist}{0.2\histheightc}%

\newcommand{\rowname}[1]%
{\rotatebox{90}{\makebox[\histdimab][c]{ #1}}}
\renewcommand{\tabcolsep}{10pt}

\begin{figure*}[t]
\vspace{-3mm}
\centering
{
\begin{subtable}{\linewidth}
\centering
\begin{tabular}{@{}p{5mm}@{}c@{}c@{}c@{}c@{}}
        & \makecell{{\mnist}}
        & \makecell{{\cifar}}
        & \makecell{{\nus}}
        & \makecell{{\modelnet}}
        \vspace{-3pt}\\
\rowname{\makecell{input features}}&
\includegraphics[height=\histheightc]{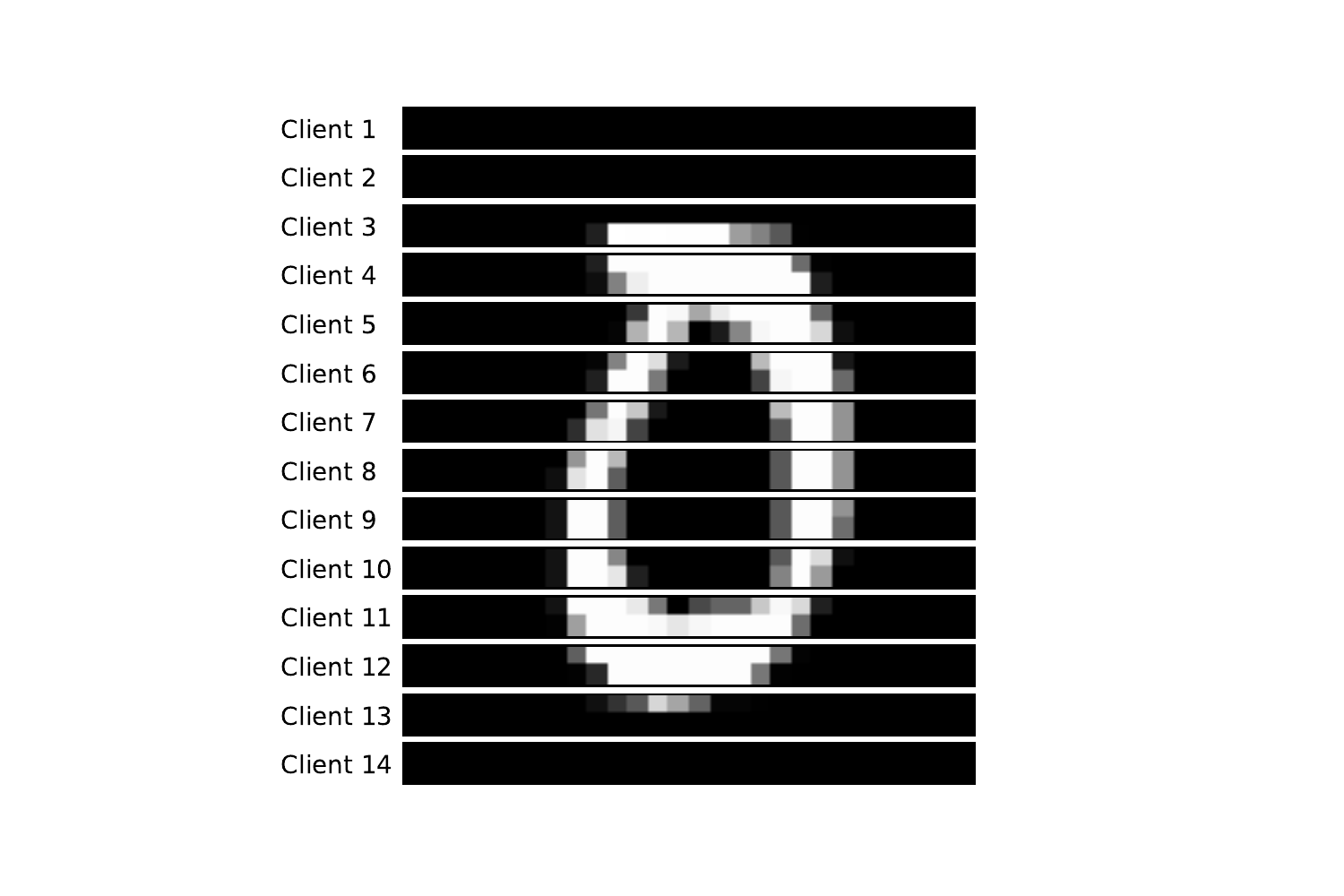}&
\includegraphics[height=\histheightc]{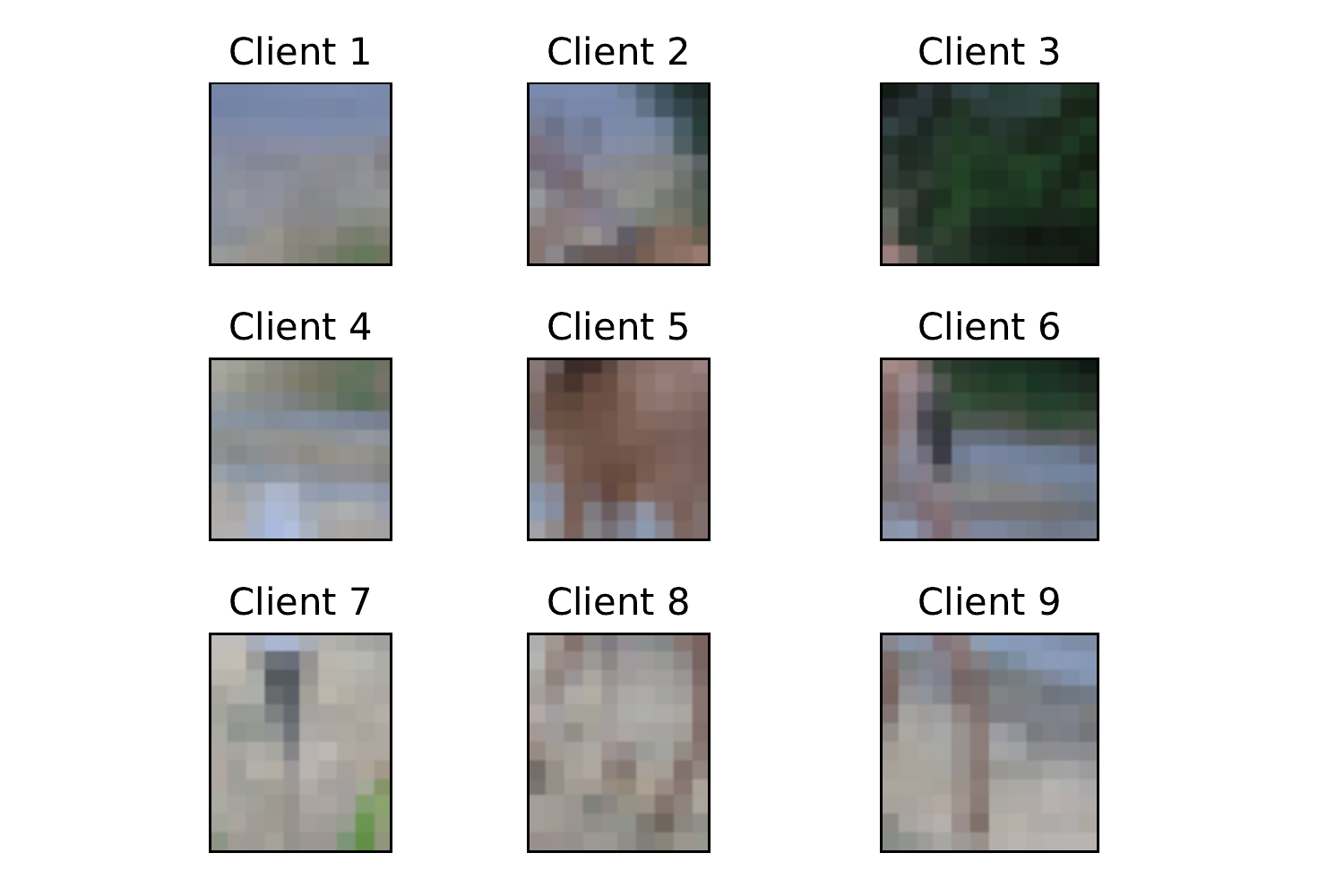}&
\includegraphics[height=\histheightc]{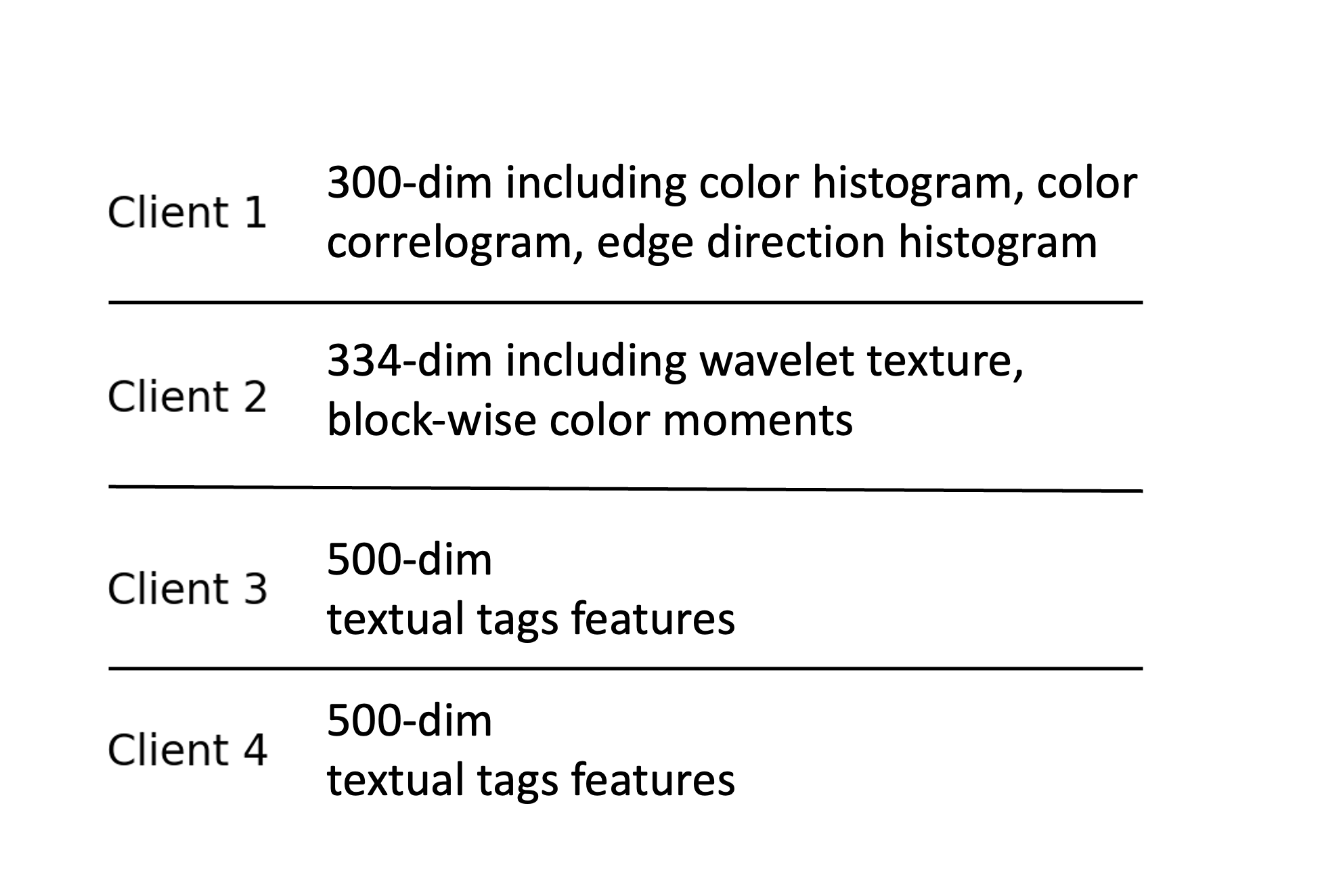}&
\includegraphics[height=\histheightc]{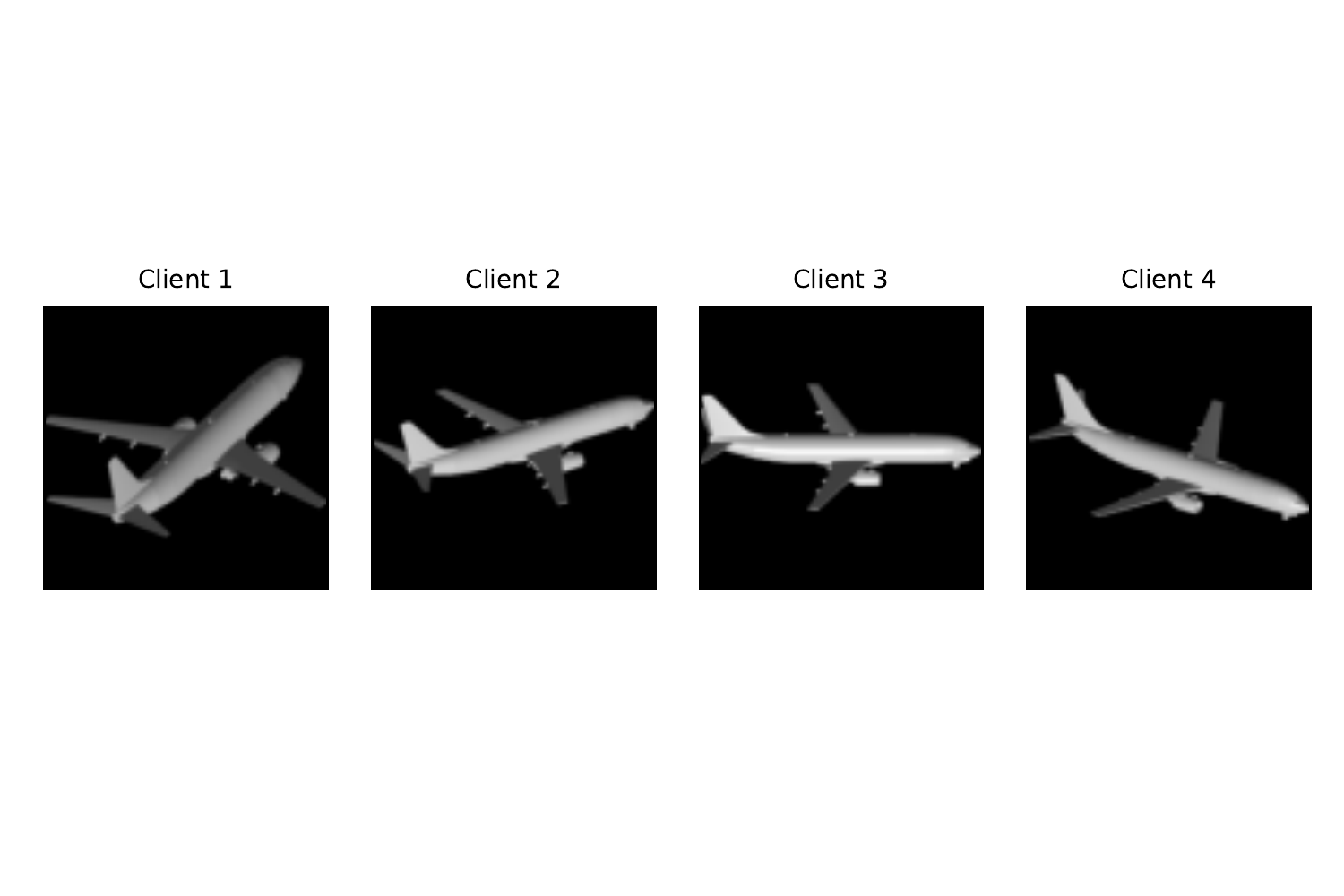}\\[-1.2ex]
\rowname{\makecell{clean}}&
\includegraphics[height=\histheightc]{plots/acc/mnist_hist.pdf}&
\includegraphics[height=\histheightc]{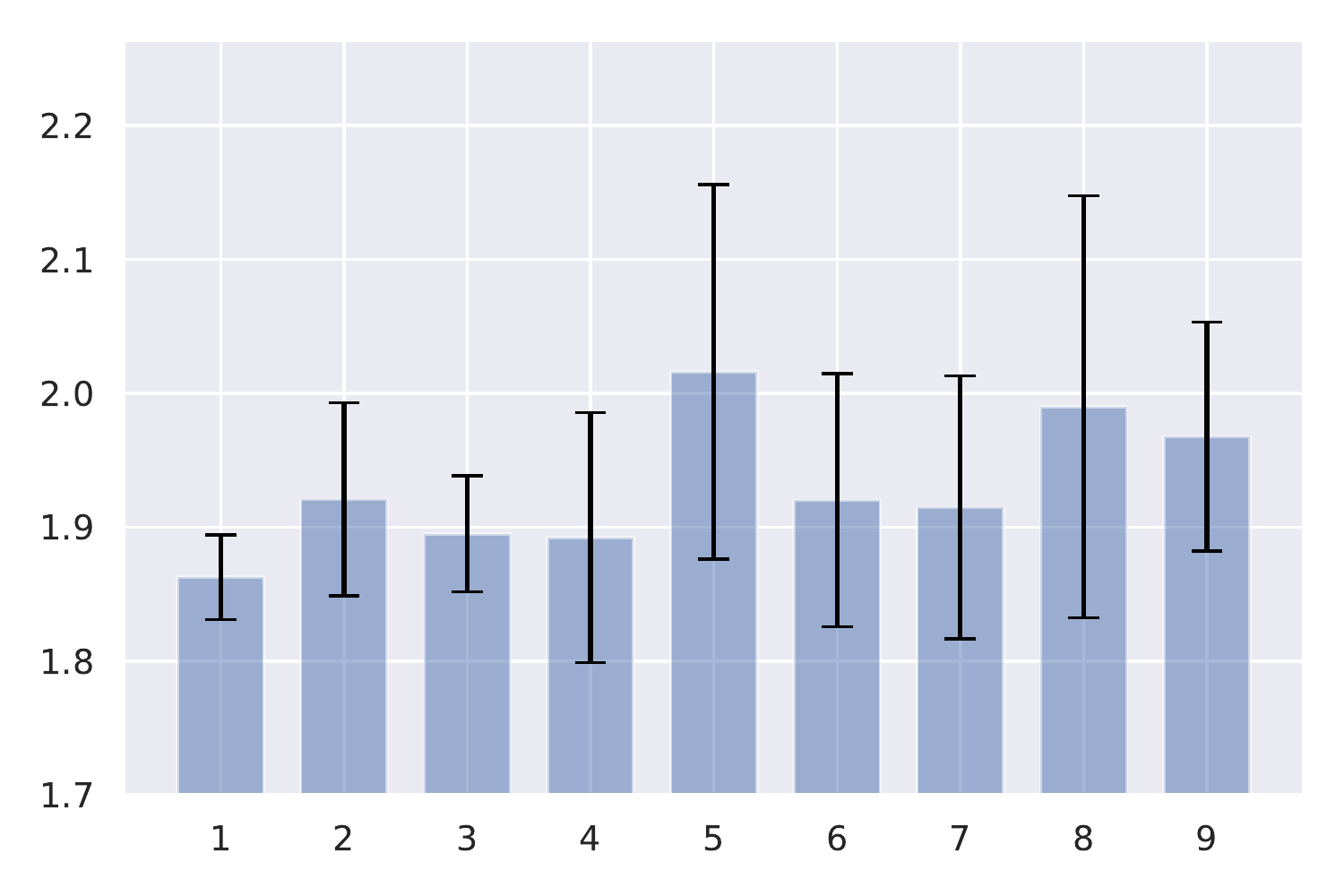}&
\includegraphics[height=\histheightc]{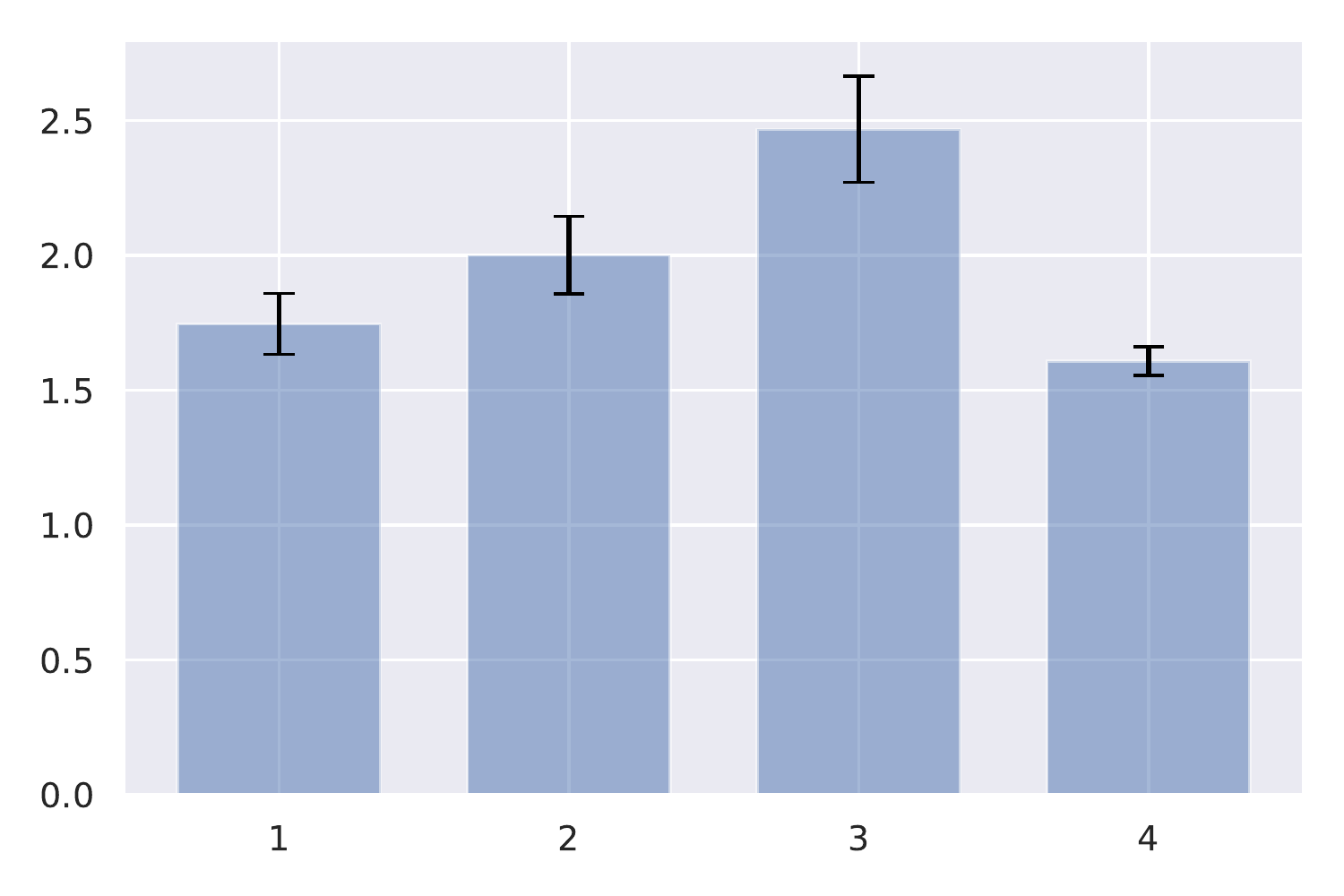}&
\includegraphics[height=\histheightc]{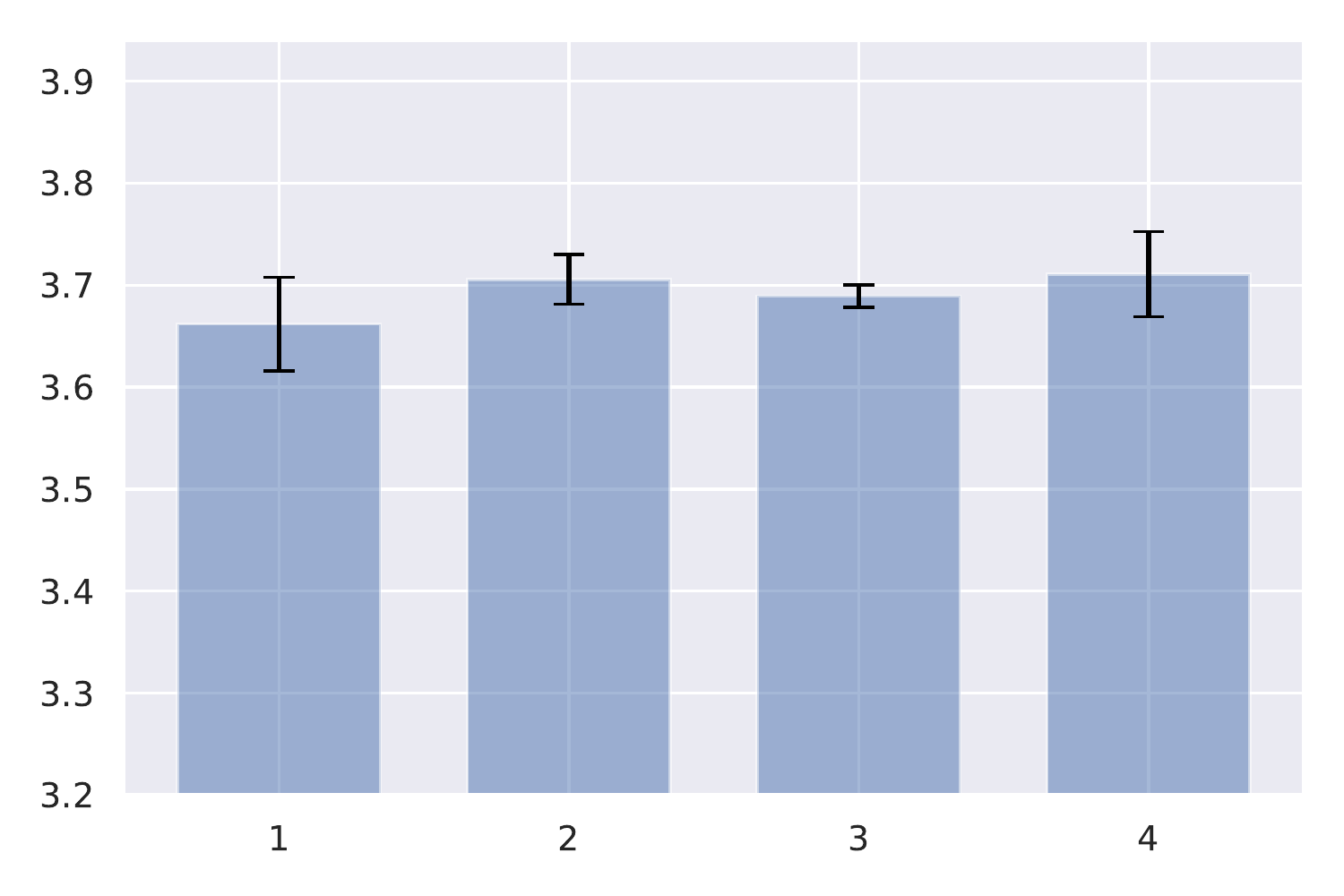}\\[-1.2ex]
\rowname{\makecell{noisy test client}}&
\includegraphics[height=\histheightc]{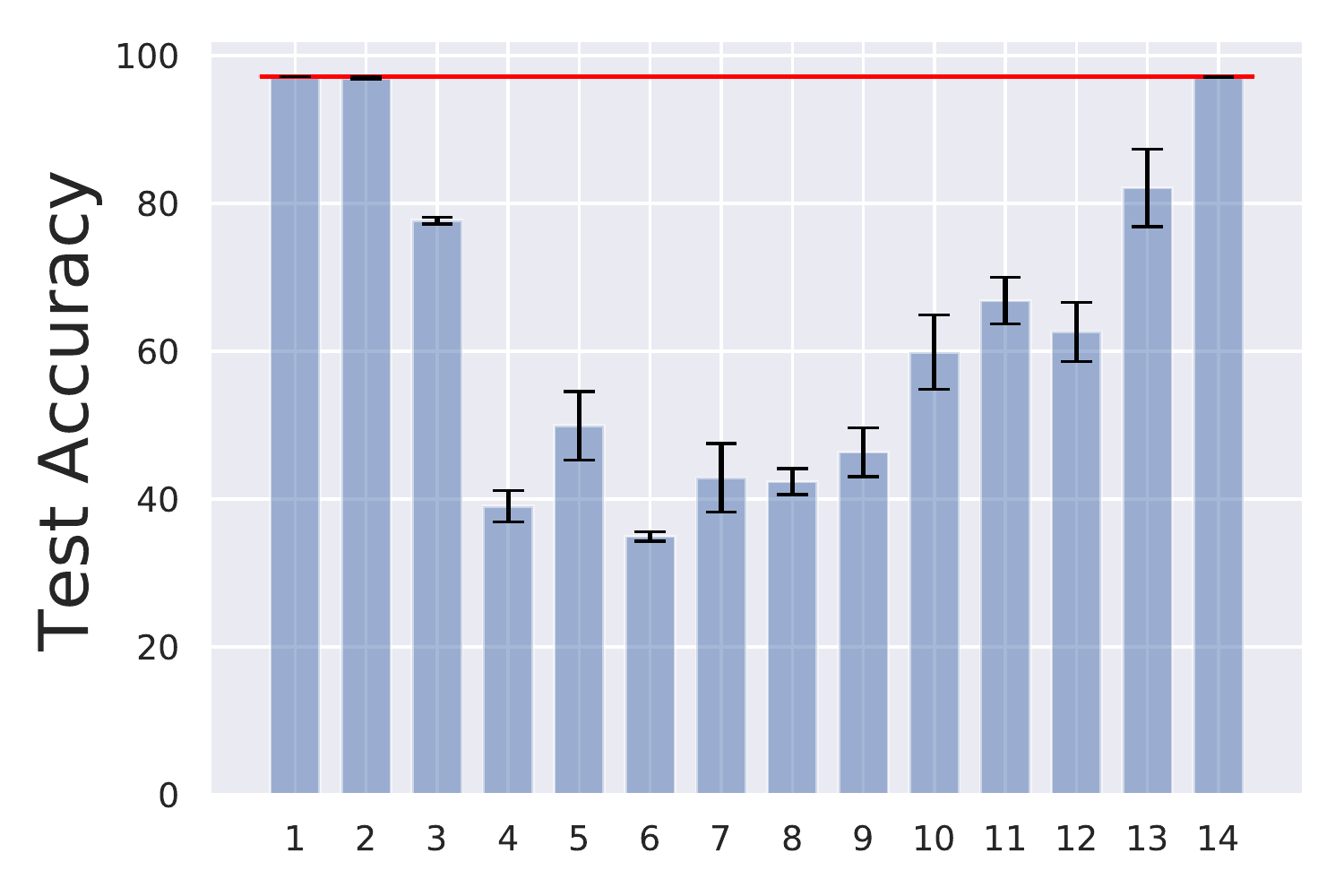}&
\includegraphics[height=\histheightc]{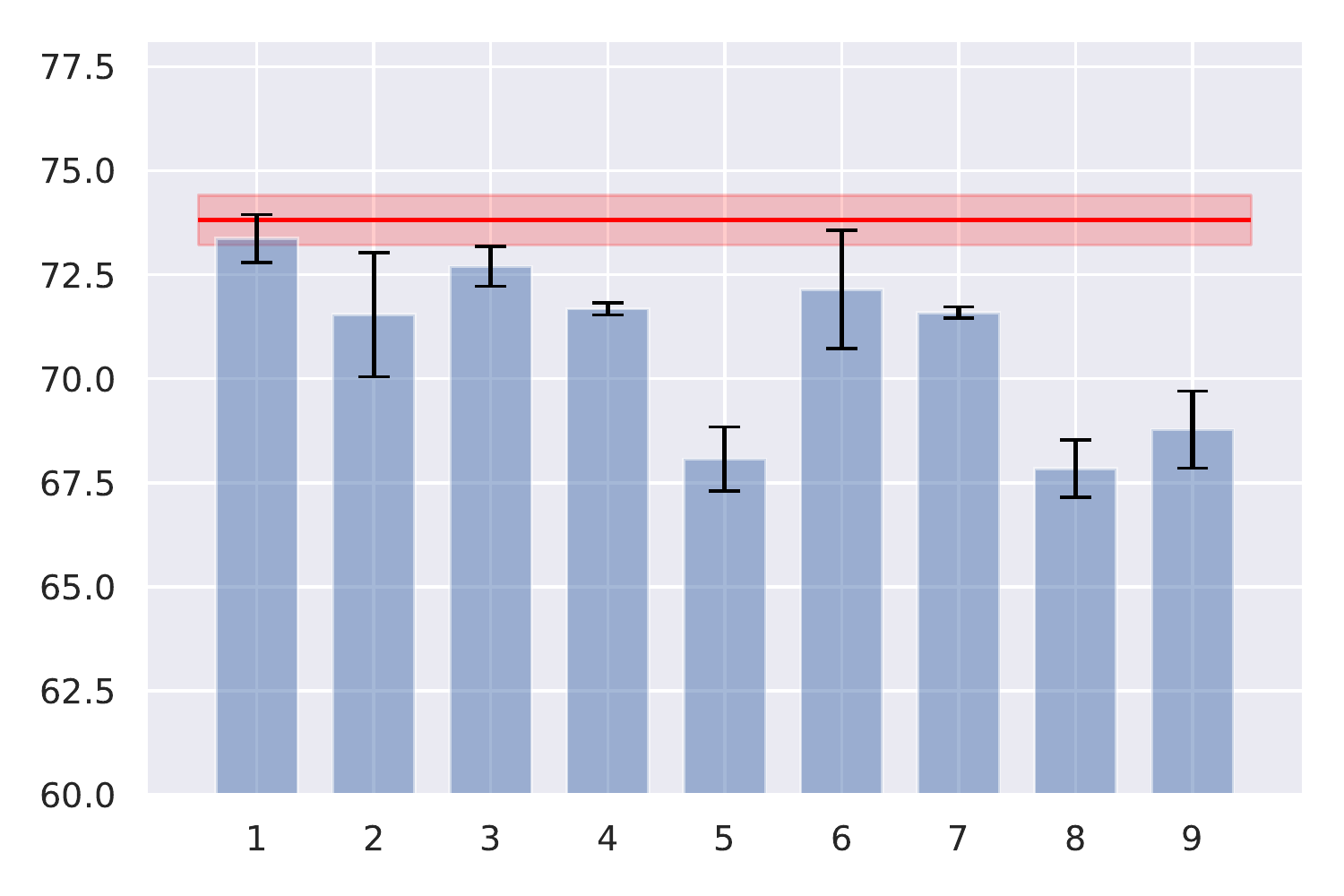}&
\includegraphics[height=\histheightc]{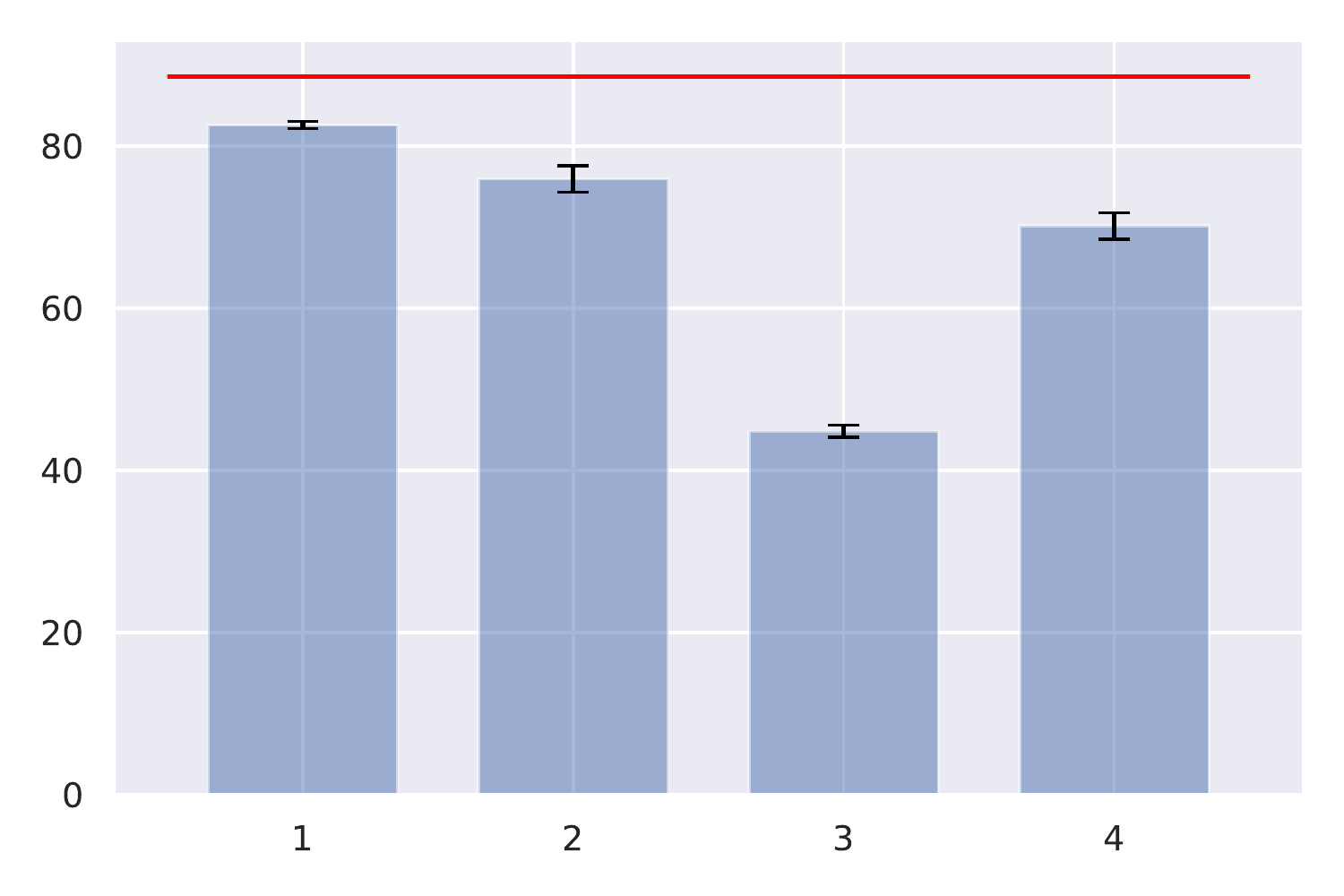}&
\includegraphics[height=\histheightc]{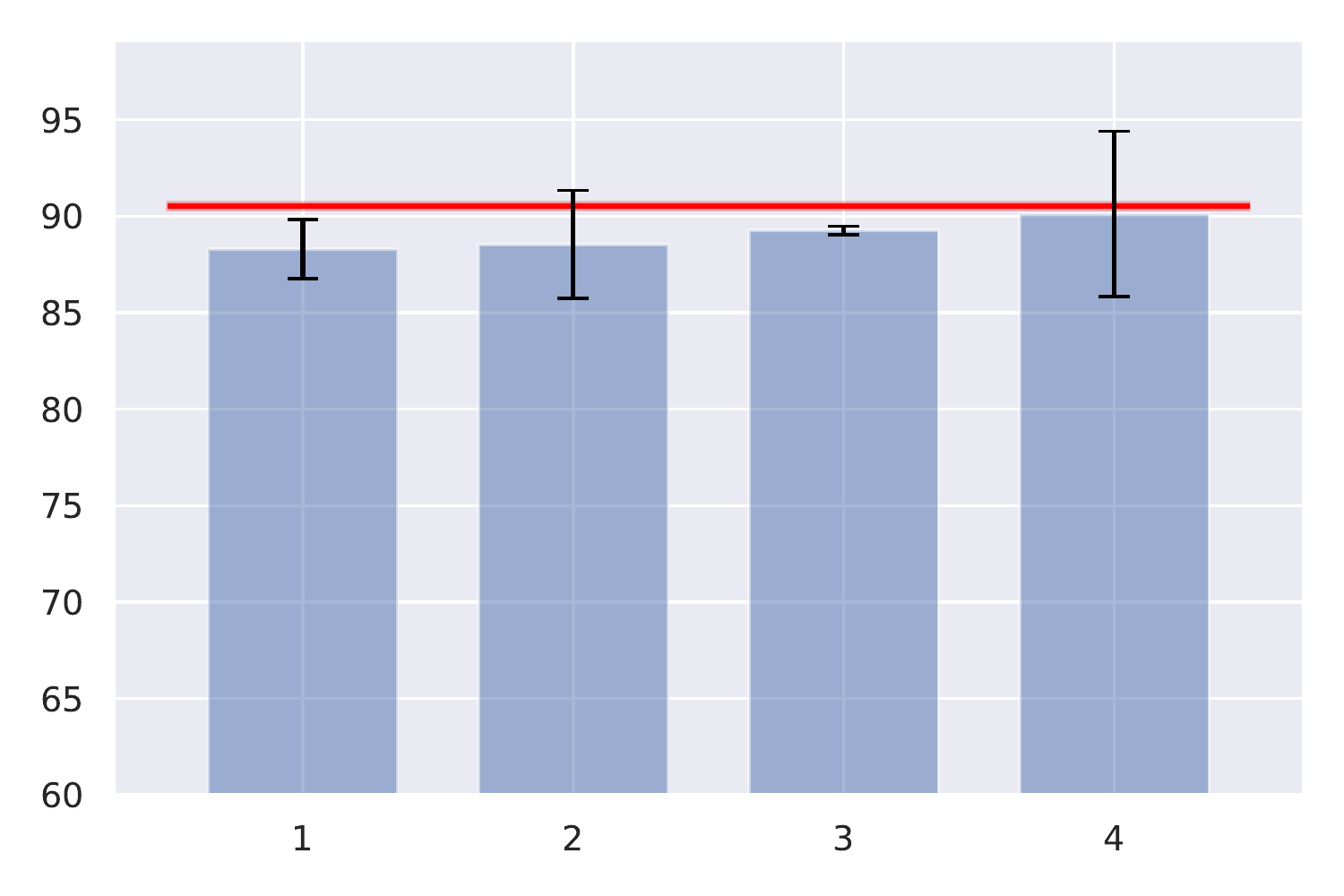}\\[-1.2ex]
\rowname{\makecell{denoising}}&
\includegraphics[height=\histheightc]{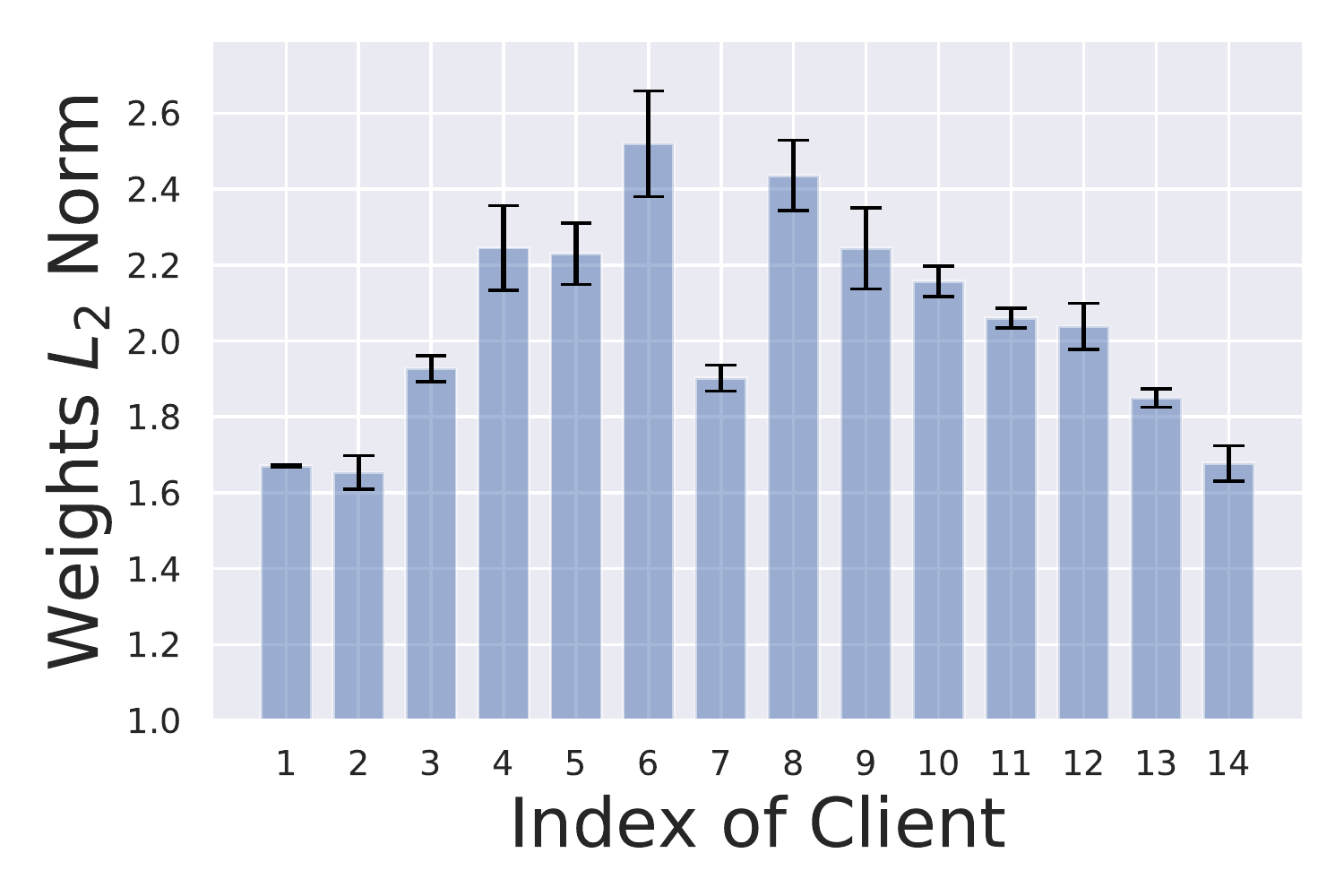}&
\includegraphics[height=\histheightc]{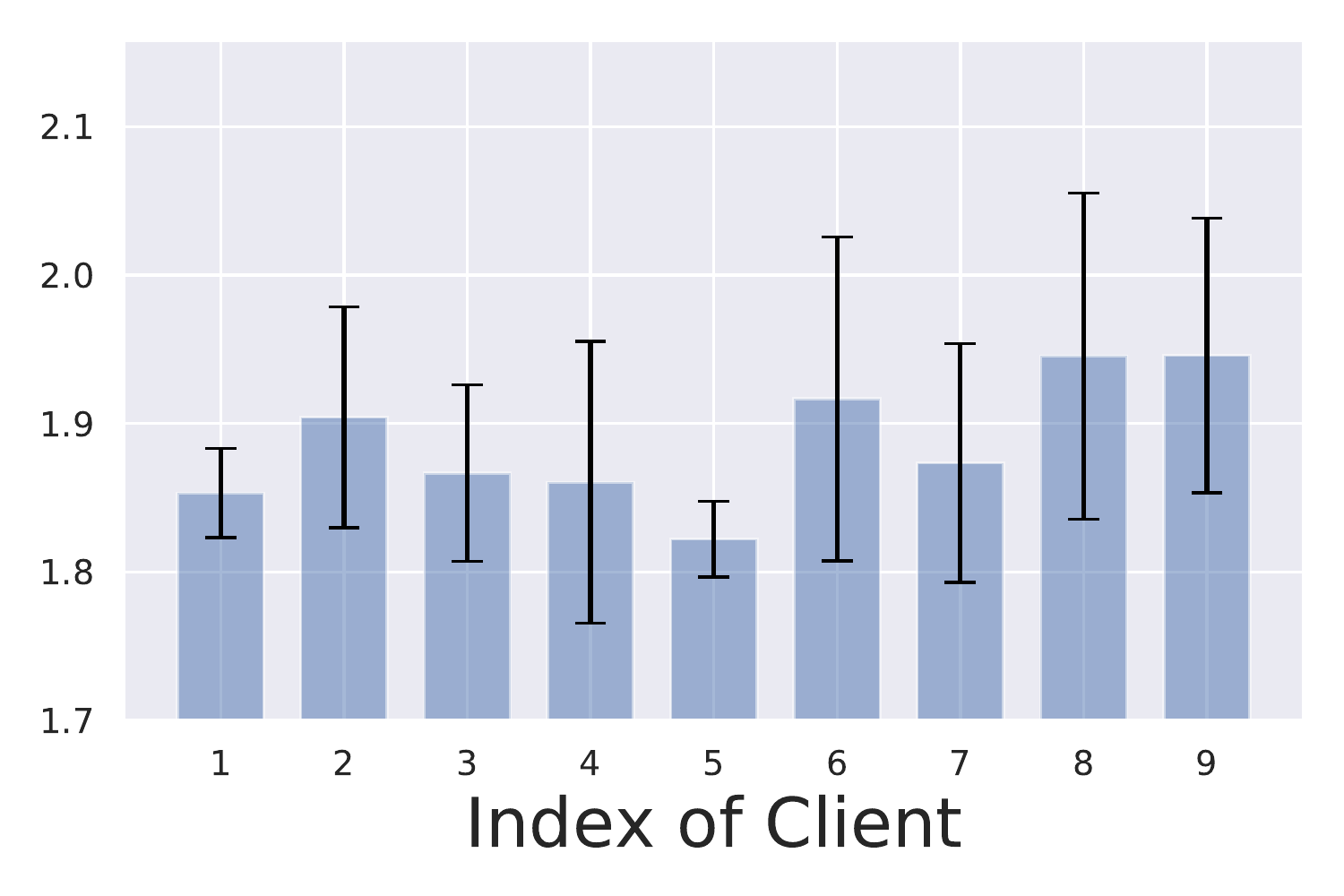}&
\includegraphics[height=\histheightc]{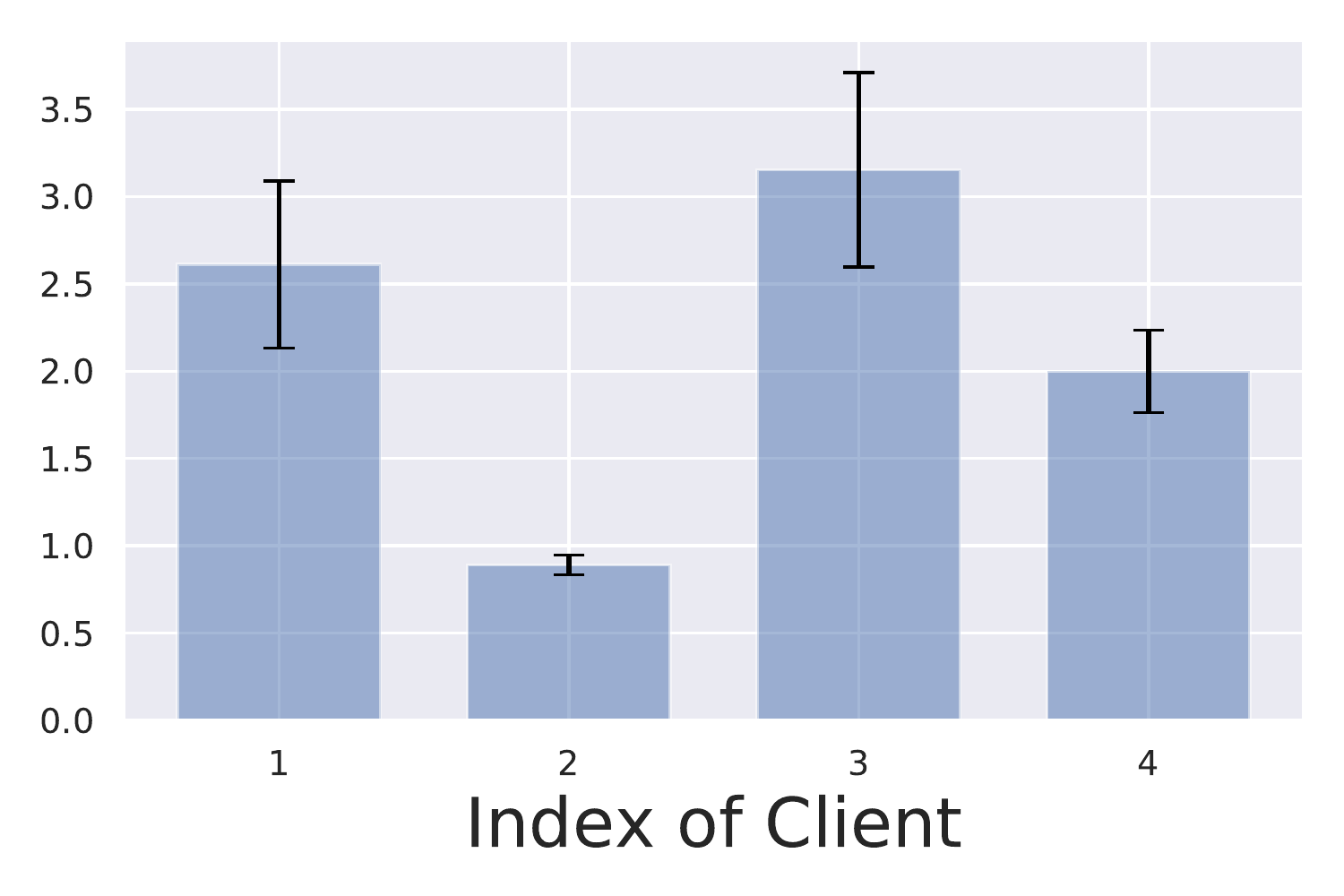}&
\includegraphics[height=\histheightc]{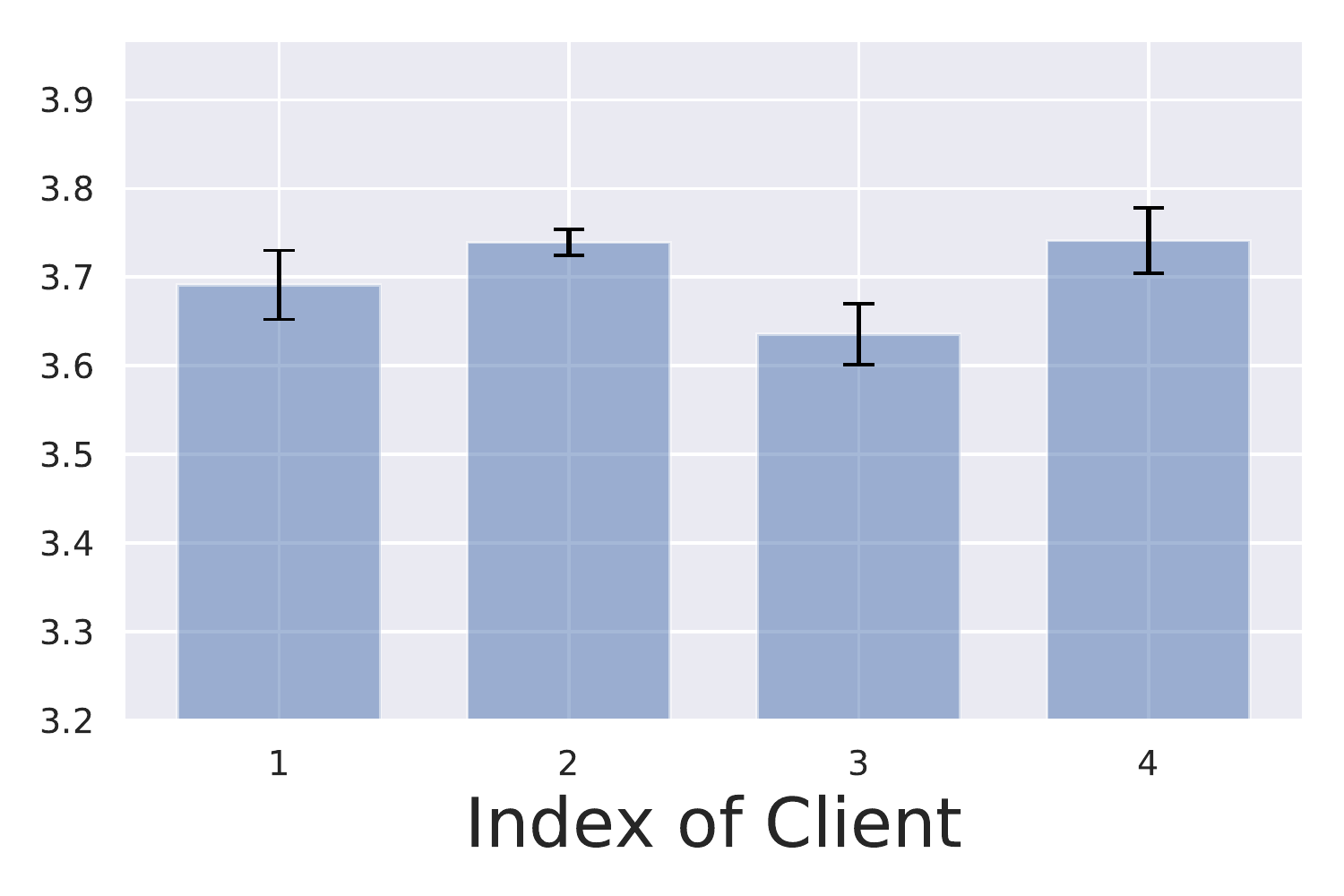}\\[-1.2ex]
\end{tabular}
\end{subtable}
}
\vspace{-1mm}
\caption{\small Client-level explainability of \name. Row 1 visualizes the input features. Row 2 shows the weights norm of linear heads. Row 3 shows the test accuracy when each client's test input features are perturbed (red line denotes the clean test accuracy). Row 4 shows the weights norm of linear heads under only one noisy client.}%
\label{fig:histogram}
\vspace{-6mm}
\end{figure*}
}

\begin{table}[h]
  \centering
  \vspace{-3mm}
   \caption{\small A larger number of local steps $\tau$ leads to better utility of \ouradmm under  \revise{client-level} DP $\epsilon=1$.   }
  \label{tab:tau_userdp}
\resizebox{0.5\linewidth}{!}{%
\begin{tabular}{ccccccccccccccc}\toprule
 \multicolumn{2}{c}{\mnist} & \multicolumn{2}{c}{\cifar} &  \multicolumn{2}{c}{\nus}  \\\cmidrule(lr){1-2} \cmidrule(lr){3-4}  \cmidrule(lr){5-6} 
 $\tau$ &  $\epsilon=1$ &  $\tau$ &  $\epsilon=1$ & $\tau$ &  $\epsilon=1$\\\midrule  
  1  & 90.63  &    1  &   48.19  &   1 & 79.38 \\
   5  & 90.84  & 10  & 61.65 &   3  & 82.58 \\
20  & 92.09  & 30  & 62.87 &  10 &   83.51
  \\\bottomrule  
\end{tabular}
}
\vspace{-3mm}
\end{table}

Methods w/o model splitting (FDML, \ouradmmjoint) generally performs better than methods w/ model splitting. This is mainly because the logits have a smaller dimension than the embeddings, and the total amount of noise added to the logits output is smaller than the embedding output; thus VFL w/o model splitting methods retain higher utility under DP. 

Additionally, the utility under \revise{client-level} DP {VFL} is not directly comparable to sample-level DP in centralized ML~\cite{abadi2016deep} or \revise{client-level} DP in standard (horizontal) FL~\cite{mcmahan2018learning} due to the \textit{unique properties} of VFL. 
For instances, \textbf{(1)} the \textit{dimension of DP-perturbed information} in VFL can be smaller (e.g., a batch of local embeddings or local logits) than the existing centralized learning or FL (e.g., gradients or model updates of a large model), which could lead to the higher utility under DP noise.
\textbf{(2)} The private local training set of VFL for each user has \textit{a  smaller raw feature dimension} (i.e., $1/M$ if features are divided evenly among $M$ clients) than the entire dataset (or local dataset) in the central setting (or horizontal FL) and it \textit{does not contain the labels}, which leads to a different dataset notion in DP definition. In our work, we follow existing privacy notions in VFL to protect each user's local training set~\cite{chen2020vafl,hu2019fdml} with proposed \revise{client-level} DP mechanisms. 

\vspace{-0.5mm}
\subsubsection{Utility under label DP (privacy of server labels)}
\label{sec:exp_labeldp}
To  protect the privacy of the labels in the server with formal privacy guarantee, we utilize the existing state-of-the-art label DP mechanism \alibi~\cite{malek2021antipodes}, which is originally proposed in centralized learning. 
We evaluate all methods under label DP with a privacy budget $\epsilon=2.8$ and $\epsilon=1.4$, which are obtained by adding Laplacian noise with noise parameter $\lambda_\mathrm{Lap}=1$ and $\lambda_\mathrm{Lap}=2$, respectively,  on the labels \textit{once} before VFL training, and we use randomized labels for training based on \alibi. In particular, \alibi post-processes the model predictions through Bayesian inference to  improve the model utility under noisy labels~\cite{malek2021antipodes}.
The results on \cref{tab:labeldp-utility} show that ADMM-based methods retain higher utility than gradient-based methods under the label DP. 
This could be due to two potential reasons: \textbf{(1)} the additional variables introduced by ADMM (i.e., auxiliary variables $\{z_j\}$ and dual variables $\{\lambda_j\}$) are dynamically adjusted during training, which might contribute to a more robust optimization~\cite{ding2019differentially} for VFL models (i.e., $\{W_k\}, \{\theta_k\}$) against label noises, and \textbf{(2)} multiple updates in each round could result in improved local models. As shown in \cref{tab:tau_labeldp}, more local steps $\tau$ can significantly enhance the utility of \ouradmm  under label-level DP $\epsilon=1.4$ ($\sim$10\% and $\sim$13\%  improvement  for \cifar and \nus, respectively).

 \vspace{-3mm}
\begin{table}[h]
  \centering
   \caption{\small A larger number of local steps $\tau$ leads to better utility of \ouradmm under  label-level DP $\epsilon=1.4$. }
  \label{tab:tau_labeldp}
\resizebox{0.5\linewidth}{!}{%
\begin{tabular}{ccccccccccccccc}\toprule
 \multicolumn{2}{c}{\mnist} & \multicolumn{2}{c}{\cifar} &  \multicolumn{2}{c}{\nus}  \\\cmidrule(lr){1-2} \cmidrule(lr){3-4}  \cmidrule(lr){5-6} 
 $\tau$ &  $\epsilon=1.4$ &  $\tau$ &  $\epsilon=1.4$ & $\tau$ &  $\epsilon=1.4$\\\midrule  
  1  &  92.28   &    1  &  46.08  &   1 &69.41   \\
   5  &  92.51  & 10  & 52.97 &   3  & 81.43  \\
20  & 92.8  & 30  &56.13   &  10 &   82.43 
  \\\bottomrule  
\end{tabular}
}
\vspace{-5mm}
\end{table}

\subsection{Client-level Explainability of \name}
\label{sec:exp_explainability}
{
\renewcommand{\thesubfigure}{\alph{subfigure}}

\begin{figure*}

\newlength{\tsnedimab}
\settoheight{\tsnedimab}{\includegraphics[width=.23\linewidth]{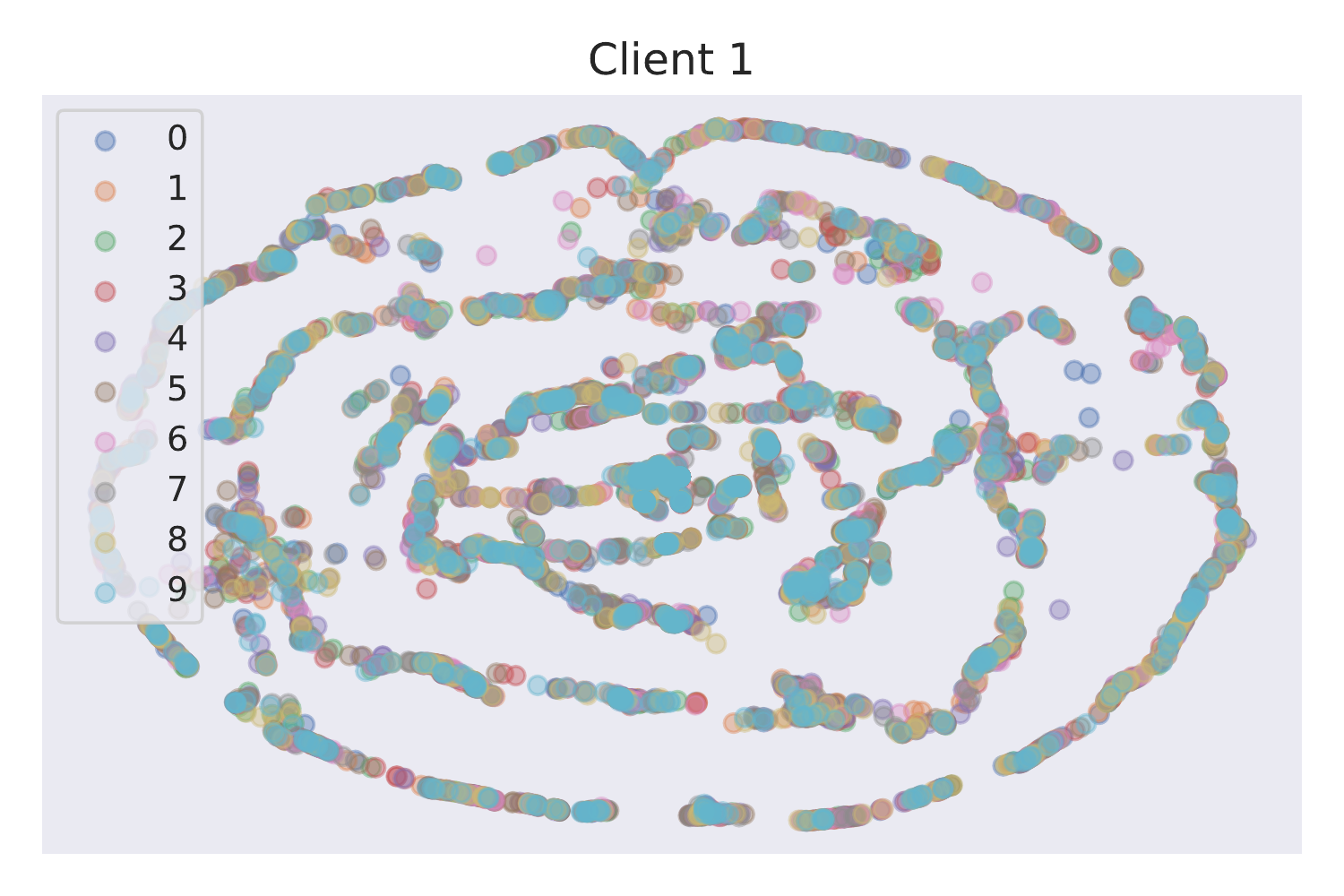}}%

\newlength{\tsneheightc}
\settoheight{\tsneheightc}{\includegraphics[width=.23\linewidth]{plots/acc/mnist_tsne_cli1.pdf}}%

\newlength{\legendheighttsne}
\setlength{\legendheighttsne}{0.2\tsneheightc}%

\newcommand{\rowname}[1]%
{\rotatebox{90}{\makebox[\tsnedimab][c]{\scriptsize #1}}}

\centering

{
\renewcommand{\tabcolsep}{10pt}

\centering
\begin{subtable}{0.9\linewidth}
\centering
\begin{tabular}{@{}p{5mm}@{}c@{}c@{}c@{}c@{}}
        & \makecell{\tiny{\mnist}}
        & \makecell{\tiny{\cifar}}
        & \makecell{\tiny{\nus}}
        & \makecell{\tiny{\modelnet}}
        \vspace{-3pt}\\
\rowname{\makecell{important}}&
\includegraphics[height=\tsneheightc]{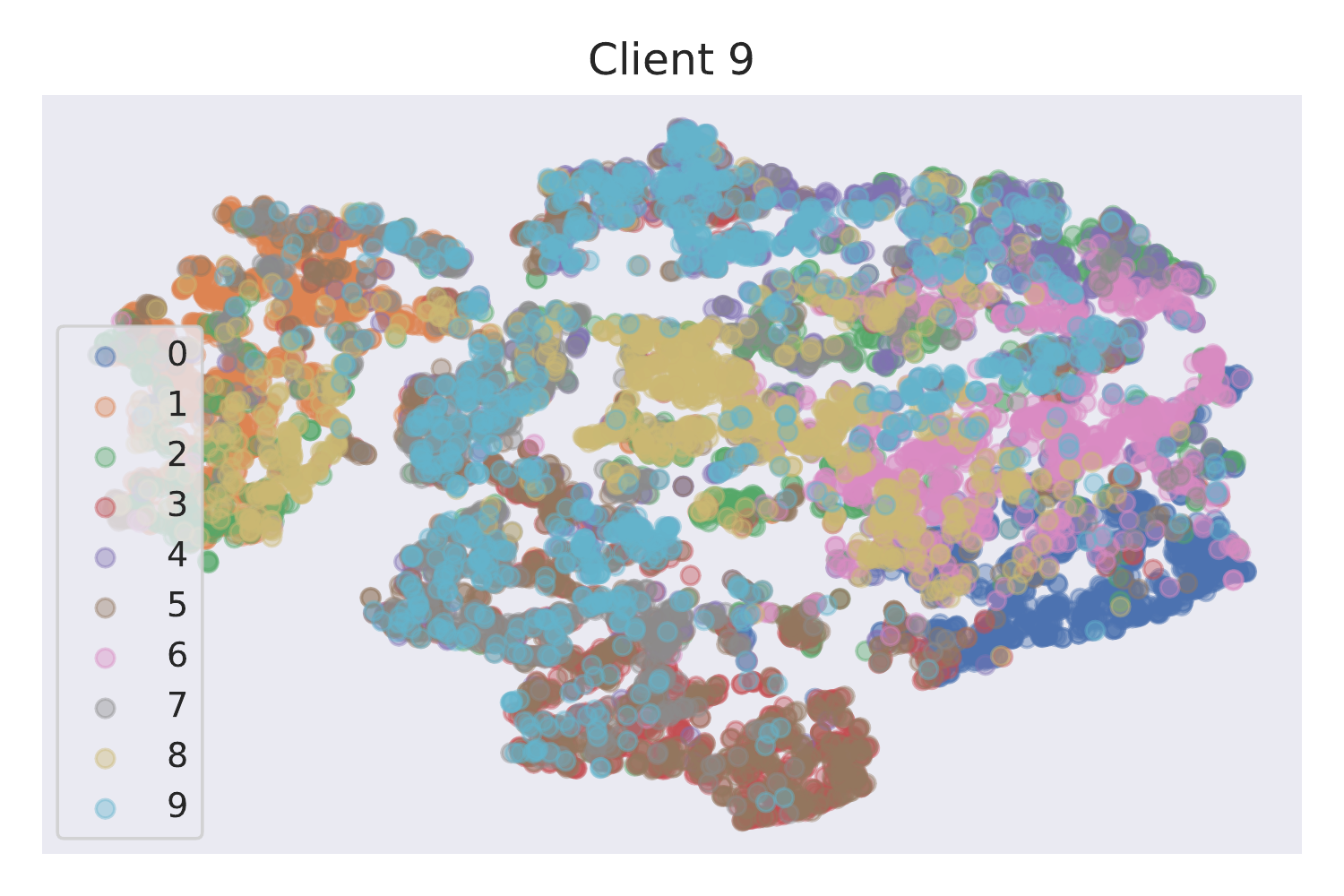}&
\includegraphics[height=\tsneheightc]{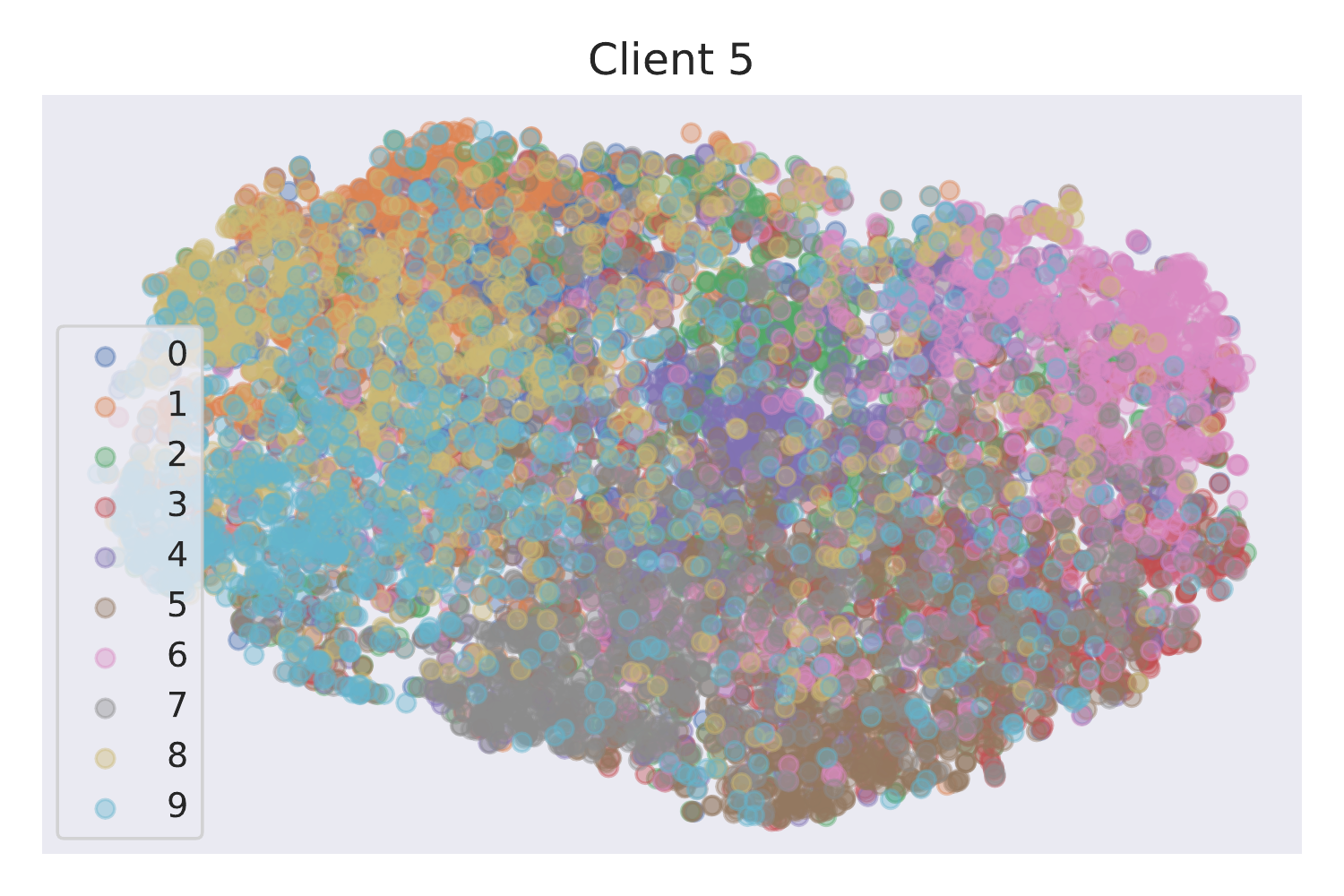}&
\includegraphics[height=\tsneheightc]{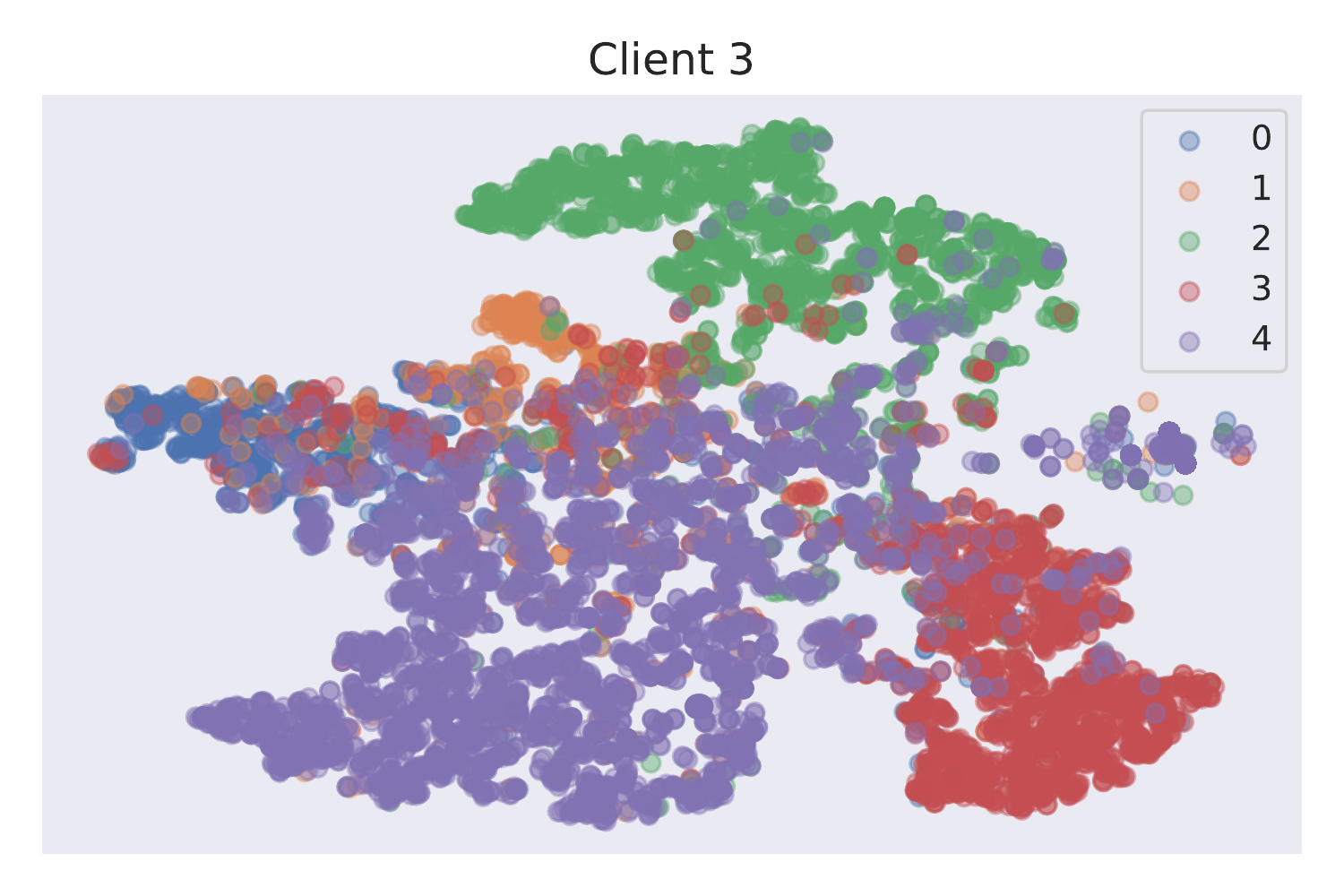}&
\includegraphics[height=\tsneheightc]{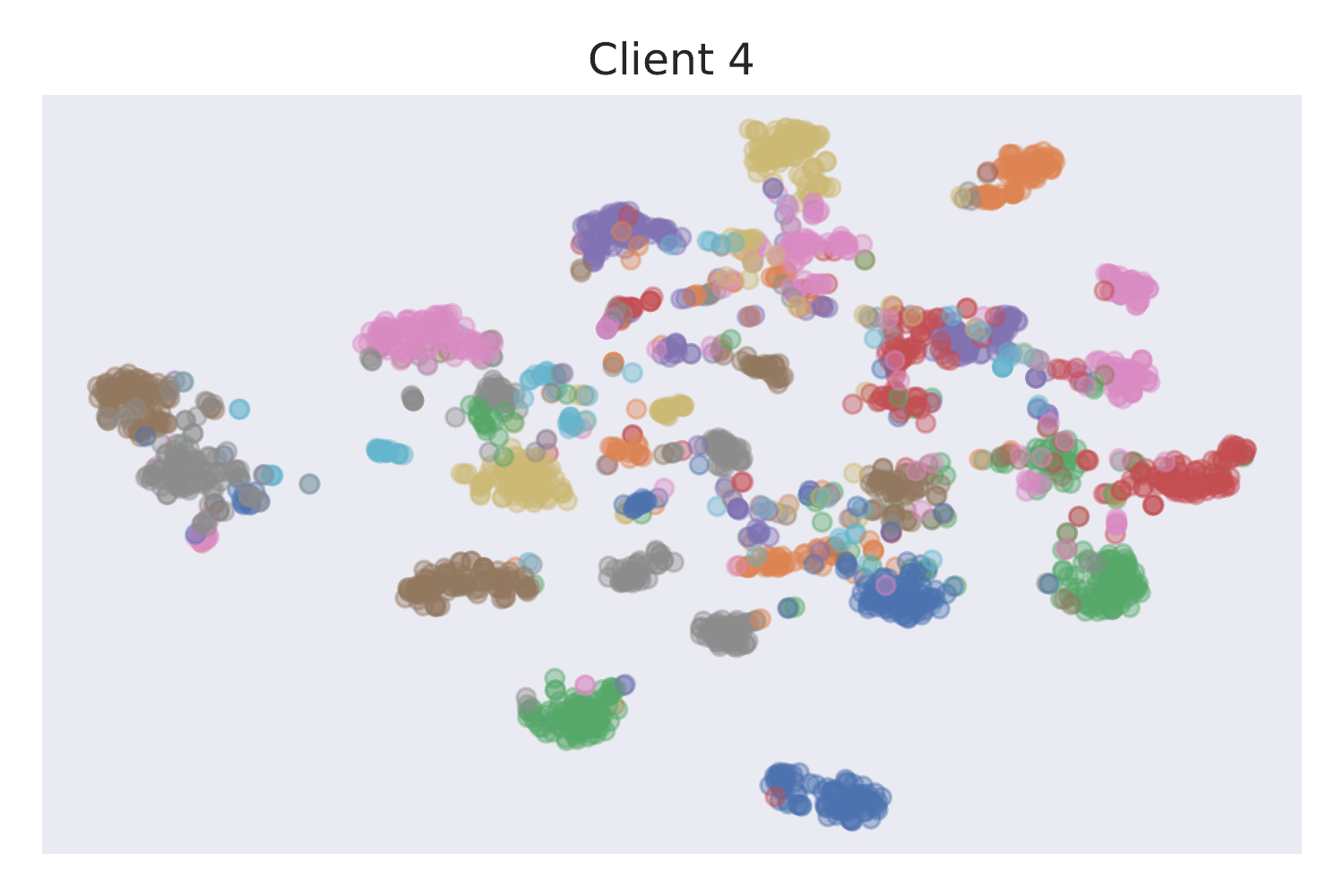}
\\[-1.2ex]
\rowname{\makecell{unimportant}}&
\includegraphics[height=\tsneheightc]{plots/acc/mnist_tsne_cli1.pdf}&
\includegraphics[height=\tsneheightc]{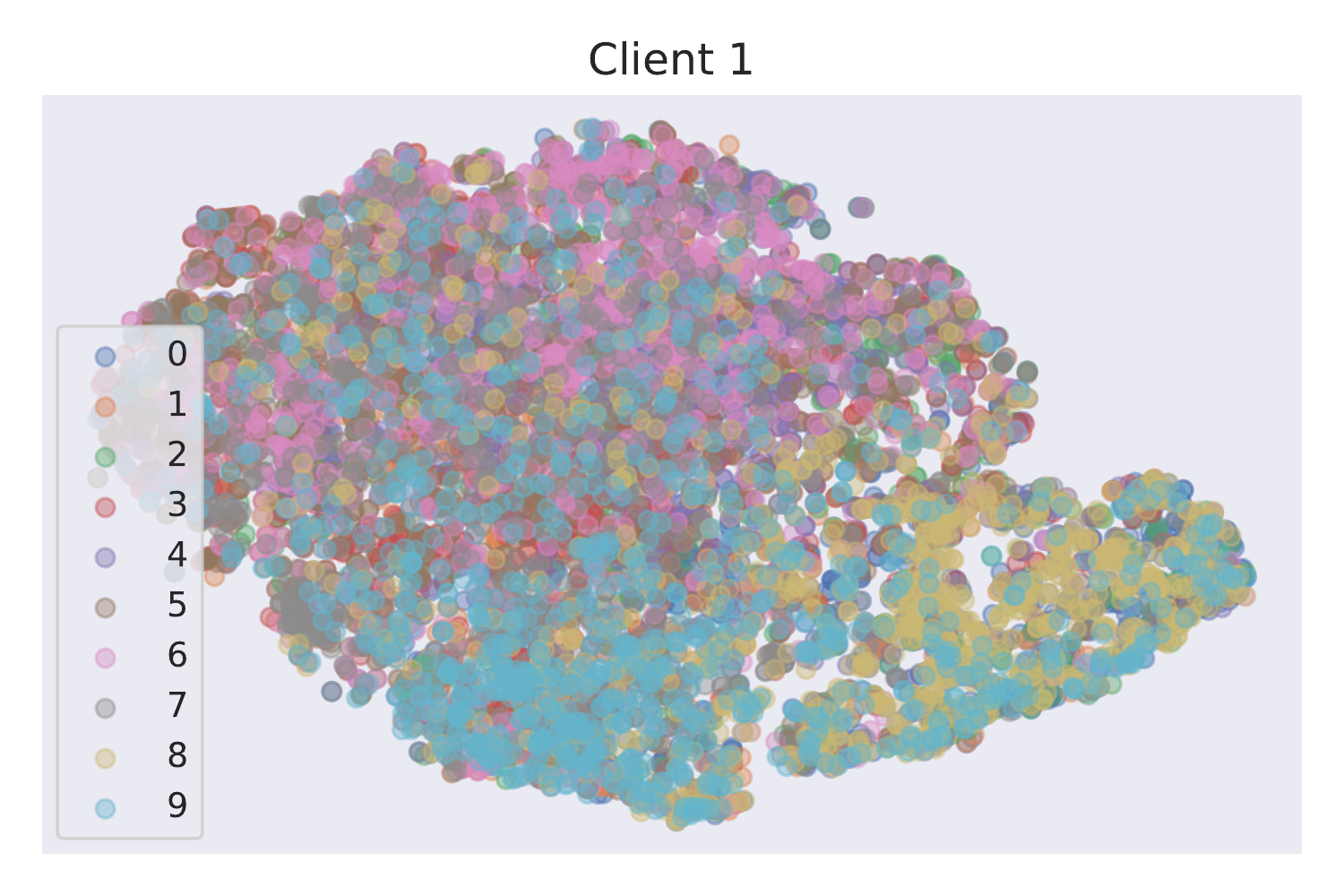}&
\includegraphics[height=\tsneheightc]{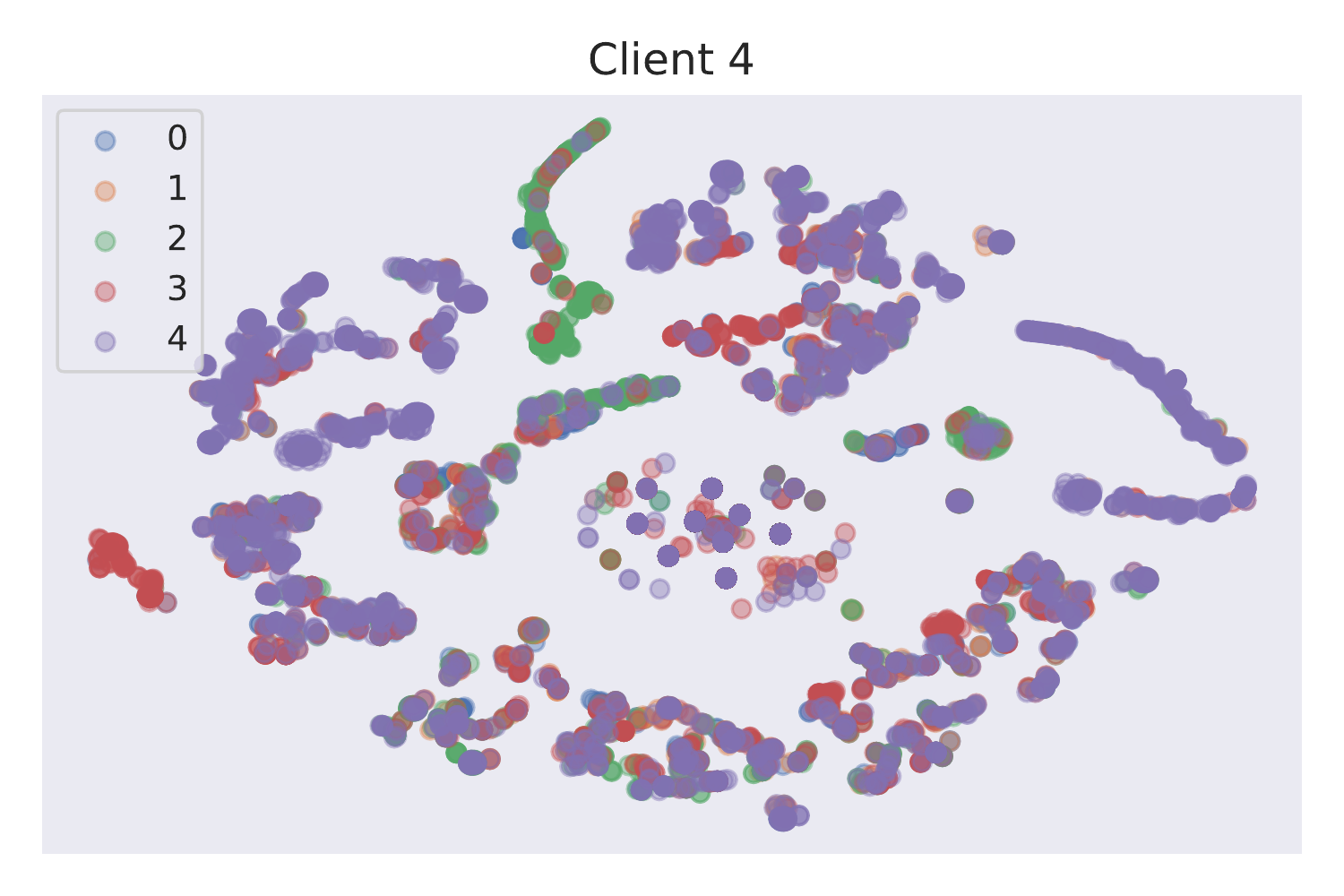}&
\includegraphics[height=\tsneheightc]{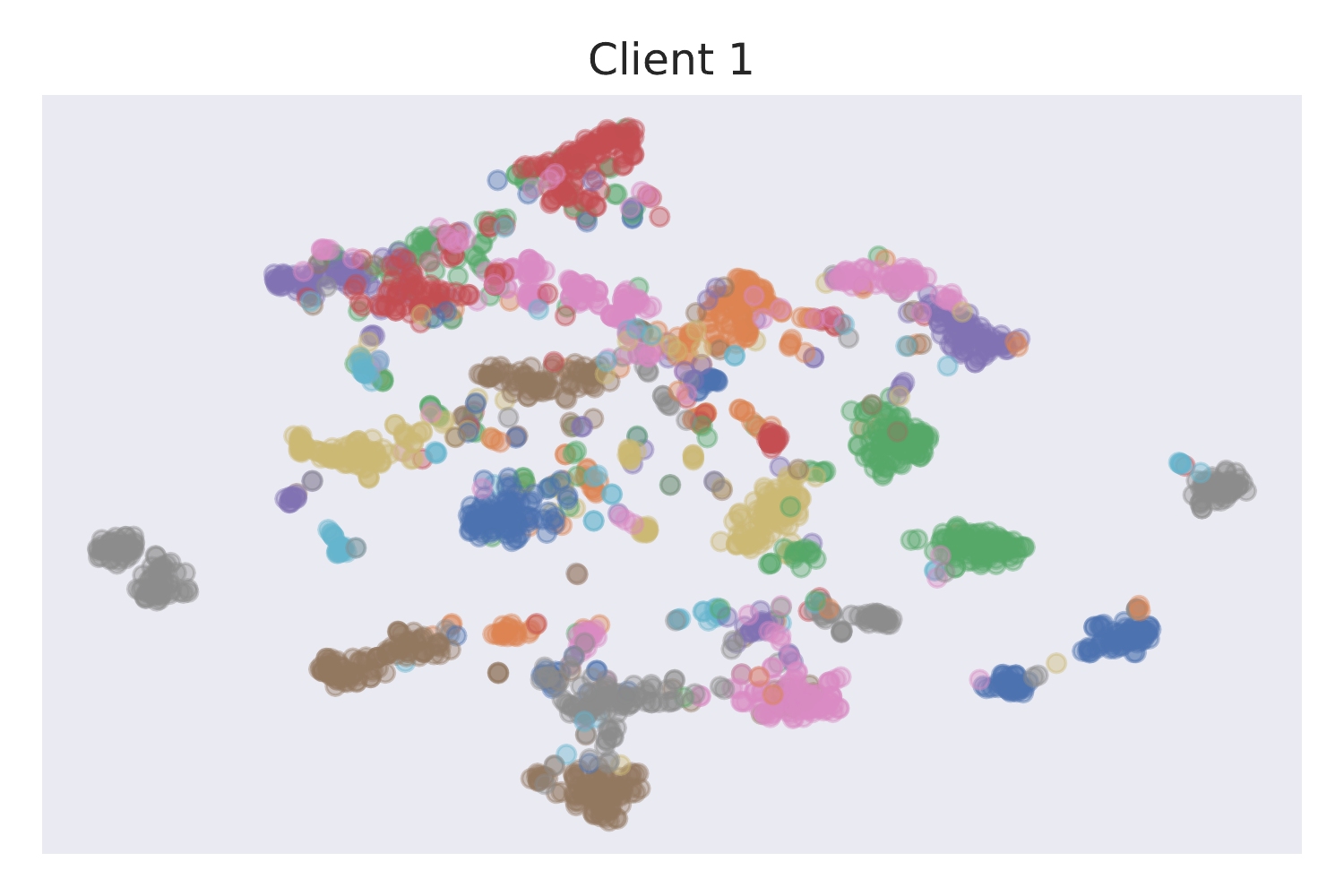}
\\[-1.2ex]
\end{tabular}
\end{subtable}

}

\caption{\small T-SNE visualizations of local embeddings from important client and unimportant client for \ouradmm. }%
\label{fig:tsne_more}
\vspace{-5mm}
\end{figure*}

}

In this section, we first visualize the local embeddings of clients, which are diverse, stemming from the distinct input features of clients. This also justifies the multi-head design of \name that can reweight the embeddings based on their importance.  
Then, we show that the weights norm of learned linear heads can indeed reflect the importance of local clients, which enables functionalities such as test-time noise validation, client denoising, and summarization.

\subsubsection{T-SNE of Local Embeddings}
In row 1 of \cref{fig:histogram}, we show the raw feautures of different clients on four datasets. 
The quality of features can vary among clients. For instance, in \mnist, since the digit always occupies the center, clients hold  black background pixels might not provide useful information for the classification task, and thus are less important.  
The T-SNE~\cite{tsne2008} visualizations in~\cref{fig:tsne_more} reveal that  important clients learn better local embeddings than unimportant clients on \mnist, \cifar and \nus.  
Specifically, in \nus, client \#3  produces linear separable local embeddings (left), which are better than client \#4's embeddings (right) that overlap different classes.
For \modelnet, since clients with multi-view data are of similar importance, their local embeddings exhibit similarities and demonstrate linear separability.
A scrutiny of these local embeddings confirms that the unique characteristic of input features in each client lead to  varied local embeddings. Consequently, we employ multiple heads as the server model, allowing us to account for the diverse feature quality across clients and aptly reweight the local embeddings.

\subsubsection{Client Importance} Given a trained \ouradmm{} model, we plot the weights norm of each client's corresponding linear heads in \cref{fig:histogram} row 2. 
Combining it with row 1, we find that \textit{the client with important local features indeed results in high weights}\footnote{Here the weights of clients refer to the weights of the client's corresponding linear head owned by the server.}. For example, clients \#6, \#7, \#8 in \mnist holding middle rows of images that contain the center of digits, have high weights, while clients \#1, \#14 holding the black background pixels have low weights. 
A similar phenomenon is observed on \cifar for client \#5 (center) and client \#1 (corner). 
On \cifar,  clients \#8, \#9 also have high weights, which is because the objects on \cifar also appear on the right bottom corner. 
On \modelnet, clients have complementary views of the same objects, so their features have similar importance, leading to similar weights norms.
Based on our observation, we conclude that \textit{the weights of linear heads can reflect the importance of local clients.} We use this principle to infer that, for \nus, 
the first 500 dim. of textual features have higher importance than other multimodality features, resulting in the high weights norm of client 3.

\subsubsection{Client Importance Validation via Noisy Test Client}
Given a trained \ouradmm model, we add Gaussian noise to the test local features
to verify the client-level importance indicated by the linear heads. For each time, we only perturb the features of one client and keep other clients' features unchanged.
The results in \cref{fig:histogram} row 3 show that \textit{perturbing the client with high weights affects more for the test accuracy}, which verifies that clients with higher weights are more important.

\subsubsection{Client Denoising}
We study the denoising ability of \name under training-time noisy clients. 
We construct one noisy client (i.e., client \#7, \#5, \#2, \#3 for \mnist, \cifar, \nus, \modelnet respectively) by adding Gaussian noise to its local features and re-train the \ouradmm model.
The obtained weights norm in \cref{fig:histogram} row 4 shows  that \ouradmm can \textit{automatically detect the noisy client and lower its weights} (compared to the clean one in row 2).
\cref{tab:denoise} in Appendix~\ref{app:exp_details} shows that \ouradmm outperforms baselines with faster convergence and higher accuracy under noisy clients. 

\subsubsection{Client Summarization}
Regarding client summarization, \textbf{(1)} we first rank the importance of clients according to their weights norm (\cref{fig:histogram}  row 2), then we select $u\%$ proportion of the most ``important" clients to re-train the \ouradmm model. We find that \textit{its performance is close to the one trained by all clients}.
\cref{tab:clisumm} shows that the test accuracy-drop of training with 50\% of the most important clients is less than 1\% on \mnist and \nus, and less than 4\% on \cifar; the accuracy-drop of training with 20\% of the most important clients is less than 10\% on all datasets. 
\textbf{(2)} We select $u\%$ proportion of the least important clients to re-train the model, and we find that its performance is significantly lower than the one trained with important clients, which indicates the effectiveness of \name for client selection. 
\textbf{(3)} For the multi-view dataset \modelnet, we find that the test accuracy of models trained with 12, 8, and 4 clients are similar, i.e., 91.04\%, 90.69\%, and 90.64\%, suggesting that a few views can already provide sufficient training information and the agents with multiview data are of similar importance which is also reflected by our linear head weights.

\begin{table}[h]
\centering
\vspace{-2mm}
\begin{minipage}[t]{1\linewidth}
  \caption{\small Functionality of client summarization enabled by \ouradmm.}
   \label{tab:clisumm}
   \centering
 \resizebox{0.8\columnwidth}{!}{
 \begin{tabular}{cccccccc}
    \toprule
  Client ratio &   Type &  {\mnist}  & {\cifar} & {\nus}  \\\midrule
{\footnotesize \multirow{1}{*}{$100\%$}}
& all   & 97.13  & 75.25 & 88.51      \\ 
[0.05em]\midrule[0.05em]
{\footnotesize \multirow{2}{*}{\makecell{$50\%$}}}
& important &   96.58  & 70.28  & 87.29    \\
& unimportant &  78.11  & 62.67  & 75.80    \\
[0.05em]\midrule[0.05em]
{\footnotesize \multirow{2}{*}{$20\%$} }
& important&  88.72  & 66.06  & 80.28   \\
& unimportant &  29.11  & 54.99  & 59.34     \\
\bottomrule
\end{tabular}
}
\end{minipage}
\vspace{-3mm}
\end{table}

\section{Conclusions}
We propose a VFL framework with multiple linear heads (\name) and an ADMM-based method (\ouradmm) for efficient communication. We provide the convergence guarantee for \ouradmm. 
We also introduce 
user-level differential privacy mechanism for \name and prove the privacy guarantee. 
Extensive experiments verify the superior performance of our algorithms under vanilla VFL and DP VFL and show that \name enables client-level explainability.

\section*{Acknowledgement}

The authors thank Yunhui Long, Linyi Li, Yangjun Ruan, Weixin Chen, and the anonymous reviewers for their valuable feedback and suggestions.  

This work is partially supported by the National Science Foundation under grant No. 1910100, No. 2046726, No. 2229876, DARPA GARD, the National Aeronautics and Space Administration (NASA) under grant No. 80NSSC20M0229, Alfred P. Sloan Fellowship, the Amazon research award, and the eBay research grant.

\bibliography{reference}
\bibliographystyle{plain}

\newpage

\newcommand\DoToC{%
  \startcontents
  \printcontents{}{1}{\textbf{Contents}\vskip3pt\hrule\vskip5pt}
  \vskip3pt\hrule\vskip5pt
}

\onecolumn

\appendix

The Appendix is organized as follows:
\begin{itemize}[leftmargin=*]
    \item Appendix~\ref{app:algos} provides algorithm details for \oursgd ~\cite{vepakomma2018split}(\cref{algo:splitlearn}) and \ouradmmjoint (\cref{algo:vimadmmjoint});
    \item   Appendix~\ref{app:conv-guatantee} provides the proofs for convergence guarantees in \cref{theorem:main-paper-conv};
    \item   Appendix~\ref{app:dp} provides the proofs for privacy guarantee in \cref{theo:dp_guarantee};
    \item  Appendix~\ref{app:exp_details} provides more details on experimental setups and the additional experimental results;
     \item   Appendix~\ref{app:discussion} provides additional discussion on ADMM and VFL.
\end{itemize}

\vspace{-3mm}
\subsection{Algorithm Details} 
\label{app:algos}
 
\subsubsection{\oursgd~\cite{vepakomma2018split}}
\label{app:algo_splitlearn}
At each communication round $t$,
 the server samples a set of data indices, $B(t)$, with batch size $|B(t)|= b $. Then we describe the key steps \oursgd (\cref{algo:splitlearn}) as follows: 

\textbf{(1) \emph{Communication from client to server.}} Each client $k$ sends a batch of embeddings $\{{ h_j^k}^{(t)}\}_{j \in B(t)}$ to the server, where ${ h_j^k}^{(t)}=  f(x_j^k; \theta_k^{(t)}), \forall j \in B(t)$. 

\textbf{(2) \emph{Sever updates server model $\theta_0$}. } According to VFL  objective in Eq.~\ref{eq:ms-obj}, the server model is updated  as:  
\begin{equation} \label{eq:update_W_sdg}
\theta_0^{(t +1)} \gets \theta_0^{(t)} - \eta \nabla_{\theta_0^{(t)}} \mathcal{L}_{\mathrm{VFL}} (\theta_0^{(t)}), \forall  k\in [M]
\end{equation}
where $\eta$ is the server learning rate, and
\begin{equation}
\nabla_{\theta_0^{(t)}} \mathcal{L}_{\mathrm{VFL}} (\theta_0^{(t)}) = 
\nabla_{\theta_0^{(t)}}  \left (\frac{1}{N} \sum_{j=1}^N  \ell ( [ {h_j^1}^{(t)}, \ldots,{h_j^M}^{(t)} ] , y_j; \theta_0^{(t)})   + \beta  \mathcal{R}(\theta_0^{(t)})  \right).
\end{equation}
Here $ [ {h_j^1}^{(t)}, \ldots,{h_j^M}^{(t)} ]$ denotes the concatenated local embeddings. 

\textbf{(3) \emph{Communication from server to client.} } Server computes gradients w.r.t each local embedding $ \nabla_{{h_j^k}^{(t)}}  \mathcal{L}_{\mathrm{VFL}} (\theta_0^{(t+1)})$ by the VFL objective in Eq.~\ref{eq:ms-obj}, where  
\begin{equation}\label{eq:compute_gradient_sgd}
\nabla_{{h_j^k}^{(t)}}  \mathcal{L}_{\mathrm{VFL}} (\theta_0^{(t+1)})= 
\nabla_{{h_j^k}^{(t)}}  \ell ( [ {h_j^1}^{(t)}, \ldots,{h_j^M}^{(t)} ] , y_j; \theta_0^{(t+1)})   ,  
\forall j\in B(t), k\in [M]
\end{equation}
Server sends gradients $\{  \nabla_{{h_j^k}^{(t)}}  \mathcal{L}_{\mathrm{VFL}} (\theta_0^{(t+1)})  \}_{j\in B(t)}$ to each client $k, \forall k\in [M]$.

\textbf{(4) \emph{Client updates local model parameters $\theta_k$}.} 
Finally, every client $k$  locally updates the model  parameters  $\theta_k$ according to the VFL objective in Eq.~\ref{eq:ms-obj} as follows: 
{ 
\begin{equation}\label{eq:update_theta_sgd}
\theta_{k}^{(t +1)} \gets  \theta_{k}^{(t  )} - \eta^k  \nabla_{\theta_{k}^{(t )}}   \mathcal{L}_{\mathrm{VFL}} (\theta_0^{(t+1)}) , \forall k\in [M]
\end{equation}}%
where $\eta^k$ is the local learning rate for client $k$, and
\begin{equation}
\nabla_{\theta_{k}^{(t )}}   \mathcal{L}_{\mathrm{VFL}} (\theta_0^{(t+1)}) = \frac{1}{N} \sum_{j=1}^N \nabla_{\theta_{k}^{(t )}}   {{h_j^k}^{(t)}}  
 \nabla_{{h_j^k}^{(t)}}  \mathcal{L}_{\mathrm{VFL}} (\theta_0^{(t+1)})   + \beta 
\nabla_{\theta_{k}^{(t )}}  \mathcal{R}(\theta_k^{(t )}) 
\end{equation}

These four steps of \oursgd are summarized in \cref{algo:splitlearn}.

\begin{small}
\begin{algorithm}[h]
\footnotesize
\begin{algorithmic}[1]
 \STATE {\bfseries Input:}{number of communication rounds $T$, number of clients $M$, number of training samples $N$, batch size $b$ , input features $\{\{ x_j^1\}_{j=1}^{N}, \{ x_j^2\}_{j=1}^{N}, \ldots, \{ x_j^M\}_{j=1}^{N}\} $, the labels $\{y_j\}_{j=1}^{N}$, local model $\{\theta_k\}_{k=1}^M $; linear heads $\{W_k\}_{k=1}^M $;
server learning rate $\eta$; client learning rate $\{\eta^k\}_{k=1}^M$;
}
\FOR {communication round $ t \in [T]$}  
     \STATE Server samples a set of data indices $B(t)$ with $|B(t)|= b $  \;
    \FOR {client $ k \in [M]$}   
           \STATE \textbf{generates} a local training batch $ \{x_j^k\}_{j\in B(t) }$ 
        \STATE \textbf{computes} local embeddings  ${h_j^k}^{(t)}\gets  f(x_j^k; \theta_k) ,\forall j\in B(t) $
         \STATE \textbf{sends} local embeddings $ \{{h_j^k}^{(t)}\}_{j\in B(t) }$ to the server
     \ENDFOR
     \STATE Server \textbf{updates} server model $\theta_0^{(t +1)}     $  by Eq.~\ref{eq:update_W_sdg}    \;
     \STATE Server \textbf{computes} gradients w.r.t embeddings $ \nabla_{{h_j^k}^{(t)}}  \mathcal{L}_{\mathrm{VFL}} (\theta_0^{(t+1)})    $ by Eq.~\ref{eq:compute_gradient_sgd}  $,\forall j\in B(t)$  \;
     \STATE Server \textbf{sends}  gradients $\{  \nabla_{{h_j^k}^{(t)}}  \mathcal{L}_{\mathrm{VFL}} (\theta_0^{(t+1)})  \}_{j\in B(t)}$ to each client $k, \forall k\in [M]$ \;
    \FOR {client $ k \in [M]$}  
         \STATE \textbf{updates} local model  $\theta_{k}^{(t +1)} $     by Eq.~\ref{eq:update_theta_sgd}   
      \ENDFOR
 \ENDFOR
 \caption{\small \oursgd{}~\cite{vepakomma2018split}
 }\label{algo:splitlearn}
 \end{algorithmic}
\end{algorithm}
 \end{small}

\vspace{-3mm}
\subsubsection{\ouradmmjoint}
\label{app:algo_admmjoint}
 At each communication round $t$,
 the server samples a set of data indices, $B(t)$, with batch size $|B(t)|= b $. Then we describe the key steps of \ouradmmjoint (Algorithm~\ref{algo:vimadmmjoint}) as follows: 
 
\textbf{(1) \emph{Communication from client to server.}} Each client $k$ sends a batch of local logits $\{{ o_j^k}^{(t)}\}_{j \in B(t)}$ to the server, where ${ o_j^k}^{(t)}=  f(x_j^k; \theta_k^{(t)})  W_{k}^{(t)}, \forall j \in B(t)$ 

\textbf{(2) \emph{Sever updates auxiliary variables $\{z_j\}$}.} 
After receiving the local logits from all clients, the server updates the auxiliary variable for each sample $j$  as:
\begin{equation}\label{eq:update_z_wosplit}
z_{j}^{(t)} =  \underset{z_j}{\operatorname{argmin}}  \quad  \ell (z_j,y_j) -     {  \lambda_{j}^{(t-1)}}^\top   z_{j}  +\frac{\rho}{2}\left\| \sum_{k=1}^M  {o_j^k}^{(t)} - z_j \right\|^{2} , \forall j \in B(t)
\end{equation}
Since the optimization problem in Eq.~\ref{eq:update_z_wosplit} is convex and differentiable with respect to $z_j$, we use the L-BFGS-B algorithm~\cite{zhu1997algorithm} to solve the minimization problem. 

\textbf{(3) \emph{Sever updates dual variables $\{\lambda_j\}$}.} After the updates in Eq.~\ref{eq:update_z_wosplit}, the server updates the dual variable  for each sample $j$    as:
\begin{equation}\label{eq:update_lambda_wosplit}
\lambda^{(t)}_{j} =  \lambda^{(t-1)}_{j}+\rho\left(\sum_{k=1}^M {o_j^k}^{(t)} -z^{(t)}_{j}\right) , \forall j \in B(t)
\end{equation}

\textbf{(4) \emph{Communication from server to client.} }After the updates in Eq.~\ref{eq:update_lambda_wosplit}, we define a residual variable ${s_j^{k}}^{(t+1)} $ for each sample $j$ of $k$-th client, which provides supervision for updating local model:
\begin{equation}\label{eq:def_residual_wosplit}
    {s_j^{k}}^{(t)} \triangleq {z_j}^{(t)} - \sum_{i\in [M] , i \neq k}  {o_j^i}^{(t)}   
\end{equation} 
The server sends the dual variables $  \{ \lambda^{(t)}_{j} \}_{j \in B(t)}$ and the residual variables $ \{  {s_j^{k}}^{(t)}\}_{j \in B(t)} $ of all samples to each client $k$.

\textbf{(5) \emph{Client updates linear head $W_k$ and local model $\theta_k$ alternatively.}} The linear head of each client is locally updated  as:  
\begin{small}
\begin{equation} \label{eq:update_W_wosplit}
W_{k}^{(t +1)}= \underset{W_k}{\operatorname{argmin}}  \quad \beta   \mathcal{R}(W_k)  
 + \frac{1}{b} \sum_{ j \in B(t)} {\lambda_{j}^{(t)}}^\top f(x_{j_k}; \theta_k^{(t)} )  W_{k} +   \sum_{ j \in B(t)}\frac{\rho}{2b} \left\|   {s_j^{k}}^{(t)}  -   f(x_{j_k}; \theta_k^{(t)} )   W_{k}  \right\|_F^2, \forall k\in [M]
\end{equation}
\end{small}
Each client updates the local model parameters  $\theta_k$ as follows:
{ \small
\begin{equation}\label{eq:update_theta_wosplit}
\theta_{k}^{(t +1)}= \underset{\theta_{k}}{\operatorname{argmin}}  \quad \beta  \mathcal{R}(\theta_k) + \frac{1}{b}  \sum_{ j \in B(t)}    {\lambda_{j}^{(t)}}^\top     f(x_{j_{k}}; \theta_k ) { W_{k}^{(t+1)}  } +  \sum_{ j \in B(t)}  \frac{\rho}{2b} \left\|   {s_j^{k}}^{(t)} - f(x_{j_k}; \theta_k ) { W_{k}^{(t+1)} }   \right\|_F^2.
\end{equation}}%

Due to the nonconvexity of the loss function of DNN, 
we use $\tau$ local steps of SGD 
to update $W_k$ and $\theta_k$ alternatively at each round with the objective of Eq.~\ref{eq:update_W_wosplit} and  Eq.~\ref{eq:update_theta_wosplit}. Specifically, at each local step, we first update  $W_k$ and then update $\theta_k$.
 
These five steps of \ouradmmjoint are summarized in Algorithm~\ref{algo:vimadmmjoint}.

\begin{small}
\begin{algorithm}[t!]
 \caption{\small \ouradmmjoint \colorbox[RGB]{239,240,241}{(with differentially privacy) }}\label{algo:vimadmmjoint}
\footnotesize
\begin{algorithmic}[1]
 \STATE {\bfseries Input:}{number of communication rounds $T$, number of clients $M$, number of training samples $N$, batch size $b$ , input features $\{\{ x_j^1\}_{j=1}^{N}, \{ x_j^2\}_{j=1}^{N}, \ldots, \{ x_j^M\}_{j=1}^{N}\} $, the labels $\{y_j\}_{j=1}^{N}$, local model $\{\theta_k\}_{k=1}^M $; linear heads $\{W_k\}_{k=1}^M $;
auxiliary variables $ \{ z_{j} \}_{j=1}^{N} $;
dual variables $ \{ \lambda_{j} \}_{j=1}^{N} $;
\colorbox[RGB]{239,240,241}{ noise parameter $\sigma$, clipping constant $C$ }} 
\FOR {communication round $ t \in [T]$} 
     \STATE Server samples a set of data indices $B(t)$ with $|B(t)|= b_s $  
    \FOR {client $ k \in [M]$}
         \STATE \textbf{generates} a local training batch $ \{x_j^k\}_{j\in B(t) }$ 
         \STATE\textbf{computes} local logits  ${ o_j^k}^{(t)}=  f(x_j^k; \theta_k^{(t)})  W_{k}^{(t)}  , \forall j \in B(t) $
         \STATE   \colorbox[RGB]{239,240,241}{ \textbf{clips and perturbs} local logit matrix $\medmath{ \{{ o_j^k}^{(t)} \}_{j\in B(t) } \gets \mathtt{Clip}\left( \{{ o_j^k}^{(t)} \}_{j\in B(t) }  , C\right) + \mathcal{N}\left(0, \sigma^{2} C^{2}\right)} $  }
         \STATE \textbf{sends} local logits $ \{{o_j^k}^{(t)}\}_{j\in B(t) }$ to the server
    \ENDFOR
     \STATE Server \textbf{updates} auxiliary variables $ z_{j}^{(t)} $ by Eq.~\ref{eq:update_z_wosplit}$, \forall j\in B(t)$
     \STATE Server \textbf{updates} dual variables $ \lambda^{(t)}_{j}$ by Eq.~\ref{eq:update_lambda_wosplit}  $, \forall j\in B(t)$
     \STATE Server \textbf{computes} residual variables $ {s_j^{k}}^{(t)}$ by Eq.~\ref{eq:def_residual_wosplit}  $, \forall j\in B(t), k\in [M]$
     \STATE Server \textbf{sends} $  \{ \lambda^{(t)}_{j} \}_{j \in B(t)}$ , $ \{  {s_j^{k}}^{(t)}\}_{j \in B(t)} $  to each client $k, \forall k\in [M]$ 
    \FOR {client $ k \in [M]$}  
        \FOR {local step  $e \in [\tau]$} 
         \STATE \textbf{updates} local linear head  $W_{k}^{(t +1)}$ by Eq.~\ref{eq:update_W_wosplit} with SGD
         \STATE \textbf{updates} local model  $\theta_{k}^{(t +1)}$ by Eq.~\ref{eq:update_theta_wosplit} with SGD
        \ENDFOR
    \ENDFOR
\ENDFOR
 \end{algorithmic}

\end{algorithm}

 \end{small}

\subsection{Convergence Guarantees}\label{app:conv-guatantee}

\subsubsection{Additional Notations and Supporting Lemmas}

To help theoretical analysis, we denote the objective functions in \cref{eq:update_z},  \cref{eq:update_W} , \cref{eq:update_theta} as 

\begin{small}
\begin{equation}
\begin{aligned}
 h(z_j) & =  \ell (z_j) -     {  \lambda_{j}^{(t)}}^\top   z_{j}  +\frac{\rho}{2}\left\| \sum_{k=1}^M  {f(x_j^k;\theta_k^{(t+1)} )} W_k^{(t+1)}- z_j \right\|_{F}^{2} \\
g_k(W_k) & =  \beta_k  \mathcal{R}_{k}(W_k)  
 +  \frac{1}{N} \smashoperator{\sum_{ \substack{ j \in [N]}}}  {\lambda_{j}^{(t)}}^\top {f(x_j^k;\theta_k^{(t)})}  W_{k} \\
&\quad +  \frac{\rho}{2N}  \sum_{ j \in [N]} \left\|  \sum_{ \substack{i\in [M] ,\\ i \neq k } }  f(x_j^i;\theta_i^{(t)})  {W_{i}}^{(t)} +   {f(x_j^k;\theta_k^{(t)})}  W_{k} - {z_j}^{(t)}\right\|_F^2 \\ 
 q_k( \theta_k) & =  \beta_k  \mathcal{R}_{k}(\theta_k) +   \frac{1}{N} \sum_{ j \in [N]}    {\lambda_{j}^{(t)}}^\top    f(x_j^k; \theta_k )  { W_{k}^{(t+1)}  } \\
&\quad + \frac{\rho}{2N}  \sum_{ j \in [N]}   \left\|   \sum_{ \substack{i\in [M] ,\\ i \neq k}}  {f(x_j^i;\theta_i^{(t)})}{W_{i}}^{(t+1)} +   f(x_j^k; \theta_k ) { W_{k}^{(t+1)} } -  {z_j}^{(t)}   \right\|_F^2
\end{aligned}
\end{equation}
\end{small}

Before delving into the main proofs, we introduce the bellow supporting lemmas.

\begin{lemma}\label{lemma:zgrad_lambda}
\begin{equation}
    \nabla \ell (z_j^{(t)}) = \lambda_j^{(t)}
\end{equation}
\end{lemma}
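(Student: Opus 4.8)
The plan is to read off \cref{lemma:zgrad_lambda} directly from the first-order optimality condition of the $z_j$-subproblem. Under the exact-minimization regime adopted at the start of \cref{sec:convergence} (full batch, exact solves of \cref{eq:update_z,eq:update_lambda,eq:update_W}), the iterate $z_j^{(t)}$ is by definition an exact minimizer of the objective in \cref{eq:update_z}. Since that objective is convex and continuously differentiable in $z_j$ --- the loss $\ell(\cdot,y_j)$ plus a linear term plus a quadratic penalty --- the stationarity condition obtained by setting its $z_j$-gradient to zero at $z_j^{(t)}$ is both necessary and sufficient, and this single equation is all we need.

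First I would differentiate the objective of \cref{eq:update_z} with respect to $z_j$. Writing $A_j := \sum_{k=1}^M {h_j^k}^{(t)} W_k^{(t)}$ for the aggregated server output, the three terms contribute $\nabla\ell(z_j)$, $-\lambda_j^{(t-1)}$, and $\rho\,(z_j - A_j)$ respectively, where the last arises from differentiating $\tfrac{\rho}{2}\|A_j - z_j\|_F^2$. Evaluating at the minimizer $z_j^{(t)}$ and setting the sum to zero yields $\nabla\ell(z_j^{(t)}) = \lambda_j^{(t-1)} + \rho\,(A_j - z_j^{(t)})$.

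The concluding step is to observe that the right-hand side is verbatim the dual-ascent update \cref{eq:update_lambda}, which sets $\lambda_j^{(t)} = \lambda_j^{(t-1)} + \rho\,(A_j - z_j^{(t)})$. Substituting gives $\nabla\ell(z_j^{(t)}) = \lambda_j^{(t)}$, completing the argument.

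There is no genuine analytic obstacle here; the only point demanding care --- and hence the ``hardest'' part --- is bookkeeping the sign of the penalty gradient (so that $-\rho(A_j - z_j)$ does not slip in with the wrong orientation) and matching the term ordering of the dual update so the two right-hand sides coincide exactly. This identity is deliberately isolated as a lemma because it is the lever used repeatedly downstream: it lets one eliminate $\nabla\ell(z_j^{(t)})$ in favour of $\lambda_j^{(t)}$ when bounding the per-round change of $\mathcal{L}_{\mathrm{ADMM}}$ and when reading off the stationarity conditions in \cref{eq:conv-stationary}.
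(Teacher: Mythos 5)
Your proof is correct and is essentially identical to the paper's: both derive the first-order optimality condition $\nabla\ell(z_j^{(t)}) = \lambda_j^{(t-1)} + \rho\bigl(\sum_{k=1}^M {h_j^k}^{(t)} W_k^{(t)} - z_j^{(t)}\bigr)$ from the $z_j$-subproblem in Eq.~\ref{eq:update_z} and then substitute the dual update Eq.~\ref{eq:update_lambda} to conclude $\nabla\ell(z_j^{(t)}) = \lambda_j^{(t)}$. No gaps.
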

\begin{proof}
According to the optimality of $z_j^{(t)}$ \cref{eq:update_z}   
\begin{align}
    \nabla \ell( z_j^{(t)}) - 
    {  \lambda_{j}^{(t-1)}}  - \rho \left(   \sum_{k=1}^M  {f(x_j^k;\theta_k^{(t)} )} W_k^{(t)} -  z_j^{(t)} \right)=0 , \forall j \in B(t)
\end{align}
then invoke \cref{eq:update_lambda}
$
\lambda^{(t)}_{j} =  \lambda^{(t-1)}_{j}+\rho\left(\sum_{k=1}^M {f(x_j^k;\theta_k^{(t)})} W_k^{(t)} -z^{(t)}_{j}\right)
$, so we have $ \nabla \ell( z_j^{(t)})  = \lambda^{(t)}_{j} $.
\end{proof}

\begin{lemma}\label{lemma:z_lambda}
\begin{equation}
\| \lambda_j^{(t)} -  \lambda_j^{(t-1)} \| \leq L \|  z_j^{(t)} -  z_j^{(t-1)}\|
\end{equation}
\end{lemma}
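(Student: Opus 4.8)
The plan is to obtain this estimate as an immediate corollary of the preceding Lemma~\ref{lemma:zgrad_lambda} together with the Lipschitz-smoothness assumption on the loss $\ell$ stated in Theorem~\ref{theorem:main-paper-conv}. The key observation is that the auxiliary-variable update in Eq.~\ref{eq:update_z} and the dual update in Eq.~\ref{eq:update_lambda} have already been shown, in Lemma~\ref{lemma:zgrad_lambda}, to enforce the identity $\nabla \ell(z_j^{(t)}) = \lambda_j^{(t)}$ at every round. Since the analysis is carried out in the full-batch regime $B(t)=[N]$, this identity holds for all $j\in[N]$ at every round $t$, so in particular it is valid at both round $t$ and round $t-1$ for the same index $j$.

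First I would invoke Lemma~\ref{lemma:zgrad_lambda} at rounds $t$ and $t-1$ to write $\lambda_j^{(t)} = \nabla \ell(z_j^{(t)})$ and $\lambda_j^{(t-1)} = \nabla \ell(z_j^{(t-1)})$. Subtracting these two equalities yields
\begin{equation}
\lambda_j^{(t)} - \lambda_j^{(t-1)} = \nabla \ell(z_j^{(t)}) - \nabla \ell(z_j^{(t-1)}).
\end{equation}
Taking norms on both sides and then applying the $L$-Lipschitz smoothness of $\ell$ with respect to $z$ (i.e.\ $\|\nabla\ell(z) - \nabla\ell(z')\| \le L\|z - z'\|$) gives
\begin{equation}
\|\lambda_j^{(t)} - \lambda_j^{(t-1)}\| = \|\nabla \ell(z_j^{(t)}) - \nabla \ell(z_j^{(t-1)})\| \le L\,\|z_j^{(t)} - z_j^{(t-1)}\|,
\end{equation}
which is exactly the claimed bound.

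There is essentially no hard step here: the entire content of the lemma is packaged into Lemma~\ref{lemma:zgrad_lambda} (which does the real work of relating the dual variable to the gradient of the loss) and into the smoothness hypothesis on $\ell$. The only point requiring minor care is the applicability of Lemma~\ref{lemma:zgrad_lambda} at two consecutive rounds for the same sample $j$, which is guaranteed by the full-batch assumption; outside the full-batch setting one would need $j\in B(t)\cap B(t-1)$, but that subtlety does not arise under the convergence analysis assumptions of this section. This lemma will then feed into the descent argument for $\mathcal{L}_{\mathrm{ADMM}}$, where controlling the dual-variable increments by the primal increments $\|z_j^{(t)}-z_j^{(t-1)}\|$ is what allows the condition $\rho > 2L^2/\mu_z$ to absorb the dual-ascent term and secure monotone decrease (Eq.~\ref{eq:conv-mono-decrease}).
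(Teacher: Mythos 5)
Your proof is correct and is essentially identical to the paper's own argument: both invoke Lemma~\ref{lemma:zgrad_lambda} at rounds $t$ and $t-1$ to rewrite the dual variables as $\nabla\ell(z_j^{(t)})$ and $\nabla\ell(z_j^{(t-1)})$, then apply the $L$-Lipschitz smoothness of $\ell$ (Assumption~\ref{assume:loss_lip}). Your added remark about needing $j \in B(t)\cap B(t-1)$ outside the full-batch setting is a sensible observation but does not change the argument, since the convergence analysis explicitly assumes $B(t)=[N]$.
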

\begin{proof}
According to \cref{assume:loss_lip} and \cref{lemma:zgrad_lambda}, we have
\begin{align}
    \| \lambda_j^{(t)} -  \lambda_j^{(t-1)} \| =    \|  \nabla \ell( z_j^{(t)})  -  \nabla \ell( z_j^{(t-1)}) \| 
    \leq L \|  z_j^{(t)} -  z_j^{(t-1)}\|
\end{align}
\end{proof}

\begin{lemma}\cite[Lemma 3]{hu2019learning}\label{lemma:trick}
\begin{align}
& \left(\left\|\sum_{m=1}^M x_m^{t+1}-z\right\|^2-\left\|\sum_{m=1}^M x_m^t-z\right\|^2\right)\\
& \leq \sum_{m=1}^M\left(\left\|\sum_{\substack{k=1 \\
k \neq m}}^M x_k^t+x_m^{t+1}-z\right\|^2-\left\|\sum_{m=1}^M x_m^t-z\right\|^2\right) 
+  \sum_{m=1}^M\left\|x_m^{t+1}-x_m^t\right\|^2
\end{align}
\end{lemma}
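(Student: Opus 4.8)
The plan is to reduce the statement to a purely algebraic, dimension-free inequality by absorbing the common vector $z$ and all the unchanged blocks into a single quantity. Concretely, I would fix the round $t$ and introduce the abbreviations $S := \sum_{m=1}^M x_m^t - z$ for the current aggregate residual and $\Delta_m := x_m^{t+1} - x_m^t$ for the per-block increments. The purpose of these substitutions is that every squared norm in the statement becomes an affine expression in $S$ and the $\Delta_m$: since $\sum_{k\neq m} x_k^t = \big(\sum_{k=1}^M x_k^t\big) - x_m^t$, one obtains the two identities
\begin{equation}
\sum_{k\neq m} x_k^t + x_m^{t+1} - z = S + \Delta_m, \qquad \sum_{m=1}^M x_m^{t+1} - z = S + \sum_{m=1}^M \Delta_m .
\end{equation}
After this rewriting the lemma no longer references $z$ or the individual iterates $x_m^t$, only $S$ and the increments $\Delta_m$.

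Next I would expand each squared norm with the polarization identity $\|u+v\|^2 = \|u\|^2 + 2\langle u,v\rangle + \|v\|^2$, taking $u = S$ throughout. The left-hand side becomes $2\langle S, \sum_m \Delta_m\rangle + \|\sum_m \Delta_m\|^2$, while a single summand of the first term on the right becomes $\|S+\Delta_m\|^2 - \|S\|^2 = 2\langle S, \Delta_m\rangle + \|\Delta_m\|^2$. Summing over $m$, the right-hand side acquires a linear part $2\langle S, \sum_m \Delta_m\rangle$ that is \emph{identical} to the one appearing on the left. This cancellation of the cross terms is the heart of the argument: it eliminates all dependence on $S$ (hence on $z$ and on the current iterate) and leaves behind an inequality involving only the increments.

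What remains after the cancellation is an inequality of the form $\|\sum_m \Delta_m\|^2 \le c \sum_m \|\Delta_m\|^2$, i.e. a bound on the norm of a sum of $M$ vectors in terms of the sum of their norms, which I would settle with Cauchy--Schwarz (equivalently, convexity of $\|\cdot\|^2$), yielding $\|\sum_m \Delta_m\|^2 \le M \sum_m \|\Delta_m\|^2$. The main obstacle I anticipate is not any genuine difficulty but the careful bookkeeping of this multiplicative constant: the cross-term cancellation is automatic, so the entire content of the lemma is the norm-of-a-sum estimate, and one must track which summands of the first right-hand term already contribute a copy of $\|\Delta_m\|^2$ in order not to double-count, then reconcile the resulting coefficient with the one multiplying $\sum_m \|x_m^{t+1}-x_m^t\|^2$ in the stated bound. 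Once this constant is matched, the reduced inequality is independent of $z$ and of the current iterate, with all of the $M$-dependence concentrated in that single factor, and the claim follows immediately.
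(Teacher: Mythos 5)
Your reduction is correct, and it is essentially the only natural route: with $S:=\sum_{m=1}^M x_m^t - z$ and $\Delta_m := x_m^{t+1}-x_m^t$, the linear terms $2\langle S,\sum_m\Delta_m\rangle$ on the two sides cancel exactly as you say, and the lemma becomes equivalent to the $z$-free inequality $\bigl\|\sum_{m=1}^M\Delta_m\bigr\|^2\le 2\sum_{m=1}^M\|\Delta_m\|^2$ (one copy of $\sum_m\|\Delta_m\|^2$ supplied by the first right-hand sum, one by the explicit increment term). Note that the paper itself offers no proof to compare against --- the inequality is quoted verbatim from Hu et al. --- so your proposal has to stand on its own.

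It does not, and the failure is precisely at the point you dismissed as bookkeeping. Cauchy--Schwarz gives $\bigl\|\sum_{m}\Delta_m\bigr\|^2\le M\sum_{m}\|\Delta_m\|^2$, and this factor $M$ is tight (equality when all $\Delta_m$ coincide); it cannot be improved to the required factor $2$ once $M\ge 3$. Consequently the reduced inequality, and hence the lemma as printed, is \emph{false} for $M\ge 3$: take $z=0$, $x_m^t=0$ and $x_m^{t+1}=v\neq 0$ for every $m$, so that the left side equals $M^2\|v\|^2$ while the right side equals $M\|v\|^2+M\|v\|^2=2M\|v\|^2<M^2\|v\|^2$. (For $M=2$ this configuration gives equality, which is why the statement survives only for $M\le 2$.) The correct general statement, obtained by bounding the leftover cross terms $\sum_{m\ne k}\langle\Delta_m,\Delta_k\rangle$ with Young's inequality, carries the coefficient $M-1$ on $\sum_m\|x_m^{t+1}-x_m^t\|^2$ rather than $1$. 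So your plan, executed honestly, would have refuted the statement rather than proved it; the repair is to prove the $(M-1)$-version, and downstream this means the admissible range for $\rho$ in the paper's Theorem 1 must absorb an extra factor of $M-1$ at the two places (the bounds on $T_3$ and $T_4$) where this lemma is invoked.
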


\subsubsection{Proofs for \cref{theorem:main-paper-conv}}

We restate our assumptions here in \cref{theorem:main-paper-conv}:

\begin{assumption}\label{assume:loss_lip}
 $\ell(z;\cdot)$ is $L$-Lipschitz smooth w.r.t $z$.
\end{assumption}
\begin{assumption} \label{assume:strong_convex}
$\mathcal{L}_{\mathrm{ADMM}} $ is strongly convex w.r.t $z$, $W$, $\theta$ with  constant $\mu_{z}$, $\mu_{W}$, $\mu_{\theta}$ respectively. 
\end{assumption}

\begin{assumption}\label{assume:w_theta_upper_bound}
The norm of $W_k$ is bounded $\|W_k\| \leq \sigma_{W}$.
The local model $f(\cdot;\theta)$ has bounded gradient $\|\nabla f(\cdot;\theta) \|\leq L_{\theta}$ and  bounded output norm $\|f(\cdot; \theta)\| \leq \sigma_{\theta}$.
\end{assumption}

\begin{assumption}\label{assume:vim_lower_bounded} The original objective function
$ \mathcal{L}_{\mathrm{\name}}$ is bounded from below over $\Theta$ and $\mathcal{W}$, that is 
$\underline{e} := \min_{\{\theta_k\} \in \Theta,  \{W_k\} \in \mathcal{W} }    \mathcal{L}_{\mathrm{\name}} ( \{W_k\}_{k=1}^M,\{\theta_k\}_{k=1}^M ) > - \infty.
$
\end{assumption}

\subsubsection{Proofs for \cref{theorem:main-paper-conv} Part (A)}

We decompose \cref{theorem:main-paper-conv} part (A) into the below two lemmas and prove them one-by-one. 
\begin{lemma}\label{lemma:admm_decrease}
Let  \cref{assume:loss_lip} to \cref{assume:w_theta_upper_bound}
hold,
and there exists a  penalty parameter $\rho$ satisfying 
\begin{equation}\label{eq:rho-condition}
   \max\{L, \frac{2L^2}{ \mu_z }\} < \rho < \min\{\frac{\mu_{\theta}}{   L_{\theta}^{2} \sigma_{W}^2},\frac{\mu_{W}}{ \sigma_{\theta}^2 }  \}
\end{equation} 
then
$\mathcal{L}_{\mathrm{ADMM}} $  is monotonically decreasing:
\begin{equation}
\begin{aligned}
\mathcal{L}_{\mathrm{ADMM}} ( \{W_k^{(t+1)}\}, \{\theta_k^{(t+1)}\},\{z_j^{(t+1)}\},\{\lambda_j^{(t+1)}\} )  - \mathcal{L}_{\mathrm{ADMM}} ( \{W_k^{(t)}\}, \{\theta_k^{(t)}\} ,\{z_j^{(t)}\},\{\lambda_j^{(t)}\}) < 0.
\end{aligned}
\end{equation}
\end{lemma}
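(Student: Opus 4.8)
The plan is to control the one-round change $\mathcal{L}_{\mathrm{ADMM}}(\{W_k^{(t+1)}\},\{\theta_k^{(t+1)}\},\{z_j^{(t)}\},\{\lambda_j^{(t)}\}) - \mathcal{L}_{\mathrm{ADMM}}(\{W_k^{(t)}\},\{\theta_k^{(t)}\},\{z_j^{(t-1)}\},\{\lambda_j^{(t-1)}\})$ by splitting it, via a telescoping sum, into the four sequential block updates performed in \cref{algo:vimadmm}: the dual ascent on $\{\lambda_j\}$ (\cref{eq:update_lambda}) and the block minimizations over $\{z_j\}$, $\{W_k\}$, $\{\theta_k\}$ (\cref{eq:update_z,eq:update_W,eq:update_theta}). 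Following the augmented-Lagrangian analysis of Hong et al., each term changes only one block while holding the others fixed. The dual step is the only one that can \emph{increase} $\mathcal{L}_{\mathrm{ADMM}}$; the other three are exact minimizations and decrease it. The goal is to show the guaranteed decreases dominate the dual increase, which is exactly what the two-sided bound on $\rho$ in \cref{eq:rho-condition} is designed to enforce.

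For the dual term, since $\mathcal{L}_{\mathrm{ADMM}}$ is affine in $\lambda$, its increase equals $\frac{1}{N}\sum_j (\lambda_j^{(t)}-\lambda_j^{(t-1)})^\top(\sum_k f(x_j^k;\theta_k^{(t)}) W_k^{(t)} - z_j^{(t)})$; substituting \cref{eq:update_lambda} rewrites this as $\frac{1}{\rho N}\sum_j\|\lambda_j^{(t)}-\lambda_j^{(t-1)}\|^2$, and \cref{lemma:z_lambda} then bounds it by $\frac{L^2}{\rho N}\sum_j\|z_j^{(t)}-z_j^{(t-1)}\|^2$. For the $z$ step, because $z_j^{(t)}$ is the exact minimizer of the $\mu_z$-strongly convex objective $h(z_j)$, the induced decrease is at least $\frac{\mu_z}{2N}\sum_j\|z_j^{(t)}-z_j^{(t-1)}\|^2$. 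The net contribution of these two terms is nonpositive precisely when $\frac{L^2}{\rho}\le\frac{\mu_z}{2}$, i.e. $\rho>\frac{2L^2}{\mu_z}$, which is the lower bound in \cref{eq:rho-condition}.

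The main obstacle is the $\{W_k\}$ and $\{\theta_k\}$ steps, since \cref{algo:vimadmm} updates all heads (resp. all local models) \emph{in parallel} rather than Gauss--Seidel, so the shared penalty $\|\sum_k f(x_j^k;\theta_k) W_k - z_j\|^2$ prevents a direct per-block descent inequality. Here I would apply \cref{lemma:trick} with $x_m = f(x_j^m;\theta_m)W_m$ to relate the change of the fully-updated penalty to the sum of single-block changes plus the correction $\sum_m\|x_m^{(t+1)}-x_m^{(t)}\|^2$. Each single-block change is controlled by the $\mu_W$- (resp.\ $\mu_\theta$-) strong convexity of $g_k$ (resp.\ $q_k$), giving a decrease $\tfrac{\mu_W}{2}\|W_k^{(t+1)}-W_k^{(t)}\|^2$ (resp.\ $\tfrac{\mu_\theta}{2}\|\theta_k^{(t+1)}-\theta_k^{(t)}\|^2$), while the correction is bounded through \cref{assume:w_theta_upper_bound}: for $W$ by $\|f(\cdot;\theta)\|\le\sigma_\theta$, yielding $\tfrac{\rho\sigma_\theta^2}{2}\|W_k^{(t+1)}-W_k^{(t)}\|^2$; for $\theta$ by $\|\nabla f\|\le L_\theta$ together with $\|W_k\|\le\sigma_W$, yielding $\tfrac{\rho L_\theta^2\sigma_W^2}{2}\|\theta_k^{(t+1)}-\theta_k^{(t)}\|^2$. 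Hence the $W$ step gives a net decrease as long as $\rho<\frac{\mu_W}{\sigma_\theta^2}$ and the $\theta$ step as long as $\rho<\frac{\mu_\theta}{L_\theta^2\sigma_W^2}$, which are exactly the two upper bounds in \cref{eq:rho-condition}.

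Assembling the four bounds, the total change is at most $-(\tfrac{\mu_z}{2}-\tfrac{L^2}{\rho})\tfrac{1}{N}\sum_j\|z_j^{(t)}-z_j^{(t-1)}\|^2 - \tfrac{1}{N}\sum_j\sum_k\big[(\tfrac{\mu_W}{2}-\tfrac{\rho\sigma_\theta^2}{2})\|W_k^{(t+1)}-W_k^{(t)}\|^2 + (\tfrac{\mu_\theta}{2}-\tfrac{\rho L_\theta^2\sigma_W^2}{2})\|\theta_k^{(t+1)}-\theta_k^{(t)}\|^2\big]$, and under \cref{eq:rho-condition} every coefficient is strictly positive, so this is strictly negative whenever the iterates move, establishing the claimed monotone decrease. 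I expect the delicate points to be the correct index bookkeeping in the telescoping sum and the careful use of \cref{lemma:trick} to absorb the parallel-update coupling; the constraint $\rho>L$ in \cref{eq:rho-condition} is not needed here and is instead reserved for the lower-boundedness argument behind \cref{eq:conv-lower-bonud}.
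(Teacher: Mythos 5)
Your proposal is correct and takes essentially the same route as the paper's proof: the same four-way telescoping into per-block changes, the same use of \cref{lemma:z_lambda} for the dual increase, \cref{lemma:trick} plus block strong convexity and \cref{assume:w_theta_upper_bound} for the parallel $\{W_k\}$ and $\{\theta_k\}$ steps, the same combined bound, and the same (correct) observation that only $\rho > 2L^2/\mu_z$ and the two upper bounds on $\rho$ are needed here while $\rho \geq L$ is reserved for the lower-bound lemma. The only difference is bookkeeping: you telescope over one algorithmic round ($z^{(t-1)}\!\to\!z^{(t)}$, $\lambda^{(t-1)}\!\to\!\lambda^{(t)}$, $W^{(t)}\!\to\!W^{(t+1)}$, $\theta^{(t)}\!\to\!\theta^{(t+1)}$), whereas the paper's $T_1,\dots,T_4$ span the tail of round $t$ and the head of round $t+1$; since the per-step bounds are identical, this offset is immaterial.
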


\begin{lemma}\label{lemma:admm_lower_bound}
Let  \cref{assume:loss_lip} to  \cref{assume:vim_lower_bounded} hold, then
the  following limit exists and $\mathcal{L}_{\mathrm{ADMM}}$ is lower bounded by \underline{e} defined in \cref{assume:vim_lower_bounded}:
\begin{equation}
\begin{aligned}
\lim_{t\rightarrow \infty} \mathcal{L}_{\mathrm{ADMM}} ( \{W_k^{(t)}\}, \{\theta_k^{(t)}\} ,\{z_j^{(t)}\},\{\lambda_j^{(t)}\}) \geq \underline{e}.
\end{aligned}
\end{equation}
\end{lemma}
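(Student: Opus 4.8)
The plan is to lower-bound the augmented Lagrangian at each iterate by the original \name objective evaluated at the same $(\{W_k\},\{\theta_k\})$, and then invoke \cref{assume:vim_lower_bounded} together with the monotonicity already established in \cref{lemma:admm_decrease}. Writing $\hat{y}_j^{(t)} = \sum_{k=1}^M f(x_j^k;\theta_k^{(t)})W_k^{(t)}$ for the server prediction, the only terms in $\mathcal{L}_{\mathrm{ADMM}}$ that are not already part of $\mathcal{L}_{\mathrm{\name}}$ are the linear coupling term $\frac{1}{N}\sum_j {\lambda_j^{(t)}}^\top(\hat{y}_j^{(t)} - z_j^{(t)})$, the quadratic penalty $\frac{\rho}{2N}\sum_j\|\hat{y}_j^{(t)}-z_j^{(t)}\|_F^2$, and the fact that the loss is evaluated at $z_j^{(t)}$ rather than at $\hat{y}_j^{(t)}$. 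The goal is to show these discrepancies are jointly nonnegative.

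First I would use \cref{lemma:zgrad_lambda}, which gives $\lambda_j^{(t)} = \nabla\ell(z_j^{(t)})$, to rewrite the coupling term with the gradient of the loss. This lets me apply the descent inequality implied by $L$-Lipschitz smoothness (\cref{assume:loss_lip}): taking $a=\hat{y}_j^{(t)}$ and $b=z_j^{(t)}$ in the quadratic upper bound yields
\[
\ell(z_j^{(t)},y_j) + {\lambda_j^{(t)}}^\top(\hat{y}_j^{(t)}-z_j^{(t)}) \ge \ell(\hat{y}_j^{(t)},y_j) - \tfrac{L}{2}\|\hat{y}_j^{(t)}-z_j^{(t)}\|_F^2.
\]
Adding the penalty term $\frac{\rho}{2}\|\hat{y}_j^{(t)}-z_j^{(t)}\|_F^2$ to both sides leaves a residual $\frac{\rho-L}{2}\|\hat{y}_j^{(t)}-z_j^{(t)}\|_F^2$, which is nonnegative precisely because the condition in \cref{eq:rho-condition} forces $\rho > L$.

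Summing over $j$ and adding back the untouched regularizers $\sum_k \beta_k\mathcal{R}_k(\theta_k^{(t)})$ and $\sum_k \beta_k\mathcal{R}_k(W_k^{(t)})$, the right-hand side is exactly $\mathcal{L}_{\mathrm{\name}}(\{W_k^{(t)}\},\{\theta_k^{(t)}\})$ plus a nonnegative quantity, so $\mathcal{L}_{\mathrm{ADMM}}^{(t)} \ge \mathcal{L}_{\mathrm{\name}}(\{W_k^{(t)}\},\{\theta_k^{(t)}\}) \ge \underline{e}$ by \cref{assume:vim_lower_bounded}. Hence every iterate of $\mathcal{L}_{\mathrm{ADMM}}$ is bounded below by $\underline{e}$. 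Finally, combining this uniform lower bound with the strict monotone decrease from \cref{lemma:admm_decrease}, the monotone convergence theorem guarantees that $\lim_{t\to\infty}\mathcal{L}_{\mathrm{ADMM}}^{(t)}$ exists and is no smaller than $\underline{e}$, which is the claim.

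I expect the main obstacle to be the bookkeeping in the first two steps: correctly identifying $\lambda_j^{(t)}$ with $\nabla\ell(z_j^{(t)})$ via \cref{lemma:zgrad_lambda} and then orienting the $L$-smoothness inequality in the right direction so that the loss-at-$z_j$ term converts into loss-at-$\hat{y}_j$ at the cost of a $-\frac{L}{2}\|\cdot\|^2$ term that the penalty $\rho$ can absorb. Everything downstream (recognizing $\mathcal{L}_{\mathrm{\name}}$ and applying monotone convergence) is routine once the $\rho>L$ slack is isolated.
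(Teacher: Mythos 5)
Your proposal is correct and follows essentially the same route as the paper's proof: substitute $\lambda_j^{(t)} = \nabla\ell(z_j^{(t)})$ via \cref{lemma:zgrad_lambda}, apply the $L$-smoothness descent inequality to convert the loss at $z_j^{(t)}$ plus the coupling term into the loss at the server prediction minus $\tfrac{L}{2}\|\cdot\|^2$, absorb that deficit into the quadratic penalty using $\rho > L$ from \cref{eq:rho-condition}, and conclude $\mathcal{L}_{\mathrm{ADMM}}^{(t)} \geq \mathcal{L}_{\mathrm{\name}}(\{W_k^{(t)}\},\{\theta_k^{(t)}\}) \geq \underline{e}$. Your final appeal to monotone convergence (combining with \cref{lemma:admm_decrease}) is also how the paper obtains existence of the limit, though it defers that step to the proof of \cref{theorem:main-paper-conv}(A).
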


We first present the proof for the monotonically decreasing property  of $\mathcal{L}_{\mathrm{ADMM}}$ in \cref{lemma:admm_decrease}.

\begin{proof}[Proof for \cref{lemma:admm_decrease}]
\begin{equation}
\begin{aligned}
    & \mathcal{L}_{\mathrm{ADMM}} ( \{W_k^{(t+1)}\}, \{\theta_k^{(t+1)}\},\{z_j^{(t+1)}\},\{\lambda_j^{(t+1)}\} ) - \mathcal{L}_{\mathrm{ADMM}} ( \{W_k^{(t)}\}, \{\theta_k^{(t)}\} ,\{z_j^{(t)}\},\{\lambda_j^{(t)}\}) \\
     = &   \underbrace{\mathcal{L}_{\mathrm{ADMM}} ( \{W_k^{(t+1)}\}, \{\theta_k^{(t+1)}\},\{z_j^{(t+1)}\},\{\lambda_j^{(t+1)}\} ) - \mathcal{L}_{\mathrm{ADMM}} ( \{W_k^{(t+1)}\}, \{\theta_k^{(t+1)}\},\{z_j^{(t+1)}\},\{\lambda_j^{(t)}\} )}_{T_1} \\
    & +\underbrace{ \mathcal{L}_{\mathrm{ADMM}} ( \{W_k^{(t+1)}\}, \{\theta_k^{(t+1)}\},\{z_j^{(t+1)}\},\{\lambda_j^{(t)}\} ) 
    -  \mathcal{L}_{\mathrm{ADMM}}( \{W_k^{(t+1)}\}, \{\theta_k^{(t+1)}\},\{z_j^{(t)}\},\{\lambda_j^{(t)}\}  ) 
    }_{T_2}\\
    &+ \underbrace{ \mathcal{L}_{\mathrm{ADMM}}( \{W_k^{(t+1)}\}, \{\theta_k^{(t+1)}\},\{z_j^{(t)}\},\{\lambda_j^{(t)}\} 
    -  \mathcal{L}_{\mathrm{ADMM}}( \{W_k^{(t+1)}\}, \{\theta_k^{(t)}\},\{z_j^{(t)}\},\{\lambda_j^{(t)}\}  ) 
    }_{T_3}\\
    & +
    \underbrace{  \mathcal{L}_{\mathrm{ADMM}} ( \{W_k^{(t+1)}\}, \{\theta_k^{(t)}\},\{z_j^{(t)}\},\{\lambda_j^{(t)}\} 
    -  \mathcal{L}_{\mathrm{ADMM}} ( \{W_k^{(t)}\}, \{\theta_k^{(t)}\} ,\{z_j^{(t)}\},\{\lambda_j^{(t)}\}) 
    }_{T_4}
\end{aligned}
\end{equation}

Recall ADMM objective function
\begin{align*}
 & \mathcal{L}_{\mathrm{ADMM}}  = \frac{1}{N}\sum_{j=1}^N  \ell (z_j,y_j) + \sum_{k=1}^M \beta_k   \mathcal{R}_{k}(\theta_k) +  \sum_{k=1}^M \beta_k  \mathcal{R}_{k}(W_k)   \\
 & + \frac{1}{N}\sum_{j=1}^N  \lambda_{j}^\top \left (   \sum_{k=1}^M f(x_j^k;\theta_k)  W_{k} - z_j \right)+ \frac{\rho}{2N}  \sum_{j=1}^N \left\|  \sum_{k=1}^M f(x_j^k;\theta_k)  W_{k} - z_j\right\|_F^2     
\end{align*}

Then we have 
\begin{small}
\begin{align*}
   T_1 
  & = \mathcal{L}_{\mathrm{ADMM}} ( \{W_k^{(t+1)}\}, \{\theta_k^{(t+1)}\},\{z_j^{(t+1)}\},\{\lambda_j^{(t+1)}\} ) - \mathcal{L}_{\mathrm{ADMM}} ( \{W_k^{(t+1)}\}, \{\theta_k^{(t+1)}\},\{z_j^{(t+1)}\},\{\lambda_j^{(t)}\} )\\ 
    & =  \frac{1}{N} \sum_{j=1}^N  (\lambda_{j}^{(t+1)}-\lambda_{j}^{(t)})  ^\top \left (   \sum_{k=1}^M f(x_j^k;\theta_k^{(t+1)})  W_{k}^{(t+1)} - z_j^{(t+1)} \right) \\
    & \overset{(a)}{=}  \frac{1}{N} \sum_{j=1}^N  \frac{1}{\rho}  \|\lambda_{j}^{(t+1)}-\lambda_{j}^{(t)}\|^2\\
     & \overset{(b)}{\leq}   \sum_{j=1}^N  \frac{L^2}{\rho N}  \|z_{j}^{(t+1)}-z_{j}^{(t)}\|^2
\end{align*}
\end{small}
where 
(a) we use the \cref{eq:update_lambda} that 
$\frac{1}{\rho} (\lambda^{(t)}_{j} - \lambda^{(t-1)}_{j}) = \left(\sum_{k=1}^M {f(x_j^k;\theta_k^{(t)})} W_k^{(t)} -z^{(t)}_{j}\right) $; 
(b) we use \cref{lemma:z_lambda}.

\begin{small}
\begin{align*}
T_2 & =\mathcal{L}_{\mathrm{ADMM}} ( \{W_k^{(t+1)}\}, \{\theta_k^{(t+1)}\},\{z_j^{(t+1)}\},\{\lambda_j^{(t)}\} ) 
    -  \mathcal{L}_{\mathrm{ADMM}}( \{W_k^{(t+1)}\}, \{\theta_k^{(t+1)}\},\{z_j^{(t)}\},\{\lambda_j^{(t)}\}  ) \\
& =   \frac{1}{N}  \sum_{j=1}^N  \left(\ell (z_j^{(t+1)}) -  \ell (z_j^{(t)}) \right) 
 - \frac{1}{N} \sum_{j=1}^N  {\lambda_{j}^{(t)}}^\top \left (     z_j^{(t+1)} - z_j^{(t)}\right) \\
 +&  \frac{\rho}{2N}  \sum_{j=1}^N  \left( \left\|  \sum_{k=1}^M f(x_j^k;\theta_k^{(t+1)})  W_{k}^{(t+1)} - z_j^{(t+1)}\right\|_F^2 - \left\|  \sum_{k=1}^M f(x_j^k;\theta_k^{(t+1)})  W_{k}^{(t+1)} - z_j^{(t)}\right\|_F^2  \right) \\
&=  \frac{1}{N} \sum_{j=1}^N  \left( h(z_j^{(t+1)})-  h(z_j^{(t)}) \right) \\
& \overset{(a)}{\leq} \frac{1}{N}  \sum_{j=1}^N  \left( \left \langle  \nabla h (z_j^{(t+1)})  ,  z_j^{(t+1)} - z_j^{(t)} \right \rangle  - \frac{\mu_z}{2} \| z_j^{(t+1)} - z_j^{(t)}\|^2 \right) \\
& \overset{(b)}{=} - \frac{\mu_z}{2N} \sum_{j=1}^N   \| z_j^{(t+1)} - z_j^{(t)}\|^2 \\
\end{align*}
\end{small}
where (a) strong convex of $h$ \cref{assume:strong_convex} , (b) optimality of z update at \cref{eq:update_z}.

\begin{small}
\begin{align*}
 T_3 = &  \mathcal{L}_{\mathrm{ADMM}}( \{W_k^{(t+1)}\}, \{\theta_k^{(t+1)}\},\{z_j^{(t)}\},\{\lambda_j^{(t)}\}
    -  \mathcal{L}_{\mathrm{ADMM}}( \{W_k^{(t+1)}\}, \{\theta_k^{(t)}\},\{z_j^{(t)}\},\{\lambda_j^{(t)}\}  ) \\
  = & \sum_{k=1}^M \beta_k  \left( \mathcal{R}_{k}(\theta_k^{(t+1)})  - \mathcal{R}_{k}(\theta_k^{(t)})  \right) + 
  \frac{1}{N} \sum_{j=1}^N  {\lambda_{j}^{(t)}}^\top \left (   \sum_{k=1}^M \left( f(x_j^k;\theta_k^{(t+1)})W_k^{(t+1)}  - f(x_j^k;\theta_k^{(t)})W_k^{(t+1)}\right)     \right)  \\
  & + \frac{\rho}{2N}  \sum_{j=1}^N \left(  \left\|  \sum_{k=1}^M f(x_j^k;\theta_k^{(t+1)})  W_{k}^{(t+1)} - z_j^{(t)}\right\|_F^2 - 
    \left\|  \sum_{k=1}^M f(x_j^k;\theta_k^{(t)})  W_{k}^{(t+1)} - z_j^{(t)}\right\|_F^2 \right) \\
  \overset{(a)}{\leq} & \sum_{k=1}^M \beta_k  \left( \mathcal{R}_{k}(\theta_k^{(t+1)})  - \mathcal{R}_{k}(\theta_k^{(t)})  \right) + 
 \frac{1}{N}   \sum_{j=1}^N  {\lambda_{j}^{(t)}}^\top \left (   \sum_{k=1}^M \left( f(x_j^k;\theta_k^{(t+1)})W_k^{(t+1)}  - f(x_j^k;\theta_k^{(t)})W_k^{(t+1)}\right)     \right)  \\
  & + \frac{\rho}{2N}  \sum_{j=1}^N \sum_{k=1}^M \left( 
 \left\|   \sum_{ \substack{i\in [M] ,\\ i \neq k}}  {f(x_j^i;\theta_i^{(t)})}{W_{i}}^{(t+1)} +   f(x_j^k; \theta_k^{(t+1)} ) { W_{k}^{(t+1)} } -  {z_j}^{(t)}   \right\|_F^2 -   \left\|  \sum_{k=1}^M f(x_j^k;\theta_k^{(t)})  W_{k}^{(t+1)} - z_j^{(t)}\right\|_F^2  \right) \\
 & +\frac{\rho}{2N}  \sum_{j=1}^N \sum_{k=1}^M   \left\|   f(x_j^k;\theta_k^{(t+1)})  W_{k}^{(t+1)} -   f(x_j^k;\theta_k^{(t)})  W_{k}^{(t+1)} \right\|_F^2 \\
 =& \sum_{k=1}^M  \left( q_k(\theta_k^{(t+1)})- q_k(\theta_k^{(t)}) \right) + \frac{\rho}{2N}  \sum_{j=1}^N \sum_{k=1}^M   \left\|   f(x_j^k;\theta_k^{(t+1)})  W_{k}^{(t+1)} -   f(x_j^k;\theta_k^{(t)})  W_{k}^{(t+1)} \right\|_F^2 \\
  \overset{(b)}{\leq} &\sum_{k=1}^M  \left(  \left \langle  \nabla q_k (\theta_k^{(t+1)})  ,  \theta_k^{(t+1)} - \theta_k^{(t)} \right \rangle  - \frac{\mu_{\theta}}{2} \| \theta_k^{(t+1)} - \theta_k^{(t)}\|^2 \right) \\
  & \quad + \frac{\rho}{2N}  \sum_{j=1}^N \sum_{k=1}^M   \left\|   f(x_j^k;\theta_k^{(t+1)})  W_{k}^{(t+1)} -   f(x_j^k;\theta_k^{(t)})  W_{k}^{(t+1)} \right\|_F^2 \\
 \overset{(c)}{=} & \sum_{k=1}^M    - \frac{\mu_{\theta}}{2}   \| \theta_k^{(t+1)} - \theta_k^{(t)}\|^2 +  \frac{\rho}{2N}  \sum_{j=1}^N \sum_{k=1}^M   \left\|   f(x_j^k;\theta_k^{(t+1)})  -   f(x_j^k;\theta_k^{(t)}) \right\|^2  \| W_{k}^{(t+1)}\|^2  \\
\overset{(d)}{\leq} &  \sum_{k=1}^M    -\frac{ \mu_{\theta}}{2}\left\|   \theta_k^{(t+1)}  -   \theta_k^{(t)}\right\|^2 +  \frac{\rho   L_{\theta}^{2}}{2} \sum_{k=1}^M  \left\|   \theta_k^{(t+1)}  -   \theta_k^{(t)}\right\|^2   \| W_{k}^{(t+1)}\|^2   \\
\overset{(e)}{\leq} &   \sum_{k=1}^M \frac{-\mu_{\theta} +\rho   L_{\theta}^{2} \sigma_{W}^2  }{2}   \left\|   \theta_k^{(t+1)}  -   \theta_k^{(t)}\right\|^2   \\
\end{align*}
\end{small}
where (a) is due to \cref{lemma:trick}, (b) is due to the strong convex of $q_k$ \cref{assume:strong_convex},  (c) is due to the  optimality of $\theta_k$ in \cref{eq:update_theta}, (d) is due to the Lipschitz continuity of local model $f(\theta)$ \cref{assume:w_theta_upper_bound} ( bounded gradient implies Lipschitz continuity), and  (e) is due to the upper bound of the linear weights  in \cref{assume:w_theta_upper_bound}.

\begin{small}
\begin{align*}
   T_4 =& \mathcal{L}_{\mathrm{ADMM}} ( \{W_k^{(t+1)}\}, \{\theta_k^{(t)}\},\{z_j^{(t)}\},\{\lambda_j^{(t)}\}
    -  \mathcal{L}_{\mathrm{ADMM}} ( \{W_k^{(t)}\}, \{\theta_k^{(t)}\} ,\{z_j^{(t)}\},\{\lambda_j^{(t)}\}) \\
 = &\sum_{k=1}^M \beta_k  \left( \mathcal{R}_{k}(W_k^{(t+1)})  - \mathcal{R}_{k}(W_k^{(t)})  \right) + \frac{1}{N}
  \sum_{j=1}^N  {\lambda_{j}^{(t)}}^\top \left (   \sum_{k=1}^M f(x_j^k;\theta_k^{(t)})  \left ( W_k^{(t+1)} -  W_k^{(t)} \right)  \right)  \\
  & + \frac{\rho}{2N}  \sum_{j=1}^N \left(  \left\|  \sum_{k=1}^M f(x_j^k;\theta_k^{(t)})  W_{k}^{(t+1)} - z_j^{(t)}\right\|_F^2 - 
    \left\|  \sum_{k=1}^M f(x_j^k;\theta_k^{(t)})  W_{k}^{(t)} - z_j^{(t)}\right\|_F^2 \right)  \\
  \overset{(a)}{\leq} & \sum_{k=1}^M \beta_k  \left( \mathcal{R}_{k}(W_k^{(t+1)})  - \mathcal{R}_{k}(W_k^{(t)})  \right) + 
  \frac{1}{N} \sum_{j=1}^N  {\lambda_{j}^{(t)}}^\top \left (   \sum_{k=1}^M f(x_j^k;\theta_k^{(t)})  \left ( W_k^{(t+1)} -  W_k^{(t)} \right)  \right)  \\
  & + \frac{\rho}{2N}  \sum_{j=1}^N \sum_{k=1}^M \left(  \left\|  \sum_{ \substack{i\in [M] ,\\ i \neq k } }  f(x_j^i;\theta_i^{(t)})  {W_{i}}^{(t)} +   {f(x_j^k;\theta_k^{(t)})}  W_{k}^{(t+1)} - {z}_j^{(t)} \right\|_F^2 - 
    \left\|  \sum_{k=1}^M f(x_j^k;\theta_k^{(t)})  W_{k}^{(t)} - z_j^{(t)}\right\|_F^2 \right)  \\ 
&+    \frac{\rho}{2N}  \sum_{j=1}^N \sum_{k=1}^M \left \|    {f(x_j^k;\theta_k^{(t)})}  W_{k}^{(t+1)} -  {f(x_j^k;\theta_k^{(t)})}  W_{k}^{(t)}   \right\|_F^2 \\
= &   \sum_{k=1}^M  \left( g_k( W_{k}^{(t+1)}) - g_k( W_{k}^{(t)})\right) +    \frac{\rho}{2N}  \sum_{j=1}^N \sum_{k=1}^M \left \|    {f(x_j^k;\theta_k^{(t)})}  W_{k}^{(t+1)} -  {f(x_j^k;\theta_k^{(t)})}  W_{k}^{(t)}   \right\|_F^2 \\
\overset{(b)}{\leq} & \sum_{k=1}^M  \left(  \left \langle  \nabla g_k (W_k^{(t+1)})  ,  W_k^{(t+1)} - W_k^{(t)} \right \rangle  - \frac{\mu_{W}}{2} \| W_k^{(t+1)} - W_k^{(t)}\|^2 \right)   \\
& + \frac{\rho}{2N}  \sum_{j=1}^N \sum_{k=1}^M \left \|    {f(x_j^k;\theta_k^{(t)})}  W_{k}^{(t+1)} -  {f(x_j^k;\theta_k^{(t)})}  W_{k}^{(t)}   \right\|_F^2 \\
\overset{(c)}{\leq} & \sum_{k=1}^M  - \frac{\mu_{W}}{2}   \left\| W_k^{(t+1)} - W_k^{(t)}\right\|^2 +    \frac{\rho N}{2}  \sum_{k=1}^M \left \|   W_{k}^{(t+1)} - W_{k}^{(t)}   \right\|^2  \| {f(x_j^k;\theta_k^{(t)})}   \|^2\\
\overset{(d)}{\leq} &  \sum_{k=1}^M    \frac{-\mu_{W} + \rho N \sigma_{\theta}^2  }{2}  \left \|   W_{k}^{(t+1)} - W_{k}^{(t)}   \right\|^2  \\
\end{align*}
\end{small}
where (a) is due to \cref{lemma:trick}, (b) is due to strong convex of $g_k$ \cref{assume:strong_convex}, (c) is because of the optimality of $W_k$ in \cref{eq:update_W} and (d) is due to upper bound of the model outputs in \cref{assume:w_theta_upper_bound}.

Combining the above bounds for $T_1, T_2, T_3, T_4$ together and recall the condition for $\rho$ in \cref{eq:rho-condition}, we have
\begin{small}
\begin{align*}
    & \mathcal{L}_{\mathrm{ADMM}} ( \{W_k^{(t+1)}\}, \{\theta_k^{(t+1)}\},\{z_j^{(t+1)}\},\{\lambda_j^{(t+1)}\} ) - \mathcal{L}_{\mathrm{ADMM}} ( \{W_k^{(t)}\}, \{\theta_k^{(t)}\} ,\{z_j^{(t)}\},\{\lambda_j^{(t)}\}) \\
     = &T_1+ T_2  + T_3 +T_4 \\
    \leq  & \frac{1}{N}\sum_{j=1}^N \left(- \frac{\mu_z}{2}  + \frac{L^2}{\rho} \right) \|z_{j}^{(t+1)}-z_{j}^{(t)}\|^2 + \sum_{k=1}^M \frac{-\mu_{\theta} +\rho  L_{\theta}^{2} \sigma_{W}^2  }{2}   \left\|   \theta_k^{(t+1)}  -   \theta_k^{(t)}\right\|^2    + \sum_{k=1}^M    \frac{-\mu_{W} + \rho  \sigma_{\theta}^2}{2}  \left \|   W_{k}^{(t+1)} - W_{k}^{(t)}   \right\|^2  \\
    < & 0 
\end{align*}
\end{small}
Thus, proved.
\end{proof}

Then we provide the proof for the lower-bounded property of $\mathcal{L}_{\mathrm{ADMM}}$ for  \cref{lemma:admm_lower_bound}. 

\begin{proof}[Proof for \cref{lemma:admm_lower_bound}]

\begin{align*}
 &\mathcal{L}_{\mathrm{ADMM}} ( \{W_k^{(t)}\}, \{\theta_k^{(t)}\} ,\{z_j^{(t)}\},\{\lambda_j^{(t)}\})\\  
 = & \frac{1}{N} \sum_{j=1}^N  \ell (z_j^{(t)},y_j) + \sum_{k=1}^M \beta_k   \mathcal{R}_{k}(\theta_k^{(t)}) +  \sum_{k=1}^M \beta_k  \mathcal{R}_{k}(W_k^{(t)})   \\
 &+ \frac{1}{N} \sum_{j=1}^N  {\lambda_{j}^{(t)}}^\top \left (   \sum_{k=1}^M f(x_j^k;\theta_k^{(t)})  W_{k}^{(t)} - z_j^{(t)} \right)+ \frac{\rho}{2N}  \sum_{j=1}^N \left\|  \sum_{k=1}^M f(x_j^k;\theta_k^{(t)})  W_{k}^{(t)} - z_j^{(t)}\right\|_F^2 \\
 \overset{(a)}{=}& \frac{1}{N} \sum_{j=1}^N  \ell (z_j^{(t)},y_j) + \sum_{k=1}^M \beta_k   \mathcal{R}_{k}(\theta_k^{(t)}) +  \sum_{k=1}^M \beta_k  \mathcal{R}_{k}(W_k^{(t)})   \\
 &+ \frac{1}{N} \sum_{j=1}^N  {\nabla \ell (z_j^{(t)})}^\top \left (   \sum_{k=1}^M f(x_j^k;\theta_k^{(t)})  W_{k}^{(t)} - z_j^{(t)} \right)+ \frac{\rho}{2N}  \sum_{j=1}^N \left\|  \sum_{k=1}^M f(x_j^k;\theta_k^{(t)})  W_{k}^{(t)} - z_j^{(t)}\right\|_F^2 \\
 \overset{(b)}{\geq}& \frac{1}{N}\sum_{j=1}^N  \ell \left(\sum_{k=1}^M f(x_j^k;\theta_k^{(t)})  W_{k}^{(t)} , y_j \right)  + \sum_{k=1}^M \beta_k   \mathcal{R}_{k}(\theta_k^{(t)}) +  \sum_{k=1}^M \beta_k  \mathcal{R}_{k}(W_k^{(t)})     \\
  & + \frac{\rho - L}{2N}  \sum_{j=1}^N \left\|  \sum_{k=1}^M f(x_j^k;\theta_k^{(t)})  W_{k}^{(t)} - z_j^{(t)}\right\|_F^2\\ 
 \overset{(c)}{\geq}& \frac{1}{N}\sum_{j=1}^N  \ell \left(\sum_{k=1}^M f(x_j^k;\theta_k^{(t)})  W_{k}^{(t)} , y_j \right)  + \sum_{k=1}^M \beta_k   \mathcal{R}_{k}(\theta_k^{(t)}) +  \sum_{k=1}^M \beta_k  \mathcal{R}_{k}(W_k^{(t)})     \\
  =&   \mathcal{L}_{\mathrm{\name}} ( \{W_k^{(t)}\}, \{\theta_k^{(t)}\})\\
  \geq & \underline{e}
\end{align*}
where (a) is due to \cref{lemma:zgrad_lambda}; (b) is due to Lipschitz continuity of  gradient of $\ell$ in  \cref{assume:loss_lip}  that 
\begin{small}
\begin{align*}
\ell \left(\sum_{k=1}^M f(x_j^k;\theta_k^{(t)})  W_{k}^{(t)} \right) - \ell \left( z_j^{(t)} \right) - \frac{L}{2}\left\| \sum_{k=1}^M f(x_j^k;\theta_k^{(t)})  W_{k}^{(t)}  - z_j^{(t)}  \right\| \leq 
\left \langle \nabla \ell (z_j^{(t)}) , \left (   \sum_{k=1}^M f(x_j^k;\theta_k^{(t)})  W_{k}^{(t)} - z_j^{(t)} \right) \right \rangle
\end{align*} 
\end{small}
and (c) is due to $\rho \geq L$  from the condition \cref{eq:rho-condition}.

The result show that $\mathcal{L}_{\mathrm{ADMM}} ( \{W_k^{(t)}\}, \{\theta_k^{(t)}\} ,\{z_j^{(t)}\},\{\lambda_j^{(t)}\})$ is lower bounded. Thus, proved. 
\end{proof}

\begin{proof}[Proof for \cref{theorem:main-paper-conv} (A)]
Combining \cref{lemma:admm_decrease} and \cref{lemma:admm_lower_bound}, we show that $\mathcal{L}_{\mathrm{ADMM}} ( \{W_k^{(t)}\}, \{\theta_k^{(t)}\} ,\{z_j^{(t)}\},\{\lambda_j^{(t)}\})$ is monotonically decreasing and is convergent. This completes the proof.
\end{proof}

\subsubsection{Proofs for \cref{theorem:main-paper-conv} Part (B)}

\begin{proof}[Proofs for \cref{theorem:main-paper-conv} (B)]
	
\cref{lemma:admm_decrease} implies that 
\begin{align*}
	 & \mathcal{L}_{\mathrm{ADMM}} ( \{W_k^{(t+1)}\}, \{\theta_k^{(t+1)}\},\{z_j^{(t+1)}\},\{\lambda_j^{(t+1)}\} ) - \mathcal{L}_{\mathrm{ADMM}} ( \{W_k^{(t)}\}, \{\theta_k^{(t)}\} ,\{z_j^{(t)}\},\{\lambda_j^{(t)}\}) \\
    \leq  & \frac{1}{N}\sum_{j=1}^N \left(- \frac{\mu_z}{2}  + \frac{L^2}{\rho} \right) \|z_{j}^{(t+1)}-z_{j}^{(t)}\|^2 + \sum_{k=1}^M \frac{-\mu_{\theta} +\rho  L_{\theta}^{2} \sigma_{W}^2  }{2}   \left\|   \theta_k^{(t+1)}  -   \theta_k^{(t)}\right\|^2   \\
    & \quad+   \sum_{k=1}^M    \frac{-\mu_{W} + \rho  \sigma_{\theta}^2}{2}  \left \|   W_{k}^{(t+1)} - W_{k}^{(t)}   \right\|^2 
\end{align*}

Using the fact that $\mathcal{L}_{\mathrm{ADMM}}$ is  monotonically decreasing and  lower-bounded (in \cref{lemma:admm_lower_bound}) as well as the bounds for $\rho$ in \cref{eq:rho-condition},  we have  $\forall j\in [N], k \in [M]$, 
\begin{align}\label{eq:limit_z_theta_w_update}
	\lim_{t\rightarrow \infty} \left\|  z_{j}^{(t+1)}-z_{j}^{(t)} \right\|^2  \rightarrow 0, 
	\lim_{t\rightarrow \infty} \left\|   \theta_k^{(t+1)}  -   \theta_k^{(t)}\right\|^2  \rightarrow 0,
	\lim_{t\rightarrow \infty} \left\|   W_{k}^{(t+1)} - W_{k}^{(t)}   \right\|^2  \rightarrow 0.
\end{align}

By \cref{lemma:z_lambda} that $\| \lambda_j^{(t+1)} -  \lambda_j^{(t)} \| \leq L \|  z_j^{(t+1)} -  z_j^{(t)}\|$, we further obtain 
\begin{align}\label{eq:limit_lambda_update}
	\lim_{t\rightarrow \infty} \left\| \lambda_j^{(t+1)} -  \lambda_j^{(t)}  \right\|^2  \rightarrow 0, \forall j\in [N]
\end{align}
In light of the dual update step of \cref{algo:vimadmm}, \cref{eq:limit_lambda_update} implies that 
\begin{align}\label{eq:limit_constaint_update}
	\lim_{t\rightarrow \infty} \left\| \sum_{k=1}^M f(x_j^k;\theta_k^{(t+1)})  W_{k}^{(t+1)} - z_j^{(t+1)}  \right\|^2  \rightarrow 0, \forall j\in [N]
\end{align}

Using the limit points, we have 
$W_{k}^{(t+1)} \rightarrow W_{k}^{*},
\theta_k^{(t+1)}  \rightarrow \theta_k^{*} , 
z_j^{(t+1)}  \rightarrow z_j^{*} , 
\lambda_j^{(t+1)}  \rightarrow \lambda_j^{*} 
$.

Based on \cref{eq:limit_constaint_update}, 
we have 
\begin{align}
	 \sum_{k=1}^M f(x_j^k;\theta_k^{*})  W_{k}^{*} = z_j^{*} 
\end{align}

Then, we examine the optimality condition for the 
$\{W_k^{(t+1)}\}$  subproblems at iteration $t + 1$:
\begin{align} 
 0= & \beta_k   \nabla \mathcal{R}_k(W_k^{(t+1)} ) +   \frac{1}{N} \sum_{j=1}^N {\lambda_{j}^{(t)}}^\top  f(x_j^k;\theta_k^{(t)})  \nonumber \\
  & + \sum\limits_{j=1}^N \frac{\rho}{N} \left(  \sum\limits_{i\in [M] , i \neq k} f(x_j^i;\theta_i^{(t)})  {W_{i}}^{(t)} +    f(x_j^k;\theta_k^{(t)})  W_{k}^{(t+1)} - {z_j}^{(t)}\right) f(x_j^k;\theta_k^{(t)}) 
\end{align}

According to \cref{eq:limit_z_theta_w_update} and \cref{eq:limit_constaint_update}, we have 

\begin{align}
 0= & \beta_k   \nabla \mathcal{R}_k(W_k^{*} ) +   \frac{1}{N} \sum_{j=1}^N {\lambda_{j}^{*}}^\top  f(x_j^k;\theta_k^{*}) 
\end{align}

Similarly, the optimality condition for the 
$\{\theta_k^{(t+1)}\}$  subproblems at iteration $t + 1$ indicates that:
\begin{align} 
0 = & \beta_k   \nabla \mathcal{R}_k(\theta_k^{(t+1)} ) +   \frac{1}{N} \sum_{j=1}^N {\lambda_{j}^{(t)}}^\top  \nabla f(x_j^k;\theta_k^{(t+1)})  W_k^{(t+1)}  \nonumber \\
 &   + \sum\limits_{j=1}^N \frac{\rho}{N} \left(  \sum\limits_{i\in [M] , i \neq k} f(x_j^i;\theta_i^{(t)})  {W_{i}}^{(t+1)} +    f(x_j^k;\theta_k^{(t)})  W_{k}^{(t+1)} - {z_j}^{(t)}\right) \nabla f(x_j^k;\theta_k^{(t+1)})  W_k^{(t+1)} 
\end{align}
According to \cref{eq:limit_z_theta_w_update} and \cref{eq:limit_constaint_update}, we have 
\begin{align} 
0 = & \beta_k   \nabla \mathcal{R}_k(\theta_k^{*} ) +   \frac{1}{N} \sum_{j=1}^N {\lambda_{j}^{*}}^\top  \nabla f(x_j^k;\theta_k^{*})  W_k^{*}  
\end{align}

Based on the optimality condition for the 
$\{z_j^{(t+1)}\}$  subproblems at iteration $t + 1$, we have 
\begin{align}\label{eq:op_cond_z}
   0 =  \nabla \ell( z_j^{(t)}) - 
    {  \lambda_{j}^{(t-1)}}  - \rho \left(   \sum_{k=1}^M  {f(x_j^k;\theta_k^{(t)} )} W_k^{(t)} -  z_j^{(t)} \right), \forall j \in B(t)
\end{align}

Based on the strongly convexity w.r.t $z_j$  in \cref{assume:strong_convex},
there exists a subgradient $\eta^{(t)} \in \partial \ell\left(z_j^{(t)}\right)$ such that 
\begin{align}
\left \langle z - z_j^{(t)}, 	\eta^{(t)} - \left(  \lambda_j^{(t-1)} + \rho \left(   \sum_{k=1}^M  {f(x_j^k;\theta_k^{(t)} )} W_k^{(t)} -  z_j^{(t)} \right)\right)  \right \rangle \geq 0, \quad \forall z 
\end{align}
It implies that 
\begin{align}\label{eq:subgrad_conv}
 \ell \left( z; y_j \right) - \ell \left( z_j^{(t)}; y_j \right)   + \left \langle  z- z_j^{(t)},  -  \left(  \lambda_j^{(t-1)} + \rho \left(   \sum_{k=1}^M  {f(x_j^k;\theta_k^{(t)} )} W_k^{(t)} -  z_j^{(t)} \right)\right)  \right \rangle  \geq 0	, \forall z 
\end{align}
Taking the limits for \cref{eq:subgrad_conv} and using the results in \cref{eq:limit_z_theta_w_update}~\cref{eq:limit_lambda_update}~\cref{eq:limit_constaint_update},  we have

\begin{align} 
 \ell \left( z; y_j \right) - \ell \left( z_j^{*}; y_j \right)   + \left \langle  z- z_j^{*},  -  \lambda_j^{*}   \right \rangle  \geq 0	, \forall z 
\end{align}

That is: 
\begin{align*}
 \ell \left( z; y_j \right) + {\lambda_j^*}^\top \left(  \sum_{k=1}^M f(x_j^k;\theta_k^{*})  W_{k}^{*} - z \right) 	 \geq  \ell \left( z_j^*; y_j \right) + {\lambda_j^*}^\top \left(  \sum_{k=1}^M f(x_j^k;\theta_k^{*})  W_{k}^{*} - z_j^* \right) 	
\end{align*}

It implies that
\begin{align*}
z_{j}^{*} \in \arg \min_{z_j}   \ell \left( z_j; y_j \right) + {\lambda_j^*}^\top \left(  \sum_{k=1}^M f(x_j^k;\theta_k^{*})  W_{k}^{*} - z_j \right) 
\end{align*}

This completes the proof.

\end{proof}

\subsection{Privacy Guarantees} \label{app:dp}

\subsubsection{Preliminaries}
We utilize R\'enyi Differential Privacy (RDP) to perform the privacy analysis since it supports a tighter composition of privacy budget~\cite{mironov2017renyi} than the moments accounting technique~\cite{abadi2016deep} for Differential Privacy (DP). 

We start by introducing the definition of RDP as a generalization of DP, which leverages the $\alpha$-R\'enyi divergence between the output distributions of two neighboring datasets.  The definition of the neighboring dataset in this work follows the \textit{\revise{client-level}} differentially private FL framework~\cite{mcmahan2018learning}. \textit{The neighboring datasets would differ in all samples associated with a single \revise{client}, that is, one user is present or absent in the VFL global dataset.} (\cref{def:user-dp}) 

\begin{definition}(R\'enyi Differential Privacy~\cite{mironov2017renyi})\label{def:rdp}
We say that a mechanism $\mathcal{M}$ is $(\alpha, \epsilon)$-RDP with order $\alpha \in  (1, \infty)$ if for all neighboring datasets $D, D'$ 
\begin{equation}
    D_{\alpha}\left(\mathcal{M}(D) \| \mathcal{M}\left(D^{\prime}\right)\right):=\frac{1}{\alpha-1} \log \mathbb{E}_{\theta \sim \mathcal{M}\left(D^{\prime}\right)}\left[\left(\frac{\mathcal{M}(D)(\theta)}{\mathcal{M}\left(D^{\prime}\right)(\theta)}\right)^{\alpha}\right] \leq \epsilon
\end{equation}
\end{definition}

RDP guarantee can be converted to DP guarantee as follows:
\begin{theorem}(RDP to $(\epsilon, \delta)$-DP Conversion ~\cite{balle2020hypothesis})\label{thm:rdp2dp}
\footnote{This theorem is tighter than the original RDP paper~\cite{mironov2017renyi}, and it is adopted in the official implementation of the PyTorch Opacus library.}
If $f$ is an $(\alpha, \epsilon)$-RDP mechanism, it also satisfies $(\epsilon + \log \frac{\alpha -1 }{\alpha} - \frac{\log\delta + \log \alpha}{ \alpha -1}    , \delta)$-differential privacy for any $0 < \delta < 1$. 
\end{theorem}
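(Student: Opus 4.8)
The plan is to use R\'enyi Differential Privacy (RDP) as an intermediate currency, following the three-step pipeline announced in the proof sketch: (i) bound the per-round RDP cost of the output-perturbation mechanism via the Gaussian-mechanism RDP bound, (ii) accumulate the cost over the $T$ communication rounds by RDP composition, and (iii) translate the resulting client-level RDP guarantee into the stated $(\epsilon,\delta)$-DP guarantee using Theorem~\ref{thm:rdp2dp}. The central quantitative fact to pin down is that, under the client-level (zero-out) adjacency of Definition~\ref{def:user-dp}, the per-round $\ell_2$-sensitivity of the released information is exactly $C$, which is what makes the per-round RDP equal $\alpha/(2\sigma^2)$ and hence yields the coefficient $T\alpha/(2\sigma^2)$ in the final bound.

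First I would isolate what is actually released to the adversary in one round $t$. In both Algorithm~\ref{algo:vimadmm} and Algorithm~\ref{algo:vimadmmjoint}, the only client-dependent quantity leaving the clients is the clipped-and-perturbed local output matrix $\tilde{\mathcal{A}}_k^{(t)} = \mathtt{Clip}(\mathcal{A}_k^{(t)}, C) + \mathcal{N}(0,\sigma^2 C^2)$, where $\mathcal{A}_k^{(t)}$ is the batch of local embeddings (resp. logits). Every subsequent server-side quantity that is communicated back---the auxiliary variables $\{z_j\}$, the dual variables $\{\lambda_j\}$, the heads $\{W_k\}$ and the residuals $\{s_j^k\}$---is a post-processing function of $\{\tilde{\mathcal{A}}_k^{(t)}\}_k$ and the (separately protected) labels, while the clients' raw local models $\theta_k$ are updated on their own data and never transmitted. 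Thus by the post-processing invariance of RDP it suffices to bound the cost of releasing the tuple $(\tilde{\mathcal{A}}_1^{(t)},\dots,\tilde{\mathcal{A}}_M^{(t)})$.

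Next I would bound the sensitivity. Because the $M$ VFL clients hold disjoint feature sets, each block $\mathcal{A}_k^{(t)}$ is a function of client $k$'s data alone, so two client-adjacent datasets $X,X'$ (differing only in client $k$) produce identical blocks for all $i\neq k$ and differ only in block $k$. Under the zero-out adjacency convention the absent client contributes the zero matrix, and after clipping $\|\mathtt{Clip}(\mathcal{A}_k^{(t)},C)\|_F\le C$, so the change in the released concatenated matrix has Frobenius norm at most $C$; hence the $\ell_2$-sensitivity is $C$. Since the added noise is independent across blocks and across rounds, the R\'enyi divergence between the two output distributions factorizes over the blocks and only the single differing block $k$ contributes. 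Applying the Gaussian-mechanism RDP bound of~\cite{mironov2017renyi} with sensitivity $\Delta=C$ and per-coordinate noise standard deviation $s=\sigma C$ gives a per-round cost $D_\alpha \le \tfrac{\alpha \Delta^2}{2 s^2}=\tfrac{\alpha C^2}{2\sigma^2 C^2}=\tfrac{\alpha}{2\sigma^2}$, i.e. each round is client-level $(\alpha,\tfrac{\alpha}{2\sigma^2})$-RDP.

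Finally I would compose and convert. The $T$ rounds use freshly sampled independent Gaussian noise, so the RDP composition property~\cite{mironov2017renyi} adds the per-round costs, giving client-level $(\alpha,\tfrac{T\alpha}{2\sigma^2})$-RDP for every $\alpha>1$. Feeding $\epsilon_{\mathrm{RDP}}=\tfrac{T\alpha}{2\sigma^2}$ into the RDP-to-DP conversion of Theorem~\ref{thm:rdp2dp} yields client-level $\bigl(\tfrac{T\alpha}{2\sigma^2}+\log\tfrac{\alpha-1}{\alpha}-\tfrac{\log\delta+\log\alpha}{\alpha-1},\,\delta\bigr)$-DP for any $\alpha>1$ and $0<\delta<1$, which is exactly the claimed statement; the same derivation applies verbatim to Algorithm~\ref{algo:vimadmmjoint}, since only the identity of the released matrix (logits rather than embeddings) changes. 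I expect the main obstacle to be the sensitivity argument of the previous paragraph: one must justify carefully that the coefficient is $C$ and not $2C$---which hinges jointly on the zero-out neighbor notion and on the disjointness of clients' features (so that exactly one block moves)---and must verify the post-processing claim so that the server's downstream communication (duals, residuals, heads) provably incurs no additional budget.
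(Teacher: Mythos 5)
You have not proved the statement you were assigned. \cref{thm:rdp2dp} is the generic, mechanism-agnostic RDP-to-$(\epsilon,\delta)$-DP conversion of \cite{balle2020hypothesis}: any single $(\alpha,\epsilon)$-RDP mechanism $f$ is $\bigl(\epsilon+\log\tfrac{\alpha-1}{\alpha}-\tfrac{\log\delta+\log\alpha}{\alpha-1},\,\delta\bigr)$-DP. It says nothing about clipping, Gaussian noise, clients, or communication rounds. What you wrote is instead a proof of the paper's end-to-end guarantee (\cref{theo:dp_guarantee}) --- and, for what it is worth, a faithful one: your zero-out sensitivity-$C$ argument, the per-round Gaussian RDP cost $\alpha/(2\sigma^2)$, the $T$-fold composition, and the post-processing claim match the paper's \cref{lemma:l2sensi,lemma:ourgaussian} and its use of \cref{thm:rdpcomposition} essentially step for step. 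But your step (iii) explicitly \emph{invokes} \cref{thm:rdp2dp} to finish, so read as a proof of that theorem your argument is circular: the conversion is assumed, never derived. (The paper itself offers no proof either --- it imports the result by citation --- so the only acceptable proof is the underlying divergence argument, not the VFL pipeline built on top of it.)

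A genuine proof operates entirely at the level of the two output distributions $P=f(X)$ and $Q=f(X')$ with $D_\alpha(P\|Q)\le\epsilon$. Applying H\"older's inequality to $P(E)=\int \mathbf{1}_E\,(dP/dQ)\,dQ$ yields the probability-preservation bound $P(E)\le\bigl(e^{\epsilon}Q(E)\bigr)^{(\alpha-1)/\alpha}$ for every measurable $E$. It then suffices to choose $\epsilon'$ such that $(e^{\epsilon}q)^{(\alpha-1)/\alpha}\le e^{\epsilon'}q+\delta$ for all $q\in[0,1]$. Maximizing $g(q)=(e^{\epsilon}q)^{(\alpha-1)/\alpha}-e^{\epsilon'}q$ over $q$ gives the critical point $q^*=\bigl(\tfrac{\alpha-1}{\alpha}\,e^{\epsilon(\alpha-1)/\alpha-\epsilon'}\bigr)^{\alpha}$ with
\begin{equation*}
g(q^*)\;=\;e^{(\alpha-1)(\epsilon-\epsilon')}\Bigl(\tfrac{\alpha-1}{\alpha}\Bigr)^{\alpha-1}\tfrac{1}{\alpha},
\end{equation*}
and setting $g(q^*)=\delta$ and solving for $\epsilon'$ produces exactly $\epsilon'=\epsilon+\log\tfrac{\alpha-1}{\alpha}-\tfrac{\log\delta+\log\alpha}{\alpha-1}$. (The cruder case split you might be tempted by --- bounding $q^{-1/\alpha}\le e^{\epsilon/\alpha}\delta^{-1/(\alpha-1)}$ --- only recovers Mironov's weaker $\epsilon+\tfrac{\log(1/\delta)}{\alpha-1}$, which is precisely what the footnote to the statement says this theorem tightens.) None of your sensitivity, composition, or post-processing machinery plays any role in this argument.
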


Here, we highlight three key properties that are relevant to our analyses.
\begin{theorem}(RDP Composition~\cite{mironov2017renyi})\label{thm:rdpcomposition}
Let $f: \mathcal{D} \mapsto \mathcal{R}_{1} $ be $(\alpha, \epsilon_1)$-RDP and $g : \mathcal{R}_{1}  \times  \mathcal{D}  \mapsto \mathcal{R}_{2} $ be $(\alpha,  \epsilon_2)$-RDP, then the mechanism defined as $(X, Y)$, where $X \sim f(D)$ and $Y \sim g(X, D)$, satisfies $(\alpha, \epsilon_1+ \epsilon_2)$-RDP.
\end{theorem}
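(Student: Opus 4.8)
The plan is to establish the guarantee in three stages, using R\'enyi Differential Privacy (RDP) as the intermediate accounting tool exactly as the sketch indicates. First I would reduce the client-level analysis to a single client's released outputs. Under the zero-out adjacency of \cref{def:user-dp}, two neighboring datasets $X$ and $X'$ differ only in the local data $X_k$ of one client $k$. Since every client's local output is computed solely from its own features and each client adds noise to its own output independently, the released transcripts for all clients $i \neq k$ are identically distributed under $X$ and $X'$. Moreover, every quantity the server sends back (the dual variables $\{\lambda_j\}$, residual variables $\{s_j^k\}$, and heads $\{W_k\}$) is a deterministic function of the \emph{already-perturbed} client outputs together with the server-held labels; hence by post-processing immunity of RDP it contributes no additional privacy loss with respect to $X_k$. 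Therefore it suffices to bound the RDP divergence between the sequences of perturbed outputs $\{\mathcal{A}_k^{(t)}\}_{t=1}^T$ produced by the single client $k$.

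Second, I would bound the per-round RDP of one client's perturbation. Clipping enforces $\|\mathtt{Clip}(\mathcal{A}_k, C)\|_F \leq C$, so under the zero-out neighbor (where $X_k'$ is replaced by the special zero data, forcing the clipped output matrix to $0$), the $\ell_2$-sensitivity of the pre-noise output, viewed as a flattened vector, is bounded by $C$. The client then adds i.i.d.\ Gaussian noise $\mathcal{N}(0, \sigma^2 C^2)$ to each entry, i.e.\ the Gaussian mechanism with sensitivity $\Delta = C$ and noise variance $\sigma^2 C^2$. Invoking the RDP guarantee of the Gaussian mechanism~\cite{mironov2017renyi}, which states that such a mechanism is $(\alpha, \tfrac{\alpha \Delta^2}{2\sigma_n^2})$-RDP, and substituting $\Delta = C$ and $\sigma_n^2 = \sigma^2 C^2$, the factor $C^2$ cancels and each round satisfies $(\alpha, \tfrac{\alpha}{2\sigma^2})$-RDP for every order $\alpha > 1$.

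Third, I would accumulate over rounds and convert to DP. Across $T$ communication rounds the client releases $T$ independently perturbed outputs; applying the RDP composition property (\cref{thm:rdpcomposition}) $T-1$ times yields $(\alpha, \tfrac{T\alpha}{2\sigma^2})$-RDP at order $\alpha$. Finally, applying the RDP-to-DP conversion (\cref{thm:rdp2dp}) with $\epsilon = \tfrac{T\alpha}{2\sigma^2}$ gives client-level $(\tfrac{T\alpha}{2\sigma^2} + \log\tfrac{\alpha-1}{\alpha} - \tfrac{\log\delta + \log\alpha}{\alpha-1}, \delta)$-DP for any $\alpha > 1$ and $0 < \delta < 1$, matching the claim. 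The same argument applies verbatim to \cref{algo:vimadmmjoint}, since its only client-to-server transmission is the analogously clipped-and-perturbed logit matrix, with identical sensitivity bound $C$.

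The main obstacle I anticipate is rigorously justifying the reduction in the first stage: one must argue that the full multi-round, multi-client transcript observed by any honest-but-curious party or external attacker is a post-processing of client $k$'s perturbed outputs, which requires verifying that no unperturbed function of $X_k$ ever leaves the client and that the server's feedback variables depend on $X_k$ only through the already-noised embeddings. Once this reduction and the sensitivity bound from clipping are in place, the remaining steps are direct citations of the Gaussian-mechanism RDP bound, RDP composition, and the RDP-to-DP conversion.
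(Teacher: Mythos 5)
You have proved the wrong statement. The theorem in question is the adaptive RDP composition property itself (Proposition~1 of Mironov~\cite{mironov2017renyi}), which the paper does not reprove but simply quotes as background for its privacy analysis. Your proposal is instead a (blind) proof of \cref{theo:dp_guarantee}, the client-level DP guarantee of Algorithms~\ref{algo:vimadmm} and~\ref{algo:vimadmmjoint} --- and, critically, your third stage explicitly invokes \cref{thm:rdpcomposition}, i.e., the very statement under review. As a proof of the composition theorem your argument is therefore circular, and it never engages with the quantity that actually has to be bounded: the R\'enyi divergence between the joint output distributions of the adaptively composed pair $(X, Y)$ on neighboring datasets. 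Nothing in your three stages (sensitivity of the clipped embeddings, the Gaussian mechanism bound, post-processing of server feedback) bears on that question.

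The missing argument is short but different in kind. Writing $p(x,y)=p_f(x)\,p_{g(x,\cdot)}(y)$ and $q(x,y)=q_f(x)\,q_{g(x,\cdot)}(y)$ for the joint densities of $(X,Y)$ under neighboring $D, D'$, one conditions on the first output and applies the two RDP hypotheses in turn:
\begin{align*}
e^{(\alpha-1) D_{\alpha}(p\,\|\,q)} &= \int\!\!\int p(x,y)^{\alpha}\, q(x,y)^{1-\alpha}\,dy\,dx \\
&= \int p_f(x)^{\alpha}\, q_f(x)^{1-\alpha} \left( \int p_{g(x,\cdot)}(y)^{\alpha}\, q_{g(x,\cdot)}(y)^{1-\alpha}\,dy \right) dx \\
&\leq e^{(\alpha-1)\epsilon_2} \int p_f(x)^{\alpha}\, q_f(x)^{1-\alpha}\,dx \;\leq\; e^{(\alpha-1)(\epsilon_1+\epsilon_2)},
\end{align*}
where the first inequality uses that $g(x,\cdot)$ is $(\alpha,\epsilon_2)$-RDP for every fixed $x$ (this is where adaptivity is handled) and the second that $f$ is $(\alpha,\epsilon_1)$-RDP; taking logarithms and dividing by $\alpha-1$ gives the claim, per \cref{def:rdp}. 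Incidentally, read as a proof of \cref{theo:dp_guarantee}, your proposal tracks the paper's appendix argument closely --- the reduction to a single client's perturbed outputs under zero-out adjacency, the sensitivity bound $C$ and cancellation of $C^2$ in the Gaussian mechanism, $T$-fold composition, and the RDP-to-DP conversion --- but that is a different theorem from the one you were asked to prove.
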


\begin{theorem}(RDP Guarantee for Gaussian Mechanism~\cite{mironov2017renyi})\label{thm:rdpguassian}
If $f$ is a real-valued function, the Gaussian Mechanism for approximating $f$ is defined as $ \mathbf{G}_{\sigma} f(D) = f(D) +  \mathcal{N}\left(0, \sigma^{2}\right)  $. If $f$ has $\ell_2$ sensitivity 1, then the Gaussian Mechanism  $ \mathbf{G}_{\sigma} f $ satisfies $(\alpha, \alpha/(2\sigma^2))$-RDP. 
\end{theorem}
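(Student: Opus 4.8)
The plan is to prove the statement by directly computing the $\alpha$-R\'enyi divergence between the two output distributions of $\mathbf{G}_\sigma f$ on neighboring datasets and showing it is bounded by $\alpha/(2\sigma^2)$. For neighboring datasets $D, D'$, the mechanism produces the Gaussian laws $P = \mathcal{N}(f(D), \sigma^2)$ and $Q = \mathcal{N}(f(D'), \sigma^2)$, two Gaussians sharing a common variance $\sigma^2$ but with shifted means. By Definition~\ref{def:rdp}, it suffices to bound $D_\alpha(P \| Q)$ uniformly over all such neighboring pairs, so the entire proof reduces to a closed-form evaluation of the R\'enyi divergence between two equal-variance Gaussians.

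First I would write out the densities and insert them into $\int p(x)^\alpha q(x)^{1-\alpha}\,dx$. The integrand is an unnormalized Gaussian, and the key algebraic step is to complete the square in $x$ inside the exponent $\alpha(x - \mu_1)^2 + (1-\alpha)(x - \mu_2)^2$, where $\mu_1 = f(D)$ and $\mu_2 = f(D')$. This isolates a perfect-square term $(x - \bar\mu)^2$ with $\bar\mu = \alpha\mu_1 + (1-\alpha)\mu_2$, plus a residual constant that simplifies to $\alpha(1-\alpha)(\mu_1 - \mu_2)^2$. Integrating the Gaussian in $x$ then yields $\int p^\alpha q^{1-\alpha}\,dx = \exp\!\big(-\tfrac{\alpha(1-\alpha)(\mu_1-\mu_2)^2}{2\sigma^2}\big)$, and applying $\tfrac{1}{\alpha-1}\log(\cdot)$ gives the clean identity $D_\alpha(P\|Q) = \tfrac{\alpha(\mu_1-\mu_2)^2}{2\sigma^2}$.

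Next I would invoke the $\ell_2$-sensitivity hypothesis. Since $f$ has sensitivity $1$, we have $|f(D) - f(D')| \le 1$ for every neighboring pair, hence $(\mu_1 - \mu_2)^2 \le 1$ and $D_\alpha(P\|Q) \le \alpha/(2\sigma^2)$, which is exactly the claimed bound. For the vector-valued case the same argument carries through: with isotropic noise $\mathcal{N}(0, \sigma^2 I)$ the R\'enyi divergence factorizes over independent coordinates and sums to $\tfrac{\alpha \|f(D) - f(D')\|_2^2}{2\sigma^2}$, and the $\ell_2$-sensitivity bound $\|f(D) - f(D')\|_2 \le 1$ again yields $\alpha/(2\sigma^2)$.

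The only delicate step is the completing-the-square computation together with the sign bookkeeping, since $1 - \alpha < 0$ for $\alpha > 1$; one must verify that $\tfrac{1}{\alpha-1}\cdot(-\alpha(1-\alpha)) = \alpha$ so that the divergence comes out positive, as required. Everything else is a routine Gaussian integral, and no probabilistic subtlety beyond this closed form is needed.
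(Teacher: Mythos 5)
Your proposal is correct, and it is worth noting that the paper itself contains no proof of this statement: it is imported as a known result, cited directly to Mironov's RDP paper, and used as a black box in the proof of the client-level guarantee. Your derivation is precisely the canonical proof from that cited source (Proposition~7 of Mironov, 2017): writing $D_\alpha(P\|Q)=\frac{1}{\alpha-1}\log\int p^\alpha q^{1-\alpha}\,dx$, completing the square with $\bar\mu=\alpha\mu_1+(1-\alpha)\mu_2$, extracting the residual $\alpha(1-\alpha)(\mu_1-\mu_2)^2$, and obtaining the closed form $D_\alpha\bigl(\mathcal{N}(\mu_1,\sigma^2)\,\|\,\mathcal{N}(\mu_2,\sigma^2)\bigr)=\frac{\alpha(\mu_1-\mu_2)^2}{2\sigma^2}$, after which the sensitivity hypothesis $|\mu_1-\mu_2|\le 1$ gives the stated $\alpha/(2\sigma^2)$. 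Your algebra checks out, including the delicate points: the coefficient of $x^2$ in the exponent is $\alpha+(1-\alpha)=1>0$, so the Gaussian integral converges for \emph{every} $\alpha>1$ (this is special to the equal-variance case; with unequal variances the divergence is finite only for a bounded range of orders), and $\frac{1}{\alpha-1}\cdot\bigl(-\alpha(1-\alpha)\bigr)=\alpha$ does yield a nonnegative divergence. Your multivariate remark is also the version the paper actually needs: in its Lemma bounding the client-level sensitivity, the perturbed object is a clipped embedding \emph{matrix} with i.i.d.\ noise per cell, and your coordinate-wise additivity argument, $D_\alpha=\frac{\alpha\|f(D)-f(D')\|_2^2}{2\sigma^2}$ under isotropic noise, is exactly what justifies applying the scalar statement there with the Frobenius-norm sensitivity $C$.
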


\subsubsection{Proof of \cref{theo:dp_guarantee}}
We aim to protect the local training data of each client under \revise{client-level} $(\epsilon,\delta)$-DP guarantee (\cref{def:user-dp}) during VFL training. 
Let $X$ be the VFL global dataset, i.e., the union of local feature sets $X_1,\ldots, X_M$ from all $M$ clients.
We denote the output of client $k$ as a matrix $\mathcal{A}_k$, where each row is the embedding or logit of one local training sample.   With a loss of generality, we consider the embedding matrix $\mathcal{A}_k=[h_1^k, \ldots,  h_N^k]^\top$ as local output, and our analysis directly applies to the logit matrix $\mathcal{A}_k=[o_1^k, \ldots,  o_N^k]^\top$. The local outputs from all clients can be concatenated as a global embedding matrix $\mathcal{A}$:
\begin{equation}
    \mathcal{A}= [\mathcal{A}_1, \mathcal{A}_2,  \dots, \mathcal{A}_M]
\end{equation}

For our algorithms (\cref{algo:vimadmm}, \cref{algo:vimadmmjoint}) that sample a mini-batch of data with data indices $B(t)$ at each round $t$ for clients to compute their embeddings,  we view the corresponding $\mathcal{A}_k$ for each client $k$ as:
\begin{align}
    &\mathcal{A}_k^{(t)} [j]=  {h_j^k}^{(t)} \quad   \text{if} \quad   j\in B(t), \\
    &\mathcal{A}_k^{(t)} [j]=   0  \quad \quad\quad  \text{if} \quad   j\notin B(t).
\end{align}
Here we can fill in the rows of the output matrix for non-sampled indices (i.e., $j \notin B(t)$) as all zeros for privacy analysis. 

We will first analyze the privacy cost for one communication round (omitting the superscript $t$ here) and then accumulate the privacy costs over $T$ rounds via the DP composition theorem. 

We define a function $\mathcal{H}$ that outputs a global embedding matrix consisting of  clipped local embedding matrices for FL global dataset $X$ as:   
\begin{equation}
     \mathcal{H} (X)  = [\hat {\mathcal{A}_1},\ldots, \hat {\mathcal{A}_M}]  , \text{where\quad} \hat {\mathcal{A}_k}  =  \mathtt{Clip} ( \mathcal{A}_k  , C) , \forall k \in [M]. 
\end{equation} 

\begin{lemma}\label{lemma:l2sensi}
For any neighboring datasets $X, X$ differing by all samples associated by one single \revise{client}, the $\ell_2$ sensitivity for $\mathcal{H}$ is $C$.
\end{lemma}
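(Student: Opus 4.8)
The plan is to compute directly the worst-case change in the concatenated, clipped output matrix when one client's entire local dataset is zeroed out, and to show this change is bounded by the clipping constant $C$ in Frobenius norm. First I would recall that the $\ell_2$ sensitivity of $\mathcal{H}$ is
$$\Delta_2 \mathcal{H} := \max_{X \sim X'} \left\| \mathcal{H}(X) - \mathcal{H}(X') \right\|_F,$$
where the maximum ranges over all client-adjacent pairs $X, X'$ in the sense of Definition~\ref{def:user-dp}.

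Next I would fix such a neighboring pair differing in client $k$, i.e. $X = \{X_1, \ldots, X_k, \ldots, X_M\}$ and $X' = \{X_1, \ldots, X_k', \ldots, X_M\}$. Under the zero-out notion, $X_k'$ is the special zero data, so that client $k$ contributes the all-zero block, $\hat{\mathcal{A}}_k(X') = 0$. For every other client $i \neq k$ the local feature set is unchanged, hence $\hat{\mathcal{A}}_i(X) = \hat{\mathcal{A}}_i(X')$. Since $\mathcal{H}$ is the block-concatenation of the per-client clipped matrices and the squared Frobenius norm is additive across blocks, all blocks with $i \neq k$ cancel and we obtain
$$\left\| \mathcal{H}(X) - \mathcal{H}(X') \right\|_F = \left\| \hat{\mathcal{A}}_k(X) - 0 \right\|_F = \left\| \hat{\mathcal{A}}_k(X) \right\|_F.$$

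It then remains to bound $\| \hat{\mathcal{A}}_k(X) \|_F$. By the definition of the clipping operator, $\hat{\mathcal{A}}_k = \mathcal{A}_k \cdot \min(1, C/\|\mathcal{A}_k\|_F)$, so $\| \hat{\mathcal{A}}_k \|_F = \min(\|\mathcal{A}_k\|_F, C) \leq C$. Taking the maximum over all adjacent pairs yields $\Delta_2 \mathcal{H} \leq C$; the bound is attained whenever some client's un-clipped output satisfies $\|\mathcal{A}_k\|_F \geq C$, so the sensitivity equals exactly $C$.

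I expect the only genuine subtlety to be conceptual rather than computational: it lies in the correct use of the zero-out adjacency convention. It is precisely this convention that forces the differing block of $X'$ to be the zero matrix, thereby yielding sensitivity $C$ rather than the $2C$ one would obtain from a generic replace-one-client notion (where both $\hat{\mathcal{A}}_k(X)$ and $\hat{\mathcal{A}}_k(X')$ could separately have norm up to $C$, so their difference could reach $2C$). Everything else is a routine consequence of the block-additivity of the Frobenius norm together with the norm-contracting property of clipping, and the argument applies verbatim to the logit matrix $\mathcal{A}_k = [o_1^k, \ldots, o_N^k]^\top$.
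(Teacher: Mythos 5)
Your proof is correct and follows essentially the same route as the paper's: reduce to the single block of the concatenated matrix belonging to the differing client, then bound that block's contribution by the clipping threshold $C$. You are in fact more careful than the paper, whose proof simply writes $\|\hat{\mathcal{A}}_1 - \hat{\mathcal{A}}_1'\|_F \leq C$ without explanation---a step that only holds under the zero-out adjacency convention you explicitly invoke (a generic replace-one-client notion would give $2C$ by the triangle inequality), which is precisely the subtlety you flag.
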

\begin{proof}
WLOG, the neighboring dataset  $X'$ differs the first \revise{client} from $X$, i.e.,  $X' = \{{X_1}', X_2, \ldots, X_M\}$. Therefore the global embedding matrix $\mathcal{H}(X)$ and $\mathcal{H} (X')$ only differ by the clipped local embedding matrix from the first \revise{client} ($\hat {\mathcal{A}_1}$ and $\hat {\mathcal{A}_1}'$). Then, the $\ell_2$ sensitivity for $\mathcal{H}$ is bounded as follows:
\begin{equation}
  \max_{X, X'}  \| \mathcal{H}(X) - \mathcal{H} (X')\|_2 = \sqrt{ \revise{\| \hat {\mathcal{A}_1}  - \hat {\mathcal{A}_1}' \|_F^2} } \leq C .
\end{equation}
\end{proof}
Then, we define our Gaussian mechanism $\mathbf{G}_{\sigma C} \mathcal{H}$, which outputs a global matrix consisting of noise-perturbed local embedding matrices for VFL global dataset $X$:
\begin{equation}
     \mathbf{G}_{\sigma C} \mathcal{H} (X) = [ \widetilde {\mathcal{A}_1}, \ldots, \widetilde {\mathcal{A}_M} ], \text{where\quad} \widetilde {\mathcal{A}_k} =  \mathtt{Clip} (   {\mathcal{A}_k} , C) + \mathcal{N}\left(0, \sigma^{2} C^{2}\right) , \forall k \in [M]. 
\end{equation}

\begin{lemma}\label{lemma:ourgaussian}
Given the function $\mathcal{H} $ with $\ell_2$ sensitivity $C$,  Gaussian standard deviation $\sigma^2 C^2$ , 
the Gaussian mechanism with  $ \mathbf{G}_{\sigma C} \mathcal{H}  $ satisfies \revise{client-level} 
 $(\alpha, \alpha/(2\sigma^2) )$-RDP.
\end{lemma}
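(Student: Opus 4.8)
The plan is to reduce the mechanism $\mathbf{G}_{\sigma C}\mathcal{H}$ to the standard sensitivity-one Gaussian mechanism of \cref{thm:rdpguassian}, exploiting the sensitivity bound already established in \cref{lemma:l2sensi}. First I would recall that the output of $\mathcal{H}$ is the concatenated (flattened) embedding matrix, so the client-level bound $\|\mathcal{H}(X) - \mathcal{H}(X')\|_2 \le C$ from \cref{lemma:l2sensi} is exactly a Frobenius-norm (equivalently, flattened $\ell_2$) sensitivity, and the mechanism $\mathbf{G}_{\sigma C}\mathcal{H}$ adds i.i.d.\ $\mathcal{N}(0, \sigma^2 C^2)$ noise independently to every cell of this object (including the zero-filled rows for non-sampled indices $j \notin B(t)$, which contribute nothing to the sensitivity).

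Next I would rescale. Define the normalized query $g = \mathcal{H}/C$; by \cref{lemma:l2sensi} it has $\ell_2$-sensitivity exactly $1$. I would then observe the identity $\mathbf{G}_{\sigma C}\mathcal{H}(X) = C\bigl(g(X) + \mathcal{N}(0,\sigma^2)\bigr) = C \cdot \mathbf{G}_\sigma g(X)$, so $\mathbf{G}_{\sigma C}\mathcal{H}$ is the composition of the standard sensitivity-one Gaussian mechanism $\mathbf{G}_\sigma g$ with the fixed, invertible map $y \mapsto Cy$. By \cref{thm:rdpguassian}, $\mathbf{G}_\sigma g$ satisfies $(\alpha, \alpha/(2\sigma^2))$-RDP for every client-adjacent pair of neighboring datasets.

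Finally I would argue that the RDP guarantee transfers through the rescaling. Since the $\alpha$-R\'enyi divergence is invariant under any fixed invertible transformation of the output space (the Jacobian cancels in the likelihood ratio), multiplying by $C$ leaves $D_\alpha$ unchanged; equivalently one may simply invoke the post-processing property of RDP. Either route yields $D_\alpha\bigl(\mathbf{G}_{\sigma C}\mathcal{H}(X)\,\|\,\mathbf{G}_{\sigma C}\mathcal{H}(X')\bigr) \le \alpha/(2\sigma^2)$ for all client-adjacent $X, X'$, which is precisely the claimed client-level $(\alpha, \alpha/(2\sigma^2))$-RDP. As a cross-check one can instead compute directly: for two spherical Gaussians sharing covariance $\sigma^2 C^2 I$ whose means lie at $\ell_2$-distance at most $C$, the closed form $D_\alpha = \alpha\|\mu_1-\mu_2\|_2^2/(2\sigma^2 C^2) \le \alpha/(2\sigma^2)$ recovers the same bound. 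There is no substantive analytic obstacle here; the only point demanding care is the bookkeeping of the matrix-to-vector flattening and confirming that the per-cell variance $\sigma^2 C^2$ together with the Frobenius sensitivity $C$ produces the clean cancellation that makes the constant independent of $C$.
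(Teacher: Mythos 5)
Your proof is correct and follows essentially the same route as the paper: invoke the client-level sensitivity bound $C$ from \cref{lemma:l2sensi}, note the noise scale $\sigma C$ is proportional to it, and reduce to the sensitivity-one Gaussian mechanism of \cref{thm:rdpguassian}. The only difference is that you make explicit the rescaling step (defining $g = \mathcal{H}/C$ and transferring the RDP bound through the fixed map $y \mapsto Cy$) that the paper's terse proof leaves implicit, which is a welcome clarification but not a different argument.
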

\begin{proof}
The $\ell_2$ sensitivity for the function $\mathcal{H} $ is $C$ by  \cref{lemma:l2sensi}. The Gaussian standard deviation for the noise-perturbed embedding is $\sigma C$, which is proportional to the clipping constant $C$. Combining it with \cref{thm:rdpguassian} yields the conclusion that $ \mathbf{G}_{\sigma C} \mathcal{H}  $ guarantees \revise{client-level} $(\alpha, \alpha/(2\sigma^2))$-RDP.
\end{proof}

We note that the training process in the server does not access the raw data $X_k$, thus it does not increase the privacy budget and the whole algorithm in one round satisfies RDP
by the post-processing property of RDP. 
For algorithms with $T$ communication rounds, we use the RDP Composition theorem (\cref{thm:rdpcomposition}) to accumulate the privacy budge over $T$ rounds, and convert the RDP guarantee into DP guarantee (\cref{thm:rdp2dp}).

Finally, we recall \cref{theo:dp_guarantee} and provide the formal proof. 
\thmdp* 
\begin{proof}
At each communication round, according to \cref{lemma:ourgaussian}, $ \mathbf{G}_{\sigma C}\mathcal{H} $ satisfies \revise{client-level} $(\alpha, \frac{\alpha}{2\sigma^2}  )$-RDP.
Due to the  post-processing property of RDP, after server training, our DP algorithms (i.e., DP versions of Algorithm~\ref{algo:vimadmm}, ~\ref{algo:vimadmmjoint}) with one round still satisfy \revise{client-level}  $(\alpha, \epsilon' (\alpha))$-RDP. 
Based on RDP Composition theorem (\cref{thm:rdpcomposition}), our DP algorithms  with $T$ communication rounds satisfy \revise{client-level}  $(\alpha, \frac{T\alpha}{2\sigma^2} )$-RDP. 
Based on the connection between RDP and DP in \cref{thm:rdp2dp},  our DP algorithms  with $T$ communication rounds also satisfy \revise{client-level}  $
( \frac{T \alpha}{2\sigma^2} + \log \frac{\alpha -1 }{\alpha} - \frac{\log\delta + \log \alpha}{ \alpha -1}  , \delta)$-DP.

\end{proof}

\subsection{Experimental Details and Additional Results}\label{app:exp_details}

\subsubsection{Platform}
We simulate the vertical federated learning setup (1 server and N clients) on a Linux machine with AMD Ryzen Threadripper 3990X 64-Core CPUs and 4 NVIDIA GeForce RTX 3090 GPUs. 
The algorithms are implemented by PyTorch~\cite{NEURIPS2019_pytorch}. Please see the submitted code for full details. We run each experiment 3 times with different random seeds.

\subsubsection{Hyperparameters}
We detail our hyperparameter tuning protocol and the hyperparameter values here.
For all VFL training experiments, 
we use the SGD optimizer with learning rate $\eta$  for the server's model, and the SGD optimizer with momentum 0.9 and learning rate $\eta$  for client $k$'s local model. 
The regularization weight $\beta$ is set to 0.005.
The embedding dimension $d_f$ is set to 60, and batch size $b$ is set to 1024 for all datasets. 

\paragraph{Vanilla VFL training}
For Vanilla VFL training experiments, we tune learning rates by performing a grid search separately for all methods over 
$\{0.05, 0.1,0.3,0.5, 0.8 \}$  on \mnist,
$\{0.003,0.005, 0.008,0.01,0.05, 0.1\}$ on \cifar,    
$\{0.05 , 0.1,0.5\}$ on \nus, 
$\{0.0005, 0.005, 0.01, 0.05,0.1\}$ on \modelnet. 
For ADMM-based methods, we tune penalty factor $\rho$ with a search grid $\{0.5,1,2\}$  on all datasets.

\paragraph{Differentially private VFL training}
We leverage the PyTorch Differential Privacy library Opacus~\footnote{\href{https://github.com/pytorch/opacus}{https://github.com/pytorch/opacus}}  to calculate the privacy budgets $\epsilon$. In all experiments, $\delta=1e-5$.
For each privacy budget $\epsilon$,  we perform a grid search for the combination of hyperparameters (including noise scale $\sigma$,  clipping threshold $C$, and learning rate $\eta$) for all methods for a fair comparison. 
The noise scale is tuned from $\sigma$ $\{2,3, 5,8, 10,30,50,70\}$ on all datasets. 
 $C$ is tuned from  $\{0.0005, 0.001, 0.005,0.01, 0.1, 1\}$ 
 and $\eta$ is tuned from  $\{0.05, 0.3, 0.5, 1\}$ for \mnist; 
 $C$ is tuned from  $\{0.01, 0.05, 0.1, 0.5, 1 \}$
 and $\eta$ is tuned from  $\{ 0.005, 0.01, 0.05,  0.1, 0.5, 1\}$ for \cifar; 
 $C$ is tuned from  $\{0.001, 0.005, 0.01,  0.05, 0.1 \}$ 
 and $\eta$ is tuned from  $\{0.05, 0.1, 0.3, 0.5, 1\}$ for \nus; 
 $C$ is tuned from  $\{0.01, 0.05, 0.1, 0.5, 1\}$
 and $\eta$ is tuned from  $\{0.05, 0.1, 0.5\}$ for \modelnet.
We use the best hyper-parameters to start {3 runs} with different random seeds and report the average results for each method.

\paragraph{Client-level explainability}
In the experiments of \textit{client importance validation via noisy test client}, for each time, we perturb the features of all test samples at one client by adding Gaussian noise sampled from   $\mathcal{N}\left(0, {\bar \sigma}^{2}  \right) $ to its features. In order to observe the difference in test accuracy between important clients and unimportant clients,  we set $\bar \sigma$ to  10 for \mnist, 1 for \cifar and \nus, and 3 for \modelnet.

In the experiments of \textit{client denoising}, we construct one noisy client (i.e., client 7, 5, 2, 3 for \mnist, 
\cifar, \nus , \modelnet respectively) by adding Gaussian noise sampled from   $\mathcal{N}\left(0, {\widetilde \sigma}^{2}  \right) $ to all its training samples and test samples. We set $\widetilde \sigma$ to  1 for \mnist,  \nus and \modelnet, and 3 for \cifar.

\subsubsection{Additional Evaluation Results}

\paragraph{\revise{Effect of $\rho$}}
\revise{Here we report the test accuracy of \ouradmm with different penalty factor $\rho$ in \cref{fig:rho}, which show that \ouradmm is not sensitive to $\rho$ on four datasets.}
{
\renewcommand{\thesubfigure}{\alph{subfigure}}

\begin{figure*}[t]

\newlength{\rhoheightc}
\settoheight{\rhoheightc}{\includegraphics[width=.25\linewidth]{ICLRplots/acc/mnist_rho.pdf}}%

\newcommand{\rowname}[1]%
{\rotatebox{90}{\makebox[\rhoheightc][c]{\tiny #1}}}
\renewcommand{\tabcolsep}{10pt}

\centering
{

\begin{subtable}{1\linewidth}
\centering
\begin{tabular}{c@{}c@{}c@{}c@{}}
         \makecell{{\mnist}}
        & \makecell{{\cifar}}
         & \makecell{{ \nus}}  
         & \makecell{{ \modelnet}}
        \vspace{-3pt}\\
\includegraphics[height=\rhoheightc]{ICLRplots/acc/mnist_rho.pdf}&
\includegraphics[height=\rhoheightc]{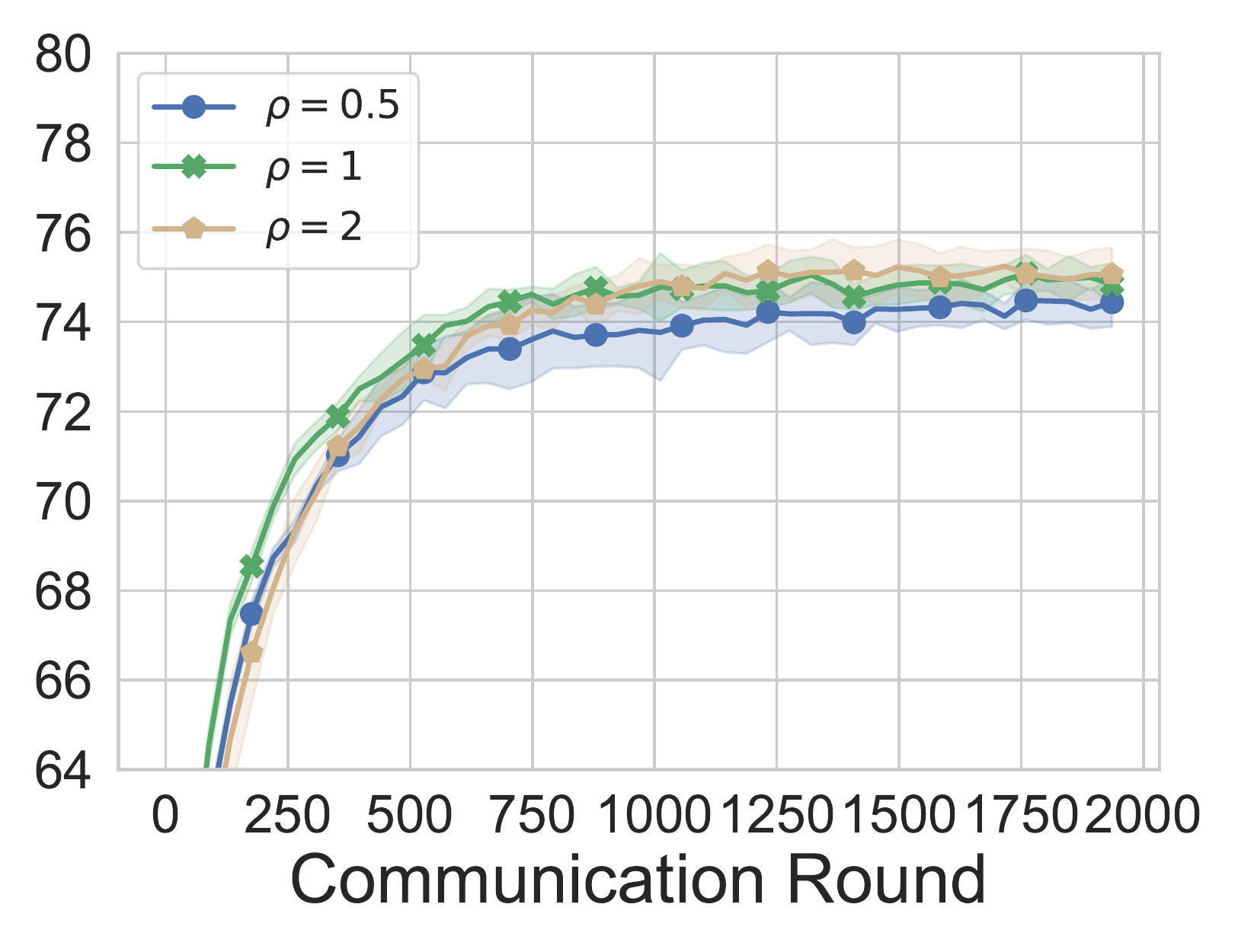}&
\includegraphics[height=\rhoheightc]{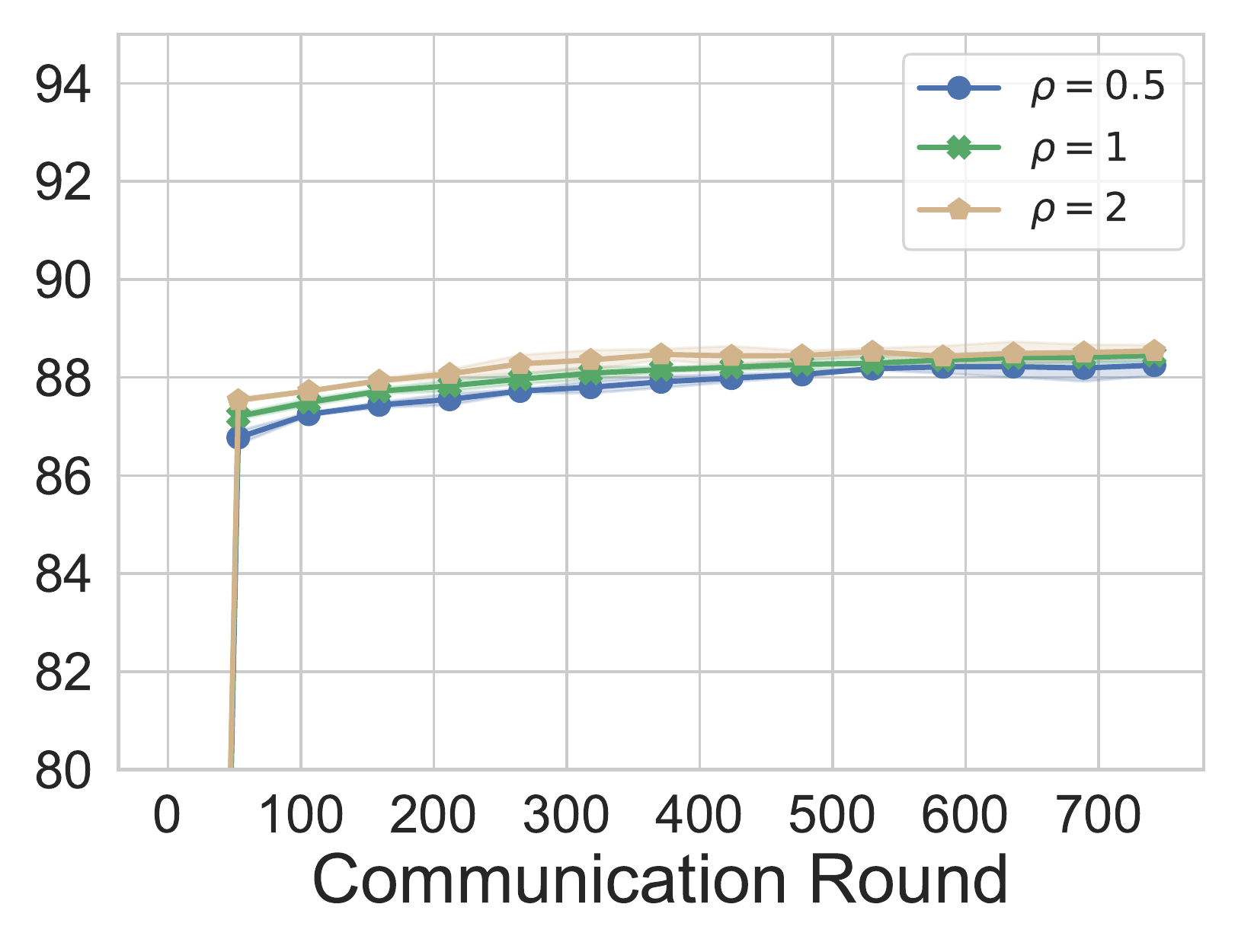}&
\includegraphics[height=\rhoheightc]{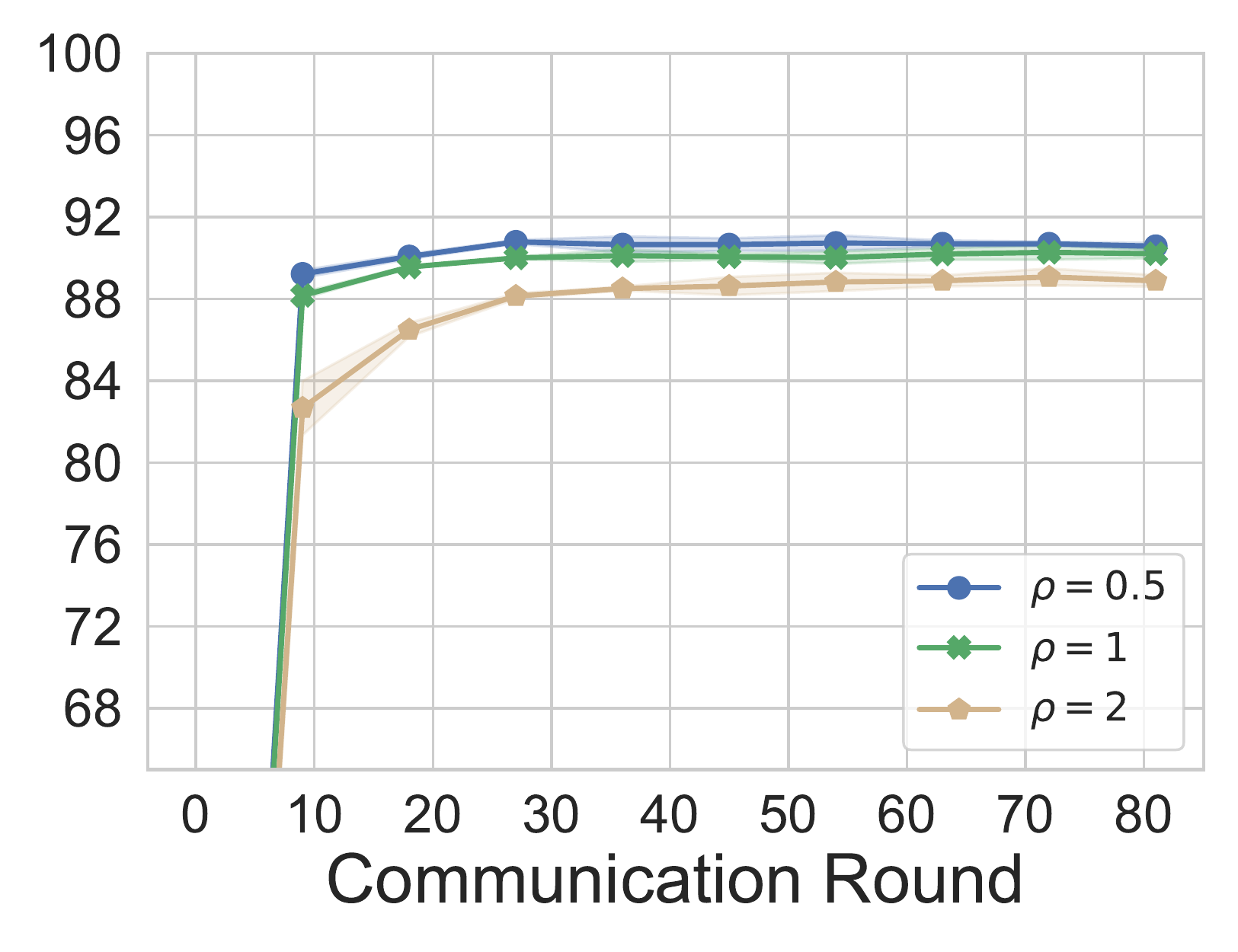}\\[-1.ex]
\end{tabular}
\end{subtable}
}
\vspace{-2mm}
\caption{Performance of \ouradmm with different penalty factor $\rho$ on four datasets. \ouradmm is not sensitive to $\rho$ from 0.5 to 2. }%
\label{fig:rho}
\vspace{-7mm}
\end{figure*}

}

\paragraph{Results for a large number of clients.}
We evaluate baselines and our methods under 100 clients on  \mnist by allowing the agents to obtain overlapped features, and the results show that our methods still outperform baselines. Specifically, we divide the features into 100 overlapped subsets for 100 clients so that each client has 14 pixels.
The results in Table~\ref{tab:100cli} show that VIM methods (i.e.,  \ouradmm, \ouradmmjoint) have higher accuracy than baselines in both w/ and w/o model splitting settings.

{
\begin{table}[h]
  \captionof{table}{Performance of Vanilla VFL when $M=100$ on \mnist}
   \label{tab:100cli}
\centering
 \resizebox{0.5\columnwidth}{!}{%
 \begin{tabular}{ccccccccccc}
\toprule
 \multicolumn{3}{c}{{W/ model splitting}}  &    \multicolumn{2}{c}{{W/o model splitting}} \\ \cmidrule(lr){1-3} \cmidrule(lr){4-5}
\vafl & \oursgd & \ouradmm & \fdml & \ouradmmjoint \\  
 95.38 & 95.45 & \textbf{95.77} &  95.85 & \textbf{95.96}\\ \bottomrule
\end{tabular}%
}
\end{table}

}

\paragraph{Results for client subsampling under \revise{client}-level DP}

We extended our study under \revise{client}-level DP by incorporating a subsampling mechanism into the \ouradmm framework to save the privacy budget. Specifically, during each communication round, the server receives the local embeddings from $p\%$ clients, corresponding to a participation rate of $p\%$. To address missing local embeddings, the server leverages historical local embeddings from other clients to complete their absent local embeddings.

We calculate the privacy budget $\epsilon_i$  for client following our \cref{theo:dp_guarantee}, where $T$  denotes the number of communication rounds that client  $i$
 uploads local embeddings, instead of the total number of communication rounds as in our original algorithm. 
 Due to the non-overlapping nature of local data among clients, the concatenated output matrix from all clients satisfies the $\max_i \epsilon_i$ \revise{client}-level DP guarantee according to DP parallel composition.

As shown in \cref{tab:mask_userdp},  there is a utility loss when $p\%=25\%,50\%$ compared to $p\%=100\%$ under DP and non-DP settings on  MNIST and CIFAR. This discrepancy demonstrates the necessity of aggregating local outputs from all clients during training to achieve optimal utility in vertical federated learning.

\begin{table}[h]
  \centering
   \caption{The utility of \ouradmm with different client subsampling ratio under \revise{client}-level DP.}
  \label{tab:mask_userdp}
\resizebox{0.5\linewidth}{!}{%
\begin{tabular}{ccccccccccccccc}\toprule
& \multicolumn{3}{c}{\mnist} & \multicolumn{3}{c}{\cifar}   \\\cmidrule(lr){2-4} \cmidrule(lr){5-7}  
& 25\%&50\%&100\% &25\%&50\%& 100\%\\\midrule
  $\epsilon=\infty$ &94.20&94.52&97.13 &65.78&66.10&75.25\\
  $\epsilon=8$ &85.98&87.22&92.35 &62.01&62.86&73.83\\
 $\epsilon=1$ &85.51&86.62&91.09&46.53&46.69&61.65
  \\\bottomrule  
\end{tabular}
}
\end{table}

\paragraph{Additional results on client denoising}
\cref{tab:denoise} presents the test accuracy of \vafl, \oursgd, and \ouradmm at different epochs (communication rounds)  on different datasets under one noisy client. Note that each epoch consists of $N/b$ communication rounds.
\cref{tab:denoise} shows that under the noisy training scenario, \ouradmm consistently outperform \oursgd  and \vafl with faster convergence and higher test accuracy, which indicates the effectiveness of \name's multiple linear heads in client denoising. 
{
\setlength{\tabcolsep}{4pt} %

\begin{table}[ht]
    \centering   
    \caption{Test accuracy under one noisy client whose training local features and test local features are perturbed by Gaussian noise.}
    \label{tab:denoise}
\begin{subtable}{\columnwidth}
    \centering
 \resizebox{\columnwidth}{!}{%
    \begin{tabular}{c
  S[table-format=2.2]
  @{\tiny${}\pm{}$}
  >{\tiny}S[table-format=2.2]<{\endcollectcell}
  S[table-format=2.2]
  @{\tiny${}\pm{}$}
  >{\tiny}S[table-format=2.2]<{\endcollectcell}
  S[table-format=2.2]
  @{\tiny${}\pm{}$}
  >{\tiny}S[table-format=2.2]<{\endcollectcell}
  S[table-format=2.2]
  @{\tiny${}\pm{}$}
  >{\tiny}S[table-format=2.2]<{\endcollectcell}
  S[table-format=2.2]
  @{\tiny${}\pm{}$}
  >{\tiny}S[table-format=2.2]<{\endcollectcell}
  S[table-format=2.2]
  @{\tiny${}\pm{}$}
  >{\tiny}S[table-format=2.2]<{\endcollectcell}
   S[table-format=2.2]
  @{\tiny${}\pm{}$}
  >{\tiny}S[table-format=2.2]<{\endcollectcell}
  S[table-format=2.2]
  @{\tiny${}\pm{}$}
  >{\tiny}S[table-format=2.2]<{\endcollectcell}
  S[table-format=2.2]
  @{\tiny${}\pm{}$}
  >{\tiny}S[table-format=2.2]<{\endcollectcell}
  S[table-format=2.2]
  @{\tiny${}\pm{}$}
  >{\tiny}S[table-format=2.2]<{\endcollectcell}
  S[table-format=2.2]
  @{\tiny${}\pm{}$}
  >{\tiny}S[table-format=2.2]<{\endcollectcell}
  S[table-format=2.2]
  @{\tiny${}\pm{}$}
  >{\tiny}S[table-format=2.2]<{\endcollectcell}
}
    \toprule
\multirow{3}{*}{Method}   & \multicolumn{24}{c}{Test accuracy @ epoch (communication round)}\\
    \cmidrule(lr){2-25}
& \multicolumn{6}{c}{\mnist}  & \multicolumn{6}{c}{\cifar} & \multicolumn{6}{c}{\nus} & \multicolumn{6}{c}{\modelnet} \\ \cmidrule(lr){2-7} \cmidrule(lr){8-13} \cmidrule(lr){14-19}  \cmidrule(lr){20-25}
& \multicolumn{2}{c}{2 (106)}  & \multicolumn{2}{c}{5 (265)}  & \multicolumn{2 }{c}{10 (530)}          
& \multicolumn{2}{c}{2 (88)}  & \multicolumn{2}{c}{5 (220)}  & \multicolumn{2 }{c}{10 (440)}          
& \multicolumn{2}{c}{2 (106)}  & \multicolumn{2}{c}{5 (265)}  & \multicolumn{2 }{c}{10 (530)}  
& \multicolumn{2}{c}{2 (18)}  & \multicolumn{2}{c}{5 (45)}  & \multicolumn{2 }{c}{10 (90)}    \\\midrule
{\footnotesize \multirow{1}{*}{\vafl}}
& 91.07 & 0.17 & 94.36 & 0.16 & 95.59 & 0.11 & 28.83 & 1.04 & 38.77 & 0.39 & 46.98 & 0.70 & 51.88 & 0.72 & 77.68 & 0.74 & 85.31 & 0.15 & 43.23 & 3.07 & 80.13 & 1.10 & 89.56 & 0.41 
\\ [0.05em]\midrule[0.05em]
{\footnotesize \multirow{1}{*}{\oursgd}}
& 95.04 & 0.14 & 96.01 & 0.03 & 96.43 & 0.08 & 42.75 & 0.13 & 50.06 & 0.18 & 55.53 & 0.37 & 85.35 & 0.24 & 86.42 & 0.24 & 87.14 & 0.29 & 77.94 & 1.00 & 88.74 & 0.07 & 89.69 & 0.42 
\\  [0.05em]\midrule[0.05em]
{\footnotesize \multirow{1}{*}{\ouradmm}}
& 96.22 & 0.07 & 96.60 & 0.04 & 96.82 & 0.07 & 67.08 & 0.43 & 70.70 & 0.34 & 71.76 & 0.14 & 86.38 & 0.20 & 87.00 & 0.27 & 87.18 & 0.14 & 90.05 & 0.38 & 90.71 & 0.31 & 90.59 & 0.05 
 \\ \bottomrule
\end{tabular}%
}
\end{subtable}
\end{table}
}

\paragraph{Reference accuracy for SOTA model and reference model in the centralized setting}

we included the comparisons in \cref{tab:reference_acc}, which shows the reference accuracy for SoTA models and a simple reference model in a centralized setting on four datasets. Specifically,
\begin{itemize}
    \item \textit{SoTA models}: These models may employ different model architectures and training methods compared to our approach. They serve as a virtual upper bound as the highest achievable accuracy on each dataset.
\item \textit{Reference model in the centralized setting}. This reference model has the same model size as one local model coupled with a server model. 
\end{itemize}

For instance, on the MNIST dataset, the latest SoTA method in a centralized setting achieves an accuracy of 99.87\%, while a basic reference model (comprising one local model followed by a server model) reaches 98.19\%. In comparison, our VFL model (consisting of M local models followed by a server model) demonstrates a comparable accuracy of 97.13\%.

Furthermore, on datasets such as NUS-WIDE and ModelNet, our VFL model even surpasses the accuracy of the reference model in a centralized setting. This is attributable to the significantly higher number of model parameters in VFL. For example, in ModelNet, we utilized four ResNet-18 feature extractors as local models for four different clients, allowing for a more nuanced understanding and representation of the data.

\begin{table}[h]
  \centering
   \caption{{Accuracy for SOTA model and reference model in the centralized setting.}}
  \label{tab:reference_acc}
\resizebox{\linewidth}{!}{%
\begin{tabular}{ccccccccccccccc}\toprule
& \mnist & \cifar & \nus (5 classes) & \modelnet (2D multi-views) \\\midrule
SOTA method in centralized setting (virtual upper bound) &   99.87~\cite{byerly2021no} & 99.50~\cite{dosovitskiy2020image} & 88.7~\cite{liu2020backdoor}& 96.6~\cite{wang2022multi}\\
Reference model  in  centralized setting (e.g., one local model followed by server model)  & 98.19
& 77.61
&87.71
&88.96\\
\ouradmm Model 
(e.g., $M$ local models followed by server model)  
&97.13
&75.25 
&88.51
&91.32
  \\\bottomrule  
\end{tabular}
}
\end{table}

\paragraph{\ouradmm on larger models}

\ouradmm can scale well to large model such as ResNet-18 as shown in the experiments on ModelNet40. Leveraging the larger models as feature extractors for clients, \ouradmm can produce higher-quality local embeddings, which are also crucial for learning accurate linear heads on the server side.  Here we also report the results of \ouradmm on CIFAR with CNN, ResNet-18, and ResNet-34, which are 
75.25\%,
81.35\%,
and 82.58\%, respectively. 
It shows that a larger model can lead to higher accuracy for \ouradmm, validating its scalability and efficiency.

\paragraph{\ouradmm on non-image tasks}

\ouradmm  can be adapted to non-image tasks, such as datasets with both text and image modalities. For example, in the NUS-WIDE dataset, which encompasses both text and image features as local datasets, \ouradmm achieves state-of-the-art results as shown in \cref{fig:vanilla_vfl}. This adaptability is due to \ouradmm's flexible design, which can handle heterogeneous input data types via different feature extractors (e.g., local models) in the clients, and then aggregate heterogeneous local embeddings via multiple linear heads in the server. We believe these results underscore \ouradmm's potential in a broader range of applications beyond image tasks.

\paragraph{\ouradmm under long-tail datasets}

Long-tail datasets are characterized by a significant imbalance, where minority classes have far fewer samples than majority ones. \textit{This horizontal imbalance is distinct from the challenges addressed by vertical federated learning, where the same sample (whether it belongs to a majority or minority class)  is vertically split across multiple clients.} This means that \textit{in vertical federated learning, minority class samples are still be evenly distributed among clients}. We conduct additional experiments on long-tail data.

We create long-tail training datasets following~\cite{vigneswaran2021feature} with an imbalance factor of 10 (i.e., the ratio of samples in the head to tail class). Specifically, for MNIST, this resulted in class sample sizes of [6000, 4645, 3596, 2784, 2156, 1669, 1292, 1000, 774, 600] for 10 classes. For CIFAR, the class sample sizes are [5000, 3871, 2997, 2320, 1796, 1391, 1077, 834, 645, 500] across the 10 classes.

We compared the \ouradmm model, which consists $M$ local models followed by a server model, with a reference model in a centralized setting. This reference model has the same model size as one local model coupled with a server model. 

We show the results in \cref{tab:longtail}, demonstrating that our \ouradmm is still effective on challenging long-tail training datasets, yielding results comparable to those of the reference model in a centralized setting. Moreover, the long-tail version of MNIST dataset does not significantly impact the accuracy of \ouradmm compared to the original MNIST dataset.

\subsection{Discussion}\label{app:discussion}

\paragraph{DP generative model in VFL}

    An alternative way to achieve DP in VFL is to locally train a DP generative model for each client, and then send the DP generative models to the server, which takes only one communication round. However, we identified several key challenges that make it less suitable for our VFL context:
\begin{itemize}
    \item \textit{Mismatch of Synthetic Partial Features Across Clients.} In VFL, there are $M$ clients holding different subsets of features for the same training samples (denoted as  $x_j^1, x_j^2, \ldots, x_j^M$ for sample $j$).  A generative model, due to its stochastic nature, would generate synthetic partial features without correspondence to a specific training sample (e.g., the parietal features generated from a local generative model would adhere to the local data distribution, but do not correspond to a particular original sample $j$).  This lack of correspondence means that the server cannot effectively concatenate the synthetic partial features into a cohesive ``global'' dataset for training. This is a limitation of the generative model in VFL.
   \item \textit{Quality Concerns Due to Partial Features.} Given that each client only has partial features (see \cref{fig:histogram} row 1 for the visualization of raw local features), a generative model trained locally (without FL) might yield lower-quality data. This is particularly problematic in cases where partial features are not informative (e.g., clients having only background pixels in image datasets like \mnist/\cifar). 
    The state-of-the-art accuracy for synthetic data in centralized learning with sample-level DP on \mnist is around 97.6\% under $\epsilon =1$ and  98.2\% under  $\epsilon =10$~\cite{hu2023sok}. Since the partial features in VFL are less informative than the full features in centralized learning, the DP genetive model in VFL would lead to lower accuracy. On the other hand, our \ouradmm DP learning algorithm already achieves a promising accuracy that is close to the state-of-the-art:  91.35\% under $\epsilon =1$ and 92.35\% under $\epsilon =8$  in VFL under \revise{client}-level DP, which is a stricter privacy notion than sample-level DP. 
    \item \textit{Scalability Issues of DP Generative Model with High-Dimensional Data.}
DP generative models often struggle with high-dimensional datasets~\cite{hu2023sok,wang2021datalens}. For instance, their performance on datasets like CIFAR10 is limited~\cite{wang2021datalens}, posing a challenge for more complex datasets like ModelNet40. Additionally, the generation of multi-modal data (e.g., text and image features in NUS-WIDE) remains an unresolved challenge. In our method, as we are not training generative models, the high dimensionality of data will not pose a significant challenge to \ouradmm.  
   \item \textit{Communication Overhead with Generative Model. }
The model size of a standard DCGAN~\cite{radford2015unsupervised} implemented in PyTorch \footnote{\href{https://pytorch.org/tutorials/beginner/dcgan_faces_tutorial.html}{https://pytorch.org/tutorials/beginner/dcgan\_faces\_tutorial.html}} is 13.65 MB. As it only takes one round for communication, the communication costs for each client would be 13.65 MB.  In comparison, \ouradmm demonstrates similar communication costs on certain datasets. For example, on the ModelNet40 dataset, \ouradmm achieves an accuracy of 89\% with a total communication cost of 11.32 MB (See \cref{fig:vanilla_vfl} and \cref{tab:communication} for details). This efficiency stems from the transmission of local embeddings and ADMM-related variables, which collectively have a smaller size than the number of parameters within a deep neural network like a DCGAN generator.  This trend becomes even more evident if we use a larger generative model than DCGAN, which is a common direction in current generative AI advancements.  Moreover, the high-quality local embeddings (e.g., from a pretrained ResNet-18 as local model) and multiple local updates at each communication round (enabled by ADMM) significantly aid convergence. Consequently, a relatively small number of communication rounds (approximately 10) is required to reach an accuracy of 89\% on ModelNet40.

We remain open to future research exploring the feasibility and optimization of the local training of DP generative models in VFL settings.

\end{itemize}

\paragraph{Linear head and client importance}

In \cref{sec:exp_explainability}, we utilize the norm of weights in the linear head, learned by the server from local embeddings, to determine the importance of the corresponding client (and its local features).
Our approach is in line with existing methods such as LIME~\cite{ribeiro2016should}, SHAP~\cite{lundberg2017unified}, and others~\cite{gohel2021explainable} that utilize model weights to determine feature importance.  In our model, we follow existing work by assuming feature independence ~\cite{ribeiro2016should,lundberg2017unified,gohel2021explainable} to simplify the interpretation of weights in terms of feature importance.

\paragraph{Challenges of ADMM algorithm design in VFL}
There are several key challenges of designing ADMM algorithm in VFL for distributed optimization: 
\begin{itemize}
    \item   how to ensure the consensus among clients and form it as a constrained optimization problem (e.g., from Eq.~\ref{eq:multihead-obj} to Eq.~\ref{eq:admm-obj}).
    \item  how to decompose the optimization problem into small sub-problems that can be solved in parallel by ADMM.
\end{itemize}

For the first challenge, although ADMM is flexible to introduce auxiliary variables and thus formulate a constrained optimization problem in HFL, it raises new challenges in VFL.
For example, the ADMM-based methods in HFL~\cite{elgabli2020gadmm, elgabli2020fgadmm, huang2019dp, yue2021inexact} usually use the global model as the auxiliary variable and enforce the consistency between the global model and each local model. However, VFL communicates embeddings, and it is not feasible to enforce local embeddings from different clients to be the same as they provide unique information from different aspects.
Therefore, in this paper, we introduce the auxiliary variable $z_j$ for each sample $j$ and construct the constraint between $z_j$ and server's output $ \sum_{k=1}^M h_{j}^k W_{k}$  (i.e., the logits), which enables the optimization for each $W_{k}$  by ADMM.

For the second challenge, we propose the bi-level optimization for server's model and clients' models to train DNNs for VFL with model splitting, while the existing ADMM-based method in VFL~\cite{hu2019learning} only considers logistic regression with linear models in client-side, which does not apply to DNNs.
The initial attempt we made is to decompose the optimization for server's linear heads by ADMM while still using chain rule of SGD to update local models, which does not exhibit much superiority over pure SGD-based methods. Later, we decompose the optimization for both server's linear heads and local models by ADMM, leading to our current algorithm \ouradmm that enables multiple local updates for clients at each communication round and achieves significantly better performance, as we show in Sec.~\ref{sec:evaluation}.

\paragraph{Limitations}
Directly deploying VFL algorithms without stopping criteria or regularization techniques may lead to the over-fitting phenomenon, as in many other algorithms. 
Based on our experiments, we find that over-fitting is a common problem of VFL algorithms due to a large number of model parameters from all clients in the whole VFL system. Compared to centralized learning or horizontal FL, the prediction for one data sample in VFL involves $M$ times model parameters, which corresponds to $M$ partitions of input features. 
To prevent over-fitting, we use regularizers to constrain the complexity of models and  adopt standard stopping criteria, i.e., stop training when the model converges or the validation accuracy starts to drop more than $2\%$.

\end{document}